\newtheorem{theorem}{\bf Theorem}
\newtheorem{definition}{\bf Definition}
\newtheorem{assumption}{\bf Assumption}
\newtheorem{lemma}{\bf Lemma}
\newtheorem{proposition}{\bf Proposition}
\newtheorem{corollary}{\bf Corollary}
\title{Debiasing Federated Learning with \\ Correlated Client Participation}
\author[1]{Zhenyu Sun}
\author[1]{Ziyang Zhang}
\author[2]{Zheng Xu}
\author[3]{Gauri Joshi}
\author[3]{Pranay Sharma}
\author[1,4]{Ermin Wei}
\affil[1]{ECE, Northwestern University}
\affil[2]{Google Research}
\affil[3]{ECE, Carnegie Mellon University}
\affil[4]{IEMS, Northwestern University}
\date{}
\begin{document}
\maketitle

\begin{abstract}
In cross-device federated learning (FL) with millions of mobile clients, only a small subset of clients participate in training in every communication round, and Federated Averaging (FedAvg) is the most popular algorithm in practice.  Existing analyses of FedAvg usually assume the participating clients are independently sampled in each round from a uniform distribution, which does not reflect real-world scenarios. This paper introduces a theoretical framework that models client participation in FL as a Markov chain to study optimization convergence when clients have non-uniform and correlated participation across rounds. 
We apply this framework to analyze a more general and practical pattern: every client must wait a minimum number of $R$ rounds (minimum separation) before re-participating. We theoretically prove and empirically observe that increasing minimum separation reduces the bias induced by intrinsic non-uniformity of client availability in cross-device FL systems. 
Furthermore, we develop an effective debiasing algorithm for FedAvg that provably converges to the unbiased optimal solution under arbitrary minimum separation and unknown client availability distribution. 
\end{abstract}

\section{Introduction}

The massive amounts of data generated on edge devices such as cellphones or sensors offers an opportunity to train machine learning (ML) models for various applications. However, communication and privacy constraints of edge devices preclude the transfer of raw data to the cloud. Federated learning (FL) \cite{mcmahan2017communication, kairouz2019advances,li2020federated,yang2019federated} has emerged as a powerful framework to operate within these constraints by keeping decentralized data on the edge devices and instead moving model training to the edge. Federated model training operates in communication rounds. In each round, the current model is sent by the central server to edge clients, which perform model updates using their own local data, and the resulting models are then averaged by the central server. A typical cross-device FL framework consists of millions of intermittently connected edge clients, in each round only a small subset of them participate in training \cite{bonawitz2019towards}. The subset of participating clients is affected by devices' intrinsic properties such as battery status and network connectivity,  and also system induced constraints for efficiency and privacy. 
In this paper, we study the effect of such client participation patterns on convergence of federated training.

The federated averaging (FedAvg) algorithm and its variants are widely used in practice~\cite{kairouz2019advances,wang2021field,hard2018federated,xu2023gboard}, and the convergence has been extensively analyzed in literature \cite{li2019on,woodworth2020minibatch,wang2022unreasonable,karimireddy2019scaffold,wang2018cooperative,wang2020tackling}. However, most works assume uniform client participation which ensures that the model update applied to the global model is an unbiased estimate of the model update in the full client participation setting. This enables convergence results for the full-participation setting to be extended to the partial participation setting resulting in an additional variance term appearing in the convergence bound \cite{jhunjhunwala2022fedvarp, karimireddy2019scaffold,wang2020tackling}. A generalization of the uniform client participation model is to consider that each client has an intrinsic availability probability $p_i$ that is either known or unknown to the central server. The set of participating clients is chosen according to this probability. Such non-uniform client participation introduces a bias in the model updates received by the server, with more frequently participating clients dominating the average update. To counter the bias, the central server can normalize the updates by the corresponding availability probabilities \cite{wang2022arbicli,cho2022client} or their estimates \cite{wang2023lightweight,ribero2022federated}. We consider the setting of unknown client availability and analyze the convergence.

Both the uniform and non-uniform client participation models described above assume that client participation follows a Bernoulli process that is independent across clients and rounds. This assumption fails to capture practical settings where the client participation are correlated across rounds due to memory or time-dependence constraints. In cross-device FL systems, a device can only be available for training when it is plugged in for charging, connected to unmetered network and not being actively used by the owner \cite{hard2018federated,paulik2021federated,huba2022papaya}. 
These criterion, which typically occurs during the night of the devices' local time, not only results in the client availability probability for non-uniform client participation, but also correlated client participation of a periodic pattern due to user preference and time zone \cite{kairouz2019advances, eichner2019semi, zhu2021diurnal}. 
More recently, a new criteria is introduced on devices in a FL system to impose a minimum separation constraint on successive participation instances of a client \cite{mcmahan2022federated,xu2023gboard}. Specifically, once a client participates in training, it cannot become available to participate for at least $R$ more rounds ($R$ specified by the central aggregating server). 
The minimum separation is introduced to effectively combine differential privacy (DP) and FL~\cite{kairouz2021practical,choquette2024amplified} as advanced privacy-preserving methods, and quickly becomes the default criterion in many FL applications~\cite{xu2023gboard,gboard_blogpost}. 
The client participation across rounds are correlated under the minimum separation criterion, and the extreme case of very large $R$ will force cyclic client participation as studied in~\cite{cho2023convergence, malinovsky2023federated}. However, setting $R$ to be the exact value for cyclic client participation can be challenging and may cause system slowdown, and these recent work did not study non-uniform client participation or the large spectrum of minimum separation $R$ in practice. 
Other existing convergence analyses of federated training with generalized client participation \cite{wang2022arbicli, rodio2023federated, xiang2024efficient} do not fully explain the effect of such correlated client participation patterns, calling for new theoretical advances. We provide further comparison of our work with related literature in Appendix \ref{apx_related-work}.

In this paper we bridge the gap of algorithms in practical FL system and the theoretical guarantees on their convergence with correlated client participation and unknown client availability. Our paper makes the following key contributions:
\begin{enumerate}
    \item To the best of our knowledge we are the first to analyze the convergence of FedAvg with a minimum separation constraint on successive participation instances of each client, which is a general setting widely used in practical FL systems. We show that such correlated participation patterns can be captured by a Markov chain model.
    \item We show that as the minimum separation $R$ increases, the effective client participation probabilities become more uniform and reduces the asymptotic bias in the solution attained by the FedAvg algorithm.
    \item We propose a debiased FedAvg algorithm that estimates the unknown client participation probabilities and incorporates them in the local updates. We prove that this algorithm achieves an unbiased solution that is consistent with the global FL objective under arbitrary minimum separation $R$. 
\end{enumerate}

\textbf{Notations:}
For any positive integer $N$, we denote $[N] = \{1,\dots,N \}$. Let $\Vert \cdot \Vert$, $\Vert \cdot \Vert_1$ and $\Vert \cdot \Vert_{\infty}$ denote $l_2$-norm, $l_1$-norm and $l_{\infty}$-norm, respectively. For an ordered sequence $\{ i_1,\dots,i_k \}$, it is represented by $(i_1,\dots,i_k)$ and we use the same notation for a vector when the context causes no confusion. Unless otherwise specified, $\mathbb{E}(\cdot)$ means the total expectation taken on all randomness. We use $\mathbf{c}$ to denote the vector where all entries are $c$. The $d$-dimensional Euclidean space is denoted by $\mathbb{R}^d$, and $\mathbb{R}_{+}^d$ is the space formed vectors where every entry is strictly positive.

\section{Problem formulation}    \label{sec_PF}
We consider the federated learning setting where $N$ clients cooperate to minimize the following global objective:
\begin{equation}    \label{eq_obj}
    \min_x ~~ F(x) := \frac{1}{N} \sum_{i=1}^N f_i(x)
\end{equation}
where $f_i$ is the local objective function of client $i$. We aim to solve problem \eqref{eq_obj} in the federated learning setting, i.e., the system implements the some federated learning algorithm which operates in rounds. In each round, a subset of the clients participate in training, and each of the clients in the subset performs multiple local updates based on the local gradients and then communicates with the server. 

\paragraph{Non-uniform and correlated client participation.} 
In this paper, we consider the scenario where each client requires some resting periods between participation and hence the participation pattern is correlated over time. Specifically, once participating in the system, an client has to wait as least $R$ rounds until its next participation, where $R$ is called {\it the  minimum separation}. In other words, suppose client $i$'s last participation is in round $t_i$. It may join again at any round $t$ with $t \ge t_i + 1 + R$ and not before then. Moreover, when a client is available to be sampled, instead of assuming uniform sampling, we consider that each client is associated with some \emph{unknown} strictly positive scalar $p_i > 0$ to characterize its intrinsic willingness to be sampled at every round. Without loss of generality, we assume $\sum_{i=1}^N p_i = 1$ and hence refer to $p_i$ as {\it the availability probability} of client $i$. Therefore, the client participation pattern is as follows: at each communication round, client $i$ is sampled to participate in the training process with probability proportional to $p_i$ if it has waited for $R$ rounds after its last participation; otherwise client $i$ cannot be sampled.

The above setting encompasses many of those in existing literature as special cases. For instance, note that $R=0$ means each client is sampled at every round with probability $p_i$ independently, which is consistent with \cite{wang2023lightweight}. And the cyclic participation \cite{cho2023convergence} corresponds to the case $R=\frac{N}{B}-1$ where $B$ number of clients are sampled in each round , assuming the total number of clients in the FL population $N$ is divisible by $B$.  We investigate the potential bias introduced by the non-uniform and correlated client participation on federated algorithm performance and propose debiasing scheme to mitigate it.

\section{Markov chain model and its properties} \label{sec_MC-model}
In this section, we propose a Markov chain model to capture the correlated participation scenario described above. Intuitively, the fact that every client cannot be sampled again within $R$ rounds motivates us to maintain a memory window with length $R$ to track which clients have not waited for $R$ rounds. In other words, clients that are possible to be sampled in the current round only depend on which clients appearing in the memory window. This calls for a Markov chain with $R$-memory, also known as  $R$-order Markov chain, defined as below.

\begin{definition}  \label{def_HO-MC}
    Let $\{ X_t \}_{t=0}^{\infty}$ be a stochastic process where $X_t \in \mathcal{X}, \forall t \ge 0$. It is said to be an $R$-order Markov chain if
    \begin{equation}
        P(X_t \mid X_{t-1},X_{t-2},\dots, X_0) = P(X_t \mid X_{t-1},\dots, X_{t-R}), ~ \forall t\ge R. \nonumber
    \end{equation}
    $\mathcal{X}$ is called the state space.
\end{definition}
If $R = 1$ it reduces to conventional Markov chain; if $R=0$, then the clients can be sampled at each round with probability $p_i$, independent of the history. In a conventional Markov chain (with $R=1$) with finite state space $\mathcal{X}$, we can use the  transition probability matrix $P$ to represent the Markov chain, where the $(i,j)$-th entry of $P$ is $[P]_{i,j} = P(X_t = j \mid X_{t-1} = i)$, i.e., the probability of transitioning from state $i$ to state $j$.

Recall that each client $i$ is associated with a strictly positive availability probability $p_i > 0, \forall i \in [N]$. 
At each round $t$, the server samples a size-$B$ subset of clients $S_t$, where $|S_t|=B$, with probability for each client proportional to $p_i$ to join the training system. Note that only clients that have waited for $R$ rounds are available. In other words, set $S_t$ is sampled with probability proportional to $\sum_{i \in S_t} p_i$ from all subsets of size $B$ formed by the available clients. We assume $N = M B$ for some $M > 0$ and note that the minimum separation $R$ ranges from $0$ to $M-1$, where $R=M-1$ corresponds to a cyclic participation pattern where subsets of clients participate in training in a fixed order.\footnote{Any $R>M-1$ would resulting in periods with insufficient available clients. We do not consider those cases here.}

Denote $\mathcal{X}$ as the collection of all possible 
ordered subsets of $[N]$ with exactly $B$ elements. Then, $|\mathcal{X}| = \sigma(N,B)$ where $\sigma(N,B) = \frac{N!}{(N-B)!}$ represents the total number of $B$-permutations of $[N]$. Considering the stochastic process $\{ X_t \}_{t=0}^{\infty}$ where $X_t \in \mathcal{X}$, the participation pattern in Section \ref{sec_PF} can be precisely described by an $R$-order Markov chain defined in Definition \ref{def_HO-MC}. Formally,
\begin{equation}    \label{eq_MC}
    P(X_t = \mathcal{I}_0 \mid X_{t-1}=\mathcal{I}_1, X_{t-2}=\mathcal{I}_2, \dots, X_0=\mathcal{I}_{t}) = P(X_t = \mathcal{I}_0 \mid X_{t-1}=\mathcal{I}_1, \dots, X_{t-R}=\mathcal{I}_R)
\end{equation}
where each state $\mathcal{I}_k \in \mathcal{X}$ represents which ordered subset of size $B$ has been sampled at round $k$. For example, suppose clients $1$ to $B$ are sampled during the current round. $(1,2,\dots,B)$ and $(2,1,3,\dots,B)$ are two different states, although the probability of these two states to appear is the same. The reason we consider this ordered case is that it allows us to cleanly define the probability of client $i$ to be sampled (which is the marginal distribution of $P(X_t)$) by noting that $P(i \text{ to be sampled at round } t) = \sum_{i_2,\dots,i_B} P(X_t = (i, i_2,\dots,i_B))$. Here we calculate the probability of client $i$ appearing as the first element in the ordered set $X_t$. The probability of $i$ being sampled in any position would need an additional scaling factor of $B$. Since the scaling factor $B$ is the same for all clients and only the relative frequency across clients contribute towards any bias effect, ignoring this factor of $B$ would not affect the debiasing calculation.

The above high-order Markov chain \eqref{eq_MC} has some nice properties as summarized below.
\begin{proposition} \label{prop_MC-properties}
    The $R$-th order Markov chain \eqref{eq_MC} maintains the following properties:
    \item[(1).] The ordered sequence $(\mathcal{I}_0, \mathcal{I}_1,\dots, \mathcal{I}_R)$ is non-repeated, meaning $\mathcal{I}_l \cap \mathcal{I}_{k} = \emptyset, \forall l\ne k$.
    \item[(2).] For any non-repeated $(\mathcal{I}_0, \dots, \mathcal{I}_R)$, \begin{equation}
        P(X_t=\mathcal{I}_0 \mid X_{t-1}=\mathcal{I}_1, \dots, X_{t-R}=\mathcal{I}_R) = \frac{p_{\mathcal{I}_0}}{\sum_{\mathcal{J} \in S_{\mathcal{I}_{1:R}}^c} p_{\mathcal{J}} } =: p_{(\mathcal{I}_1,\dots,\mathcal{I}_R) \to \mathcal{I}_0} .
    \end{equation}
    Otherwise $P(X_t=\mathcal{I}_0 \mid X_{t-1}=\mathcal{I}_1, \dots, X_{t-R}=\mathcal{I}_R) = 0$. Since $\mathcal{I}_k$ is a set with $B$ unique elements, we define $p_{\mathcal{I}_k} := \sum_{e \in \mathcal{I}_k} p_e, \forall \mathcal{I}_k$. $S_{\mathcal{I}_{1:R}}^c$ is the collection containing all $B$-permutations of $[N] \setminus \bigcup_{k=1}^R \mathcal{I}_k$.

    \item[(3).] For $t \ge R-1$, define $Y_t = (X_t, \dots, X_{t-R+1}) \in \mathbb{R}^R$. Then $\{ Y_t \}_{t=R-1}^{\infty}$ is a conventional Markov chain with its cardinality of the state space being $d(M,R)$, where $d(M,R) = \prod_{k=0}^{R-1} \sigma(B(M-k), B)$. Moreover its transition probability is 
    \begin{equation}    \label{eq_aug-MC}
        P(Y_t = (\mathcal{I}_0, \mathcal{J}_1, \dots, \mathcal{J}_{R-1}) \vert Y_{t-1}=(\mathcal{I}_1, \dots, \mathcal{I}_R)) = \left\{ 
            \begin{array}{ccc}
                p_{(\mathcal{I}_1,\dots,\mathcal{I}_R)\to \mathcal{I}_0} & , & \mathcal{J}_k = \mathcal{I}_k, k\in [R-1] \\
                0 & , & \mathrm{otherwise}
            \end{array}
        \right.
    \end{equation}
    for any non-repeated $(\mathcal{I}_0, \dots, \mathcal{I}_R)$.
    \item[(4).] Define vector $u_{(\mathcal{I}_1,\dots, \mathcal{I}_R)}\in \mathbb{R}^{d(M,R)}$ with $(\mathcal{I}_0,\mathcal{I}_1,\dots,\mathcal{I}_{R-1})$-th entry as $ P(Y_t = (\mathcal{I}_0,\mathcal{I}_1,\dots,\mathcal{I}_{R-1}) \mid Y_{t-1} = (\mathcal{I}_1, \dots, \mathcal{I}_R))$. Then, $u_{(\mathcal{I}_1,\dots, \mathcal{I}_R)} \in \mathbb{R}_{+}^{\sigma(B(M-R),B)} \subset \mathbb{R}^{d(M,R)}$ and  $u_{(\mathcal{I}_1,\dots,\mathcal{I}_R)}[(\mathcal{I}_0,\dots,\mathcal{I}_{R-1})] = p_{\mathcal{I}_0} (\sum_{\mathcal{J} \in S_{\mathcal{I}_{1:R}}^c } p_\mathcal{J})^{-1} > 0, \forall \mathcal{I}_0 \in S_{\mathcal{I}_{1:R}}^c$.

    \item[(5).] Denote $v_{(\mathcal{J}_0,\dots, \mathcal{J}_{R-1})} \in \mathbb{R}^{d(M,R)}$ with $(\mathcal{J}_1,\mathcal{J}_2,\dots,\mathcal{J}_{R})$-th entry as
    $ P(Y_t = (\mathcal{J}_0,\dots,\mathcal{J}_{R-1}) \mid Y_{t-1} = (\mathcal{J}_1,\mathcal{J}_2,\dots,\mathcal{J}_{R}) ) $ Then, $v_{(\mathcal{J}_1,\dots, \mathcal{J}_R)} \in \mathbb{R}_{+}^{\sigma(B(M-R), B)}$ and  $v_{(\mathcal{J}_0,\dots,\mathcal{J}_{R-1})}[(\mathcal{J}_1,\dots,\mathcal{J}_{R})] = p_{\mathcal{J}_0} (\sum_{\mathcal{J} \in S_{\mathcal{J}_{1:R}}^c } p_\mathcal{J})^{-1} > 0$ for any $\mathcal{J}_R \in S_{\mathcal{J}_{0:R-1}}^c$. 
\end{proposition}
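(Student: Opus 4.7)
My plan is to prove the five claims in order, with each one either a direct unpacking of the sampling rule or an application of the standard construction that lifts an $R$-order Markov chain to a conventional one. Part (1) is immediate from the minimum separation rule: if client $i$ appears in $\mathcal{I}_l$, it was sampled at round $t-l$ and cannot be available again until round $t-l+R+1$; since $\mathcal{I}_0,\ldots,\mathcal{I}_R$ correspond to $R+1$ consecutive rounds, no client can lie in two of these sets, so they are pairwise disjoint. Part (2) is then the sampling rule applied conditionally: the available clients at round $t$ are exactly $[N]\setminus\bigcup_{k=1}^R \mathcal{I}_k$, a set of size $B(M-R)$, and the subset $\mathcal{I}_0$ is drawn with probability proportional to $p_{\mathcal{I}_0}$ over all $B$-permutations of the available clients, giving the normalizer $\sum_{\mathcal{J}\in S_{\mathcal{I}_{1:R}}^c}p_\mathcal{J}$; on a repeated tuple Part (1) forbids the draw, so the probability is zero.

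For Part (3), I would first verify the first-order Markov property of $\{Y_t\}$: the last $R-1$ coordinates of $Y_t$ coincide with the first $R-1$ coordinates of $Y_{t-1}$ and are therefore deterministic given $Y_{t-1}$, while the new coordinate $X_t$ has, by \eqref{eq_MC}, a conditional distribution depending only on $X_{t-1},\ldots,X_{t-R}=Y_{t-1}$. The transition formula \eqref{eq_aug-MC} is then a rewriting of Part (2). For the cardinality: by Part (1) a valid state $(\mathcal{I}_1,\ldots,\mathcal{I}_R)$ is a non-repeated $R$-tuple of $B$-permutations, and choosing them one coordinate at a time gives $\sigma(BM,B)$ options for $\mathcal{I}_1$, then $\sigma(B(M-1),B)$ for $\mathcal{I}_2$, and so on down to $\sigma(B(M-R+1),B)$ for $\mathcal{I}_R$, yielding the product $d(M,R)=\prod_{k=0}^{R-1}\sigma(B(M-k),B)$.

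Parts (4) and (5) simply read off the sparsity of the outgoing and incoming rows of the lifted transition kernel \eqref{eq_aug-MC}. For (4), fix a state $(\mathcal{I}_1,\ldots,\mathcal{I}_R)$; the entry at $(\mathcal{I}_0,\mathcal{J}_1,\ldots,\mathcal{J}_{R-1})$ is nonzero only when $\mathcal{J}_k=\mathcal{I}_k$ for every $k\in[R-1]$ and $\mathcal{I}_0\in S_{\mathcal{I}_{1:R}}^c$, and on this support it equals $p_{\mathcal{I}_0}/\sum_{\mathcal{J}\in S_{\mathcal{I}_{1:R}}^c}p_\mathcal{J}$, which is strictly positive since all $p_i>0$. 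The support size is $|S_{\mathcal{I}_{1:R}}^c|=\sigma(B(M-R),B)$, matching the claim. Part (5) is the mirror argument: fix $(\mathcal{J}_0,\ldots,\mathcal{J}_{R-1})$ and apply \eqref{eq_aug-MC} in reverse; the only free coordinate of a predecessor is $\mathcal{J}_R\in S_{\mathcal{J}_{0:R-1}}^c$, of which there are $\sigma(B(M-R),B)$ choices, each giving the same strictly positive expression.

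The only delicate step is the cardinality count for the lifted state space in Part (3); one has to confirm that each non-repeated $R$-tuple of $B$-permutations is counted exactly once by the sequential construction and that the product telescopes into $d(M,R)$. Everything else is bookkeeping once Part (1) rules out self-intersections and the sampling rule is applied conditionally.
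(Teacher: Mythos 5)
Your proposal is correct and follows exactly the route the paper takes (the paper offers no separate proof of this proposition, treating all five items as direct consequences of the minimum-separation sampling rule and the standard lifting of an $R$-order chain to a first-order one, which is precisely what you verify, including the telescoping count $\prod_{k=0}^{R-1}\sigma(B(M-k),B)$ for the lifted state space). One tiny wording caveat in Part (5): the predecessor entries are not literally ``the same'' expression, since the normalizer $\sum_{\mathcal{J}\in S_{\mathcal{J}_{1:R}}^c}p_{\mathcal{J}}$ depends on $\mathcal{J}_R$; they merely share the same form and are all strictly positive, which is all the claim requires.
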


Properties (1),(2) essentially state that clients to be sampled in the current round cannot be those who have not waited for $R$ rounds, which establish the equivalence of our Markov-chain modeling \eqref{eq_MC} and the participation pattern in Section \ref{sec_PF}. Property (3) means that we can augment our state space by taking into consideration of the history with length $R$ to formulate an equivalent Markov chain $\{Y_t\}_{t=R}^{\infty}$ with order $1$. The last two properties explicitly shows what entries are for each row and column of the transition probability matrix of the new Markov chain $\{ Y_t \}_{t=R}^{\infty}$. Also since there are only $\sigma(B(M-R),B) \ll d(M,R)$ non-zero entries in every row and column, the transition matrix is sparse.

A main benefit of this Markov-chain modeling is that it allows us to look into the probability of each client to be sampled as $t$ goes on. Specifically, given any $R$, denote $P_R \in \mathbb{R}^{d(M,R) \times d(M,R)}$ as the transition probability matrix of the Markov chain $\{ Y_t \}_{t=R}^{\infty}$ where its entry is given by \eqref{eq_aug-MC}. Let $\phi_R(t) \in \mathbb{R}^{d(M,R)}$ be the state distribution at round $t$ of the Markov chain $\{ Y_t \}_{t=R}^{\infty}$ and $\eta_R(t) \in \mathbb{R}^N$ be the distribution of clients to be sampled at round $t$. We have the following evolution of distributions with respect to $t$:
\begin{align}   \label{eq_evo-MC}
    \eta_R(t) = Q_R^T \phi_R(t), ~~
    \phi_R(t+1) &= P_R^T \phi_R(t)
\end{align}
for any initial distribution $\eta_R(0)$ and corresponding $\phi_R(0)$ such that $\eta_R(0) = Q_R^T \phi_R(0)$, where $Q_R = Q_{R,1} Q_{R,2}$ and $Q_{R,1} \in \mathbb{R}^{d(M,R) \times \sigma(N,B)}$ is defined by 
\begin{equation}
    [Q_{R,1}]_{(\mathcal{I}_1, \dots, \mathcal{I}_R), \mathcal{J}} = \left\{
        \begin{array}{ccc}
            p_{(\mathcal{I}_1,\dots,\mathcal{I}_R) \to \mathcal{J}} & , & \{\mathcal{J}, \mathcal{I}_1,\dots, \mathcal{I}_R \} ~ \text{non-repeated}  \\
            0 &, & \mathrm{otherwise} .
        \end{array}
    \right. \nonumber
\end{equation}
and $Q_{R,2} \in \mathbb{R}^{\sigma(N,B) \times N}$ is defined by
\begin{equation}
    [Q_{R,2}]_{\mathcal{J},j} = \left\{
        \begin{array}{ccc}
            1 & , & \mathcal{J} = (j,*)  \\
            0 &, & \mathrm{otherwise},
        \end{array}
    \right. \nonumber
\end{equation}
where $\mathcal{J}=(j,*)$ denotes that the first entry of $\mathcal{I}$ is $j$. We are particularly interested in the distribution of $\eta_R(t)$ as $t \to \infty$ because it helps us characterize the asymptotic performance of existing FL algorithms. From classical Markov chain literature, we know that if a Markov chain is irreducible and aperiodic (see formal definitions in Appendix \ref{apx_MC}), it has a stationary distribution which is unique and strictly positive. 
We denote $\zeta_R = \lim_{t \to \infty} \phi_R(t)$ as the stationary distribution of Markov chain $P_R$ and we have
\begin{equation}    \label{eq_MC-ssd}
    \zeta_R^T = \zeta_R^T P_R, ~~ \pi_R^T = \zeta_R^T Q_R.
\end{equation}
where $\pi_R \in \mathbb{R}^N$ is marginal stationary distribution of clients to be sampled, i.e., the $i$-th entry of $\pi_R$ is given by $\pi_R^i = \lim_{t \to \infty} \sum_{i_2,\dots,i_B} P(X_t = (i,i_2,\dots,i_B))$. On the other hand, if the Markov chain is irreducible and peroidic, we let $\zeta_R$ be the Perron vector\footnote{we say $v$ is the Perron vector of the transition matrix $P$ if $v^T = v^T P$, i.e., $v$ is right eigenvector of $P$ corresponding to eigenvalue $1$ and $v^T \mathbf{1} = 1$.}, which is also strictly positive. 
We now  show our Markov chain is irreducibile and (a)periodic to justify the definitions of $\zeta_R$ and $\pi_R$ in \Cref{lmm_ergodic-aug-MC}. The proof is in Appendix \ref{apx_pf-MC}.
\begin{lemma}   \label{lmm_ergodic-aug-MC}
    The Markov chain $\{ Y_t \}_{t=R}^{\infty}$ with transition matrix $P_R$ defined by \eqref{eq_aug-MC} is irreducible for all $M \ge 1$ and $0 \le R \le M-1$. Further, when $R \le M-2$, it is also aperiodic.
\end{lemma}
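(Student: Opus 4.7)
The plan is to establish irreducibility through explicit path construction and aperiodicity by exhibiting two return times with greatest common divisor $1$. Both arguments will lean on property~(2) of \Cref{prop_MC-properties}: from any state $(\mathcal{I}_1,\ldots,\mathcal{I}_R)$, a transition to $(\mathcal{K},\mathcal{I}_1,\ldots,\mathcal{I}_{R-1})$ has strictly positive probability if and only if $\mathcal{K}$ is an ordered $B$-subset disjoint from $\mathcal{I}_1 \cup \cdots \cup \mathcal{I}_R$.

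For irreducibility, given arbitrary states $s=(\mathcal{I}_1,\ldots,\mathcal{I}_R)$ and $s'=(\mathcal{J}_0,\ldots,\mathcal{J}_{R-1})$, I would construct a finite sequence of transition choices $\mathcal{K}_1,\ldots,\mathcal{K}_L$ that drives $s$ to $s'$ with positive probability. Since after $L\ge R$ transitions the state is deterministically $(\mathcal{K}_L,\mathcal{K}_{L-1},\ldots,\mathcal{K}_{L-R+1})$, reaching $s'$ reduces to building a sequence whose latest $R$ entries in time order are $\mathcal{J}_{R-1},\mathcal{J}_{R-2},\ldots,\mathcal{J}_0$, subject to the sliding-window constraint that any $R+1$ consecutive entries of $(\mathcal{I}_R,\ldots,\mathcal{I}_1,\mathcal{K}_1,\ldots,\mathcal{K}_L)$ are pairwise disjoint. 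When $R\le M-2$, every such window covers only $RB$ of the $N=MB$ clients, leaving at least $B(M-R)\ge 2B$ free clients at every step, so more than one admissible next block always exists; I would use $O(M)$ buffer choices that first scrub the initial state while steering clear of the clients appearing in $s'$, and then splice the reversed target sequence at the end. For the boundary $R=M-1$ the next block is forced up to reordering of its $B$ elements, so irreducibility has to be read on the recurrent class generated by the underlying partition of $s$; within that class the chain is a directed cycle on the ordered blocks with $B!$ permutation freedom per step, and any class-compatible target can be reached within $O(M)$ full rotations.

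For aperiodicity under $R\le M-2$, I would exhibit at any reference state $s=(\mathcal{I}_1,\ldots,\mathcal{I}_R)$ return cycles of lengths $R+1$ and $R+2$. A length-$(R+1)$ cycle is obtained by first picking any free block $\mathcal{K}_1$ disjoint from $s$ and then executing the schedule $\mathcal{K}_2=\mathcal{I}_R,\mathcal{K}_3=\mathcal{I}_{R-1},\ldots,\mathcal{K}_{R+1}=\mathcal{I}_1$; a direct inductive check shows each intermediate window consists of $\mathcal{K}_1$ together with $R-1$ of the original $\mathcal{I}_j$'s and is therefore pairwise disjoint, while the final window equals $s$. A length-$(R+2)$ cycle is produced by inserting a second free block $\mathcal{K}_2$ that is distinct from $\mathcal{K}_1$ and additionally disjoint from $\mathcal{I}_R$, which exists precisely because the number of clients outside $\mathcal{K}_1 \cup \mathcal{I}_1 \cup \cdots \cup \mathcal{I}_R$ is $B(M-R-1)\ge B$ when $R\le M-2$. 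Since $\gcd(R+1,R+2)=1$ and the chain is irreducible, this forces the period of the whole chain to be $1$. The most delicate part of the argument is making the irreducibility bridge uniform across all pairs $s,s'$: when the underlying blocks of $s$ and $s'$ overlap heavily the single-bridge disjointness counting becomes tight, and one must either lengthen the bridge or stage it into phases that first erase $s$ and only then introduce the $\mathcal{J}$'s, so that admissible buffer choices remain available throughout; the case $R=M-1$ is qualitatively different because the chain genuinely loses all slack and becomes periodic, which is exactly why the lemma excludes it from the aperiodicity claim.
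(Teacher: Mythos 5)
Your aperiodicity argument is essentially the paper's own proof: both insert one (resp.\ two) fresh disjoint blocks after the current state and then replay $\mathcal{I}_R,\dots,\mathcal{I}_1$ to obtain return cycles of lengths $R+1$ and $R+2$ at a single state, and both invoke \Cref{lmm_same-prd} so that $\gcd(R+1,R+2)=1$ gives period one everywhere; your count $B(M-R-1)\ge B$ for the existence of the second fresh block is exactly where $R\le M-2$ enters in the paper as well, and your step-by-step disjointness check of the intermediate windows is correct. Where you differ is on irreducibility, which the paper dismisses in one sentence while you attempt an explicit bridge. Two comments on that half. First, your remark about $R=M-1$ is sharper than the lemma itself: there the set partition of $[N]$ into blocks and its cyclic order are invariants of the dynamics, so the chain is genuinely reducible on the full state space of size $d(M,M-1)$ (e.g.\ $N=3,B=1,R=2$: the class of $(1,2)$ is the deterministic cycle $(1,2)\to(3,1)\to(2,3)$ and never meets $(2,1)$); irreducibility can only be read classwise, as you say, and the paper's blanket claim glosses over this.

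Second, for $R\le M-2$ your single $2R$-step bridge does hit the obstruction you flag --- the earliest buffer block would have to avoid up to $(2R)B$ clients, which exceeds $N$ once $2R>M$ --- and your proposal stops at ``lengthen or stage it'' without executing the fix. That staging is where the real combinatorial work of the irreducibility claim lives (one workable route is to extend $(\mathcal{J}_0,\dots,\mathcal{J}_{R-1})$ to a full ordered partition $\mathcal{B}_1,\dots,\mathcal{B}_M$ of $[N]$, observe that following this partition cyclically is always admissible since any $R+1\le M$ consecutive blocks are pairwise disjoint and that the cycle passes through $s'$, and then show separately that any state can merge onto this cycle), so your irreducibility half is a plan rather than a proof. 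To be fair, the paper's own justification (``obvious, since every client has strictly positive probability of being selected'') is no more complete and does not notice the $R=M-1$ degeneracy at all, so your sketch is already the more careful of the two; but you should finish the merging step before treating the irreducibility claim as established.
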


We provide an example to illustrate the intuition of our Markov-chain model above, considering the case of $N=4, B=1, R=2$, i.e., every round one client is sampled, then it has to wait for two rounds. For instance, if client $1$ and client $2$ are consecutively selected in the first two rounds, in the third round only client $3$ or $4$ can be selected with probabilities of $p_3 / (p_3 + p_4)$ or $p_4 / (p_3 + p_4)$ respectively. Then, the state $(2,1)$ can only transition to $(3,2)$ or $(4,2)$, where the second index is sampled before the first one as is in \eqref{eq_MC}. Similarly, if we are currently at state $(1,4)$, the previous state has to be $(4,3)$ or $(4,2)$. One can easily check that Proposition \ref{prop_MC-properties} holds. To see how $\pi$ is calculated, we take the first entry of $\pi_R$ as an example: 
$$
    \pi_R^1 = \zeta^{(2,3)}p_{(2,3)\to 1} + \zeta^{(2,4)} p_{(2,4)\to 1} + \zeta^{(3,2)}p_{(3,2)\to 1} + \zeta^{(3,4)}p_{(3,4)\to 1} + \zeta^{(4,2)} p_{(4,2)\to 1} + \zeta^{(4,3)} p_{(4,3)\to 1}
$$
by noting that the remaining $p_{(i,j)\to 1} = 0$, if $i~\text{or}~j=1$.

The vectors in \eqref{eq_MC-ssd} characterize the final distribution according to which clients will be sampled when the communication round $t$ becomes infinitely large. In other words, each client $i$ is sampled with probability $\pi_R^i$ given some fixed $R$. Although $\pi_{M-1}$ is the uniform distribution no matter what $p_i$'s are (by observing that all clients follow a cyclic participation), we note that $\pi_R$ for $R<M-1$ does not necessarily follow the uniform distribution, because $\{p_1,\dots, p_N\}$ are arbitrary. This will be problematic in the sense that existing federated learning algorithms may no longer guarantee convergence to the correct and optimal solution of \eqref{eq_obj} no matter how many rounds of training are implemented. We call this phenomenon {\it the asymptotic bias induced by $\pi_R$}. We will characterize both empirically and theoretically this phenomenon in the next section.

\section{Asymptotic bias under non-uniform correlated participation}
In this section, we use the Markov chain model in the previous section to analyze asymptotic bias of existing federated learning algorithms caused by arbitrary $p_i$'s when minimum separation $R \le M-2$. In particular, we consider FedAvg with local gradient descent updates, i.e., at each round, a set $S_t$ with $|S_t|=B$ clients are sampled and after being selected client $i$ updates its model as
\begin{equation}    \label{eq_localupt-fedavg}
    x_{t,0}^i = x_t, ~~ x_{t,k+1}^i = x_{t,k}^i - \alpha \nabla f_i(x_{t,k}^i), ~k=0,\dots,K-1  
\end{equation}
where $x_t$ denotes the server's model at round $t$ and $x_{t,k}^i$ is the local model maintained by client $i$ at $k$-th iteration. The server then updates $x_{t+1} = \frac{1}{B}\sum_{i\in S_t} x_{t,K}^i$. We  next show in the following that FedAvg may not converge to the desired optimal solutions of \eqref{eq_obj}. Instead there may exist some error neighborhood, i.e., the asymptotic bias, that is related to $\pi_R$, even as $t$ goes to infinity. Before we formally deliver the result, two standard assumptions are needed.

\begin{assumption}  \label{assmp_bnd-hetero}
    There exists $G > 0$ such that $\Vert \nabla f_i(x) - \nabla F(x) \Vert^2 \le G^2, \forall x$ and $ \forall i\in[N]$.
\end{assumption}

\begin{assumption}  \label{assmp_smoothness}
    Each $f_i$ is $L$-smooth, i.e., $\Vert \nabla f_i(x) - \nabla f_i(y) \Vert \le L \Vert x - y \Vert, \forall x,y$ and $\forall i \in [N]$.
\end{assumption}

Then, we are ready to state the convergence of FedAvg under correlated client participation (see Appendix \ref{apx_fedavg} for the proof).
\begin{theorem}   \label{thm-convergence-bias}
    Suppose Assumptions \ref{assmp_bnd-hetero},\ref{assmp_smoothness} hold and assume $\Vert \nabla F(x) \Vert \le D$, $\forall x$ with some $D > 0$. Then for any $T > 2\tau_{mix}\log \tau_{mix}$ choosing $\alpha = \mathcal{O}(1 / (\tau_{mix} K\sqrt{T}))$, FedAvg with local updates \eqref{eq_localupt-fedavg} generates the trajectory $\{ x_t \}_{t=1}^{T}$ satisfying
    \begin{equation}    \label{eq_convergence-woest}
        \mathbb{E}\Vert \nabla F(\tilde{x}_T) \Vert^2 \le \tilde{\mathcal{O}}\left( \frac{\tau_{mix}}{\sqrt{T}} \right) + \mathcal{O}\left( \frac{1}{T} \right) + \mathcal{O}\left(\big\Vert \pi_R - \frac{1}{N}\mathbf{1}_N \big\Vert_1^2 \right),
    \end{equation}
    for any $0\le R \le M-2$, where $\tilde{x}_T$ is drawn uniformly from $x_{0}, \dots, x_{T-1}$, $\tilde{\mathcal{O}}(\cdot)$ hides logrithmic factors, and $\tau_{mix}$ denotes the mixing time\footnote{Please refer to Appendix \ref{apx_MC} for the formal definition of the mixing time.}of Markov chain \eqref{eq_evo-MC}. Moreover, the bias term $\mathcal{O}\big(\big\Vert \pi_R - \frac{1}{N}\mathbf{1}_N \big\Vert_1^2 \big)$ shown in \eqref{eq_convergence-woest} is unavoidable.
\end{theorem}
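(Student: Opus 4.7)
The plan is to combine the classical $L$-smoothness descent argument for FedAvg with a Markov-chain mixing argument, and to establish unavoidability via a simple quadratic counterexample. Start with the standard descent inequality
\[
\mathbb{E} F(x_{t+1}) \le \mathbb{E} F(x_t) - \alpha K\,\mathbb{E}\langle \nabla F(x_t), g_t\rangle + \tfrac{L\alpha^2 K^2}{2}\,\mathbb{E}\Vert g_t\Vert^2,
\]
where $g_t = \tfrac{1}{BK}\sum_{k,i\in S_t}\nabla f_i(x_{t,k}^i)$. Under Assumptions \ref{assmp_bnd-hetero} and \ref{assmp_smoothness}, the local-update drift $\mathbb{E}\Vert x_{t,k}^i-x_t\Vert^2$ is at most $\alpha^2 K^2(G^2+D^2)$ up to constants, so the $\Vert g_t\Vert^2$ term telescopes into an $\mathcal{O}(1/T)$ contribution for the chosen step size. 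The key difficulty is handling $\mathbb{E}\langle\nabla F(x_t), g_t\rangle$ under correlated, non-uniform sampling.

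Let $\eta_R(t)$ denote the marginal distribution over sampled clients at round $t$ introduced in Section \ref{sec_MC-model}. Ignoring drift for exposition, the conditional expectation of the aggregated gradient equals $\sum_i \eta_R^i(t)\nabla f_i(x_t)$, which I would split as
\[
\sum_i \eta_R^i(t)\nabla f_i(x_t) - \nabla F(x_t) = \underbrace{\sum_i(\eta_R^i(t)-\pi_R^i)\nabla f_i(x_t)}_{(\mathrm{A})} + \underbrace{\sum_i(\pi_R^i-\tfrac{1}{N})\nabla f_i(x_t)}_{(\mathrm{B})}.
\]
By \Cref{lmm_ergodic-aug-MC}, the chain $\{Y_t\}$ is irreducible and aperiodic whenever $R\le M-2$, so $\Vert\eta_R(t)-\pi_R\Vert_1$ decays geometrically in $t/\tau_{mix}$. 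Partitioning the horizon into windows of length $\tau_{mix}\log\tau_{mix}$ and combining the per-window mixing bound with Young's inequality converts term $(\mathrm{A})$ into a $\tilde{\mathcal{O}}(\tau_{mix}/\sqrt{T})$ contribution after telescoping. Term $(\mathrm{B})$ is bounded pointwise by $\Vert\pi_R-\tfrac{1}{N}\mathbf{1}\Vert_1(G+D)$ via Cauchy--Schwarz together with the bounded-heterogeneity and bounded-gradient assumptions; a final Young's inequality yields the $\mathcal{O}(\Vert\pi_R-\tfrac{1}{N}\mathbf{1}\Vert_1^2)$ bias term.

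For unavoidability I would exhibit a quadratic family $f_i(x)=\tfrac{1}{2}\Vert x-a_i\Vert^2$ chosen so that $\sum_i\pi_R^i a_i\ne\tfrac{1}{N}\sum_i a_i$. A direct computation shows that the expected FedAvg update vanishes only at $\bar a^\pi:=\sum_i\pi_R^i a_i$, whereas the minimizer of $F$ is $\bar a:=\tfrac{1}{N}\sum_i a_i$, so the iterates can only converge (in expectation) to $\bar a^\pi$. Choosing $a_i = c\cdot\mathrm{sign}(\pi_R^i-1/N)\cdot e$ for a fixed unit vector $e$ and an appropriate constant $c$ makes $\Vert\nabla F(\bar a^\pi)\Vert^2 = \Vert\bar a^\pi-\bar a\Vert^2$ of order $\Vert\pi_R-\tfrac{1}{N}\mathbf{1}\Vert_1^2$, matching the upper bound up to constants and showing the third term cannot be removed.

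The main obstacle is the Markovian coupling across rounds: unlike i.i.d.\ sampling, one cannot simply take a conditional expectation to replace $g_t$ by a $\pi_R$-weighted gradient. The standard remedy is a look-back trick, evaluating the reference gradient at $x_{t-\tau}$ for $\tau\approx\tau_{mix}\log\tau_{mix}$ and bounding the lag terms $\Vert\nabla F(x_t)-\nabla F(x_{t-\tau})\Vert$ using $L$-smoothness against quantities that reappear in the descent inequality. Balancing these lag terms, the local-update drift, and the mixing error so that $\alpha=\mathcal{O}(1/(\tau_{mix}K\sqrt{T}))$ produces the stated $\tilde{\mathcal{O}}(\tau_{mix}/\sqrt{T})$ rate is the main technical delicacy of the upper bound.
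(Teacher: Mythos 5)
Your proposal is correct and follows essentially the same route as the paper: a smoothness-based descent inequality, a look-back to $x_{t-\tau}$ with $\tau\approx\tau_{mix}\log(\cdot)$ to decouple the Markovian sampling, a split of the sampling error into a geometrically-mixing part ($\eta_R(t)-\pi_R$) and a stationary bias part ($\pi_R-\tfrac{1}{N}\mathbf{1}_N$) bounded via $G$ and $D$, and a quadratic counterexample whose expected fixed point is the $\pi_R$-weighted mean to show the bias is unavoidable. The paper packages the stationary bias as the deviation $\tilde q_t = q_t - w_i/\pi_i$ inside a generalized reweighted-objective lemma (Lemma \ref{lmm_descent4genF}) rather than as your term (B), but this is the same decomposition in different notation.
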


Theorem \ref{thm-convergence-bias} implies that without any  debiasing technique, FedAvg can only converge to a solution with unavoidable asymptotic bias
which is measured by the distance between $\pi_R$ (defined in \eqref{eq_MC-ssd}) and the uniform distribution. Except for $R=M-1$, where $\pi_{M-1}$ is the uniform distribution, for $R \le M-2$, there is generally some gap between $\pi_R$ and $(1/N)\mathbf{1}_N$, which shows that FedAvg may fail to perform under correlated client participation. However, if $\pi_R$ is not too far away from the uniform distribution, we expect FedAvg to converge to a solution reasonably close to the optimal solution of \eqref{eq_obj}. We next investigate what factors influence the distance from $\pi_R$ to the uniform distribution. We find that one factor is the spread among $p_i$'s. Stated by the following proposition, if all $p_i$'s are equal, no gap between $\pi_R$ and $(1/N)\mathbf{1}_N$ exists (see Appendix \ref{apx_convergence-R} for the proof).
\begin{proposition} \label{prop_unif-p}
    Suppose $p_1=p_2=\cdots = p_N = \frac{1}{N}$. Then for any $0 \le R\le M-1$, $\pi_R = \frac{1}{N} \mathbf{1}_N$.
\end{proposition}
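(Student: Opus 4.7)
The plan is to exploit the permutation symmetry of the system that appears when all availability probabilities coincide. Under $p_1=\cdots=p_N=1/N$, the transition kernel of $\{Y_t\}$ depends only on the combinatorial pattern of repetitions between blocks, not on which specific clients occupy those blocks. This forces both the stationary/Perron distribution $\zeta_R$ and the marginal $\pi_R$ to be invariant under any relabeling of client indices, which, together with the normalization $\sum_i \pi_R^i = 1$, immediately yields $\pi_R = (1/N)\mathbf{1}_N$.

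Concretely, I would first observe that for every non-repeated $(\mathcal{I}_0, \mathcal{I}_1, \dots, \mathcal{I}_R)$, Proposition \ref{prop_MC-properties}(2) collapses to
\[
p_{(\mathcal{I}_1,\dots,\mathcal{I}_R) \to \mathcal{I}_0} \;=\; \frac{B/N}{\sigma(N-RB,B)\cdot B/N} \;=\; \frac{1}{\sigma(N-RB, B)},
\]
since $S_{\mathcal{I}_{1:R}}^c$ contains $\sigma(N-RB,B)$ ordered $B$-subsets and each carries total weight $B/N$. Hence every nonzero entry of the transition matrix $P_R$ equals the same constant $1/\sigma(N-RB,B)$. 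I would then verify that the uniform law $\zeta := (1/d(M,R))\mathbf{1}_{d(M,R)}$ on the non-repeated states satisfies $\zeta^T = \zeta^T P_R$: for any fixed target state $(\mathcal{I}_0, \mathcal{J}_1, \dots, \mathcal{J}_{R-1})$, its $P_R$-predecessors are exactly the sequences $(\mathcal{J}_1, \dots, \mathcal{J}_{R-1}, \mathcal{I}_R)$ where $\mathcal{I}_R$ ranges over the $B$-permutations of $[N]\setminus(\mathcal{I}_0 \cup \mathcal{J}_1 \cup \cdots \cup \mathcal{J}_{R-1})$, giving exactly $\sigma(N-RB,B)$ predecessors, each contributing $\frac{1}{d(M,R)}\cdot\frac{1}{\sigma(N-RB,B)}$, and summing to $\frac{1}{d(M,R)}$. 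Because $P_R$ is irreducible (Lemma \ref{lmm_ergodic-aug-MC}), its stationary/Perron vector is unique, forcing $\zeta_R = \zeta$.

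To finish, I would expand $\pi_R^i = \sum_{\mathcal{I}_0=(i,*)}\sum_{\mathcal{J}_1, \dots, \mathcal{J}_{R-1}} \zeta_R[(\mathcal{I}_0, \mathcal{J}_1, \dots, \mathcal{J}_{R-1})]$ using uniformity of $\zeta_R$. The number of non-repeated sequences whose leading block begins with a fixed client $i$ is independent of $i$ via the obvious bijection that permutes client labels, so every $\pi_R^i$ is the same, and $\sum_i \pi_R^i = 1$ gives $\pi_R = (1/N)\mathbf{1}_N$. The only point requiring any care is the column-sum verification in the stationarity check, but it reduces to standard $B$-permutation counting, so I do not foresee a real obstacle.
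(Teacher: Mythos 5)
Your proof is correct and follows essentially the same route as the paper's: the paper observes that every column sum of $P_R$ equals one (its quantity $b_R=1$), so $\mathbf{1}^T P_R = \mathbf{1}^T$ and the uniform law on augmented states is the Perron vector, which is exactly your predecessor-counting verification that all nonzero entries equal $1/\sigma(N-RB,B)$ and each state has $\sigma(N-RB,B)$ predecessors. Your write-up just makes explicit the entry computation, the uniqueness appeal to Lemma~\ref{lmm_ergodic-aug-MC}, and the symmetry argument for marginalizing $\zeta_R$ down to $\pi_R$, all of which the paper leaves implicit.
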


\begin{wrapfigure}{r}{0.4\textwidth}
    \centering
    \includegraphics[width=0.4\textwidth]{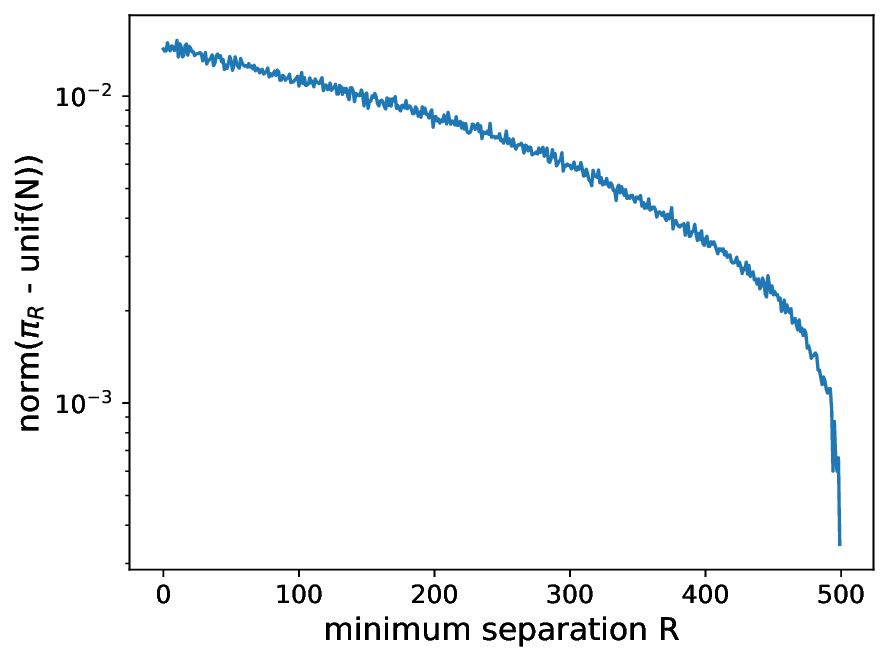}
    \caption{Distance between $\pi_R$ and the uniform distribution as $R$ increases ($N=500, B=1$)}
    \label{fig:ssd-R}
\end{wrapfigure}
When $p_i$'s are not equal to each other, we turn to understand how $R$ affect $\pi_R$. In fact, we empirically observe that $\pi_R$ approaches the uniform distribution as $R$ increases. This key observation is illustrated in Figure \ref{fig:ssd-R}. We consider the case where $N=500, B=1$ and assign each client a random $p_i > 0$. We then calculate $\pi_R$ for each $R$ ranging from $0$ to $N-1$ and measure its distance from the uniform distribution. As shown in the figure, increasing $R$ causes $\pi_R$ moving towards the uniform distribution. One explanation for this observation is that when $R$ becomes larger, fewer clients are ready to be sampled in the current round, because many clients have not waited for enough rounds and hence are not available. Rather than dictated by the availability probability $p_i$'s, which is the case for a small $R$ and many available clients, here the sampling process is mostly determined by the waiting requirement. In the extreme case, when $R=M-1$, at each round, only $B$ clients are available, hence all clients are sampled with equal frequency. 
Another point suggested by this observation is that we can choose a large minimum separation $R$ in the practical scenario to reduce the asymptotic bias for existing FL algorithms, even with unknown $p_i$'s.

The above empirical observation verifies the formal theorem that characterizes the debiasing effect of increasing minimum separation $R$ in \Cref{thm_convergence-R}. (see Appendix \ref{apx_convergence-R} for the proof). 
\begin{theorem} \label{thm_convergence-R}
   Given a set of $p_i$'s, with at least one element $p_i\neq \frac{1}{N}$. Without loss of generality, let $p_1,\dots, p_B$ be the $B$ smallest values among all $p_i$'s. Define $ q_B := \sum_{j=1}^B p_j$, then $q_B < 1/M$. There exists a $\bar \delta>0$, such that if any size-$B$ batch of clients $\mathcal{B}_j$  picking from $[N]\setminus [B]$,  $\delta_j := \vert \sum_{l \in \mathcal{B}_j} p_l - \frac{1 - q_B}{M-1} \vert \le \bar{\delta}$, then
    $\pi_R$ converges to a neighborhood of $\frac{1}{N} \mathbf{1}_N$ characterized by $\{ \pi \mid \Vert \pi - \frac{1}{N} \mathbf{1}_N \Vert_1 = \mathcal{O}(N^{-1}) \}$ as $R$ ranging from $0$ to $M-1$. When $R=M-1$, $\pi_{M-1}$ is  the uniform distribution supported on $[N]$. 
\end{theorem}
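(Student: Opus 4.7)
The plan is to split the argument into a boundary case at $R = M-1$ and a perturbation argument that handles all remaining $R$. The main tool is the explicit transition probability formula from Proposition \ref{prop_MC-properties}(2),
\[
p_{(\mathcal{I}_1,\dots,\mathcal{I}_R) \to \mathcal{I}_0} = \frac{p_{\mathcal{I}_0}}{1 - \sum_{k=1}^R p_{\mathcal{I}_k}},
\]
combined with the fact that under the nominal assignment $p_i \equiv 1/N$ Proposition \ref{prop_unif-p} already gives $\pi_R = (1/N)\mathbf{1}_N$ for every $R$. The intuitive content of the hypothesis is that every size-$B$ batch not containing the smallest clients has total mass within $\bar{\delta}$ of the uniform share $(1-q_B)/(M-1)$, so the chain driven by the true $p_i$'s is a small entrywise perturbation of the nominal chain.

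For the boundary $R = M-1$ I would argue directly from the participation mechanism rather than from a perturbation bound. When $R = M-1$, at any round exactly $B$ of the $N = MB$ clients are available (all others are still in their cooldown windows), so the participating batch is forced to be the unique $B$-subset not used in the preceding $M-1$ rounds. The chain therefore degenerates to a deterministic cycle through the initial partition, every client appears exactly once per $M$ rounds, and $\pi_{M-1} = (1/N)\mathbf{1}_N$ exactly, independently of the $p_i$'s.

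For $0 \le R \le M-2$ I would compare the chain $P_R$ to its nominal counterpart $P_R^{\mathrm{nom}}$ built from $p_i \equiv 1/N$. Under the $\bar{\delta}$-hypothesis, every numerator $p_{\mathcal{I}_0}$ and every denominator $1 - \sum_{k=1}^R p_{\mathcal{I}_k}$ differs from its nominal value by $\mathcal{O}(\bar{\delta})$ (the contribution of the one exceptional batch containing the smallest $B$ clients is handled separately, using $q_B < 1/M$), so each nonzero entry of $P_R$ is within $\mathcal{O}(\bar{\delta})$ of the corresponding entry of $P_R^{\mathrm{nom}}$. A standard stationary-distribution perturbation inequality then gives $\Vert \zeta_R - \zeta_R^{\mathrm{nom}} \Vert_1 = \mathcal{O}(\bar{\delta})$, and applying the marginalization operator $Q_R^T$ from \eqref{eq_MC-ssd} transports the bound to $\Vert \pi_R - (1/N)\mathbf{1}_N \Vert_1 = \mathcal{O}(\bar{\delta})$. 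Choosing $\bar{\delta}$ of order $1/N$ yields the claimed $\mathcal{O}(N^{-1})$ bound.

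The main obstacle will be making the perturbation step quantitative without losing factors that depend combinatorially on $N$. Generic bounds through the group inverse of $I - P_R$ scale with the inverse spectral gap of $P_R^{\mathrm{nom}}$ and with the state-space size $d(M,R)$, both of which are combinatorially large in $N$. To avoid this, my intended route is to write the fixed-point equation componentwise as $\zeta_R = P_R^T \zeta_R$, decompose $\zeta_R = \zeta_R^{\mathrm{nom}} + \Delta$, and bound $\Delta$ directly using the sparsity and explicit row/column patterns in Proposition \ref{prop_MC-properties}(4)--(5); the key observation is that the marginal $Q_R^T \zeta_R$ sums many $\Delta$-entries against fixed weights, which produces the additional $1/N$ gain that pushes the bound from $\mathcal{O}(\bar{\delta})$ down to $\mathcal{O}(N^{-1})$. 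A coupling argument, constructing the true and nominal chains on a common probability space so that their one-step samples agree with probability $1 - \mathcal{O}(\bar{\delta})$, is a reasonable backup.
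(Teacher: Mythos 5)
Your proposal takes a genuinely different route from the paper, but it has a gap at the crux of the argument. The paper does not compare $P_R$ to a nominal uniform chain via stationary-distribution perturbation theory. Instead, it works with the column sums $b_R$ of $P_R$ (noting that $\pi_R$ is uniform exactly when every column of $P_R$ sums to one), first for the structured case $B=1$, $p_1<1/N$, $p_i=\frac{1-p_1}{N-1}$ for $i\ge 2$. It splits the columns into four cases according to whether the exceptional client $1$ is the column's target or sits inside the length-$R$ window, and shows by explicit computation that $|b_{R+1}-1|<|b_R-1|$ in three of the cases while $|b_R-1|$ stays bounded by $1/2$ in the remaining case; the $\mathcal{O}(1/N)$ radius then comes from the fact that the badly-behaved columns carry only an $\mathcal{O}(1/N)$ share of the marginal. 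Only at the very end does the paper invoke a continuity/perturbation argument in the $\epsilon_i$'s --- and that perturbation is around the one-small-client configuration, not around $p_i\equiv 1/N$.

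The concrete problem with your plan is the exceptional batch. The hypothesis controls only the batches drawn from $[N]\setminus[B]$; the batch of the $B$ smallest clients has total mass $q_B$ which is merely required to satisfy $q_B<1/M$ and can be far from $1/M$ (e.g.\ $q_B\to 0$). The transition probabilities into and the denominators excluding that batch therefore deviate from their nominal values by an amount of order $|q_B-1/M|$, which is \emph{not} $\mathcal{O}(\bar\delta)$, so the premise ``each nonzero entry of $P_R$ is within $\mathcal{O}(\bar\delta)$ of $P_R^{\mathrm{nom}}$'' fails exactly where it matters. Saying this batch is ``handled separately'' is not a proof step --- analyzing how the influence of that one anomalous batch on $\pi_R$ shrinks as $R$ grows \emph{is} the theorem, and it is what the paper's Case~I (strict monotone increase of $b_R$ toward $1$) and Case~II (bounded deviation on a vanishing fraction of the marginal mass) accomplish. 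Relatedly, a uniform perturbation bound of the kind you propose would, if it worked, hold for every $R$ including $R=0$ and would say nothing about convergence ``as $R$ ranges from $0$ to $M-1$,'' which is the content the theorem is after; and choosing $\bar\delta=\mathcal{O}(N^{-1})$ changes the statement rather than proving it. Your direct cyclic argument for $R=M-1$ is correct and consistent with the paper, and your worry about spectral-gap-dependent perturbation constants is well placed, but the proposed componentwise fix with a ``$1/N$ gain from marginalization'' is asserted rather than derived.
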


Theorem \ref{thm_convergence-R}  states that when the availability probabilities $p_i$'s of clients are not too far away from each other or when $B$ is relatively large (i.e., $\delta_j$'s are small for all $j$), and when the total number of clients $N$ is large, $\pi_R$ approaches the uniform distribution as $R$ increases. It is worth noting that practically when the requirements in \Cref{thm_convergence-R} are not strictly satisfied, the effect of increasing $R$ on $\pi_R$ can be still observed as shown in Figure \ref{fig:ssd-R}.

\section{Debiasing FedAvg and its convergence}
As we discuss in the previous section, existing federated learning algorithms like FedAvg cannot guarantee convergence to the correct optimal solution if $R \le M-2$ and $p_i$'s are arbitrary. Although we can reduce the asymptotic bias caused by $\pi_R$ by increasing $R$, it may still be problematic under some particular circumstances. 
Clients have intermittent and non-uniform availability, and forcing a large minimum separation $R$ in practice may cause significant slowdown of the training in the FL system due to the small number of available clients. 
The minimum separation $R$ can be relatively small and the $p_i$'s can be very different from each other, which then suggests by Figure \ref{fig:ssd-R} and Theorem \ref{thm_convergence-R},  $\pi_R$ can be far from the uniform distribution, making the asymptotic bias non-negligible. We next design a debiasing process that can be easily integrated into the existing federated learning algorithms to address asymptotic bias. Our proposed algorithm based on FedAvg is given by Algorithm \ref{alg_FedAvg-MC}.


\begin{algorithm}[tb]
   \caption{Debiasing FedAvg for correlated client participation}
   \label{alg_FedAvg-MC}
\begin{algorithmic}[1]
   \STATE {\bfseries Input:} initial point $x_0$, stepsizes $\{ \alpha \}$, some $\tau > 0$, $\lambda_0 = \mathbf{0}_N$, $t_i = 0, \forall i \in [N]$ for each client
   \FOR{$t=0,1,\dots,T$}    
            \STATE A batch of clients $S_t$ with size $|S_t| = B$ is selected. The server sends current $t$ and model $x_t$ to clients in $S_t$.
            \FOR{$i\in S_t$ in parallel} 
                \STATE 
                Each client sets $t_i \leftarrow t_i + 1$ and calculates $\lambda_t^i = \frac{t_i}{(t+1)B}$ and 
                $\nu_t^{i} = \frac{1}{\lambda_t^{i} N}$.
                \FOR{$k=0,1,\dots,K-1$}
                    \STATE Client $i$ updates its local model by 
                    \begin{equation}    \label{eq_localupt}
                        x_{t,k+1}^i = x_{t,k}^i - \alpha \nu_t^i \nabla f_i(x_{t,k}^i).
                    \end{equation}
                \ENDFOR
            \ENDFOR
            \STATE The server updates its model $x_{t+1} = \frac{1}{B} \sum_{i \in S_t} x_{t,K}^i$.
            
    \ENDFOR
   \STATE {\bfseries Output:} $\tilde{x}_T$ sampled uniformly from $\{ x_t \}_{t=0}^{T-1}$
\end{algorithmic}
\end{algorithm}

The main difference between our algorithm and vanilla FedAvg lies in the stage of local updates (Lines 5 and 7). Specifically, we require each client to maintain an estimator of its corresponding component of $\pi_R$, which is only updated when the client is sampled. This estimator is later used to scale the gradient step during the local update. The estimator is designed by counting the times the client has been sampled and then used to compute the running empirical frequency of the client's participation. Recall that $\pi_R^i$ represents the frequency of client $i$ to be selected when $t$ is large enough (i.e., when the Markov chain \eqref{eq_evo-MC} becomes steady, meaning $\phi_R(\infty) = \zeta_R$). If we reweigh the local objective function $f_i$ by $\frac{1}{\pi_R^i}$ (corresponding to $\nu_t^i = \frac{1}{\pi_R^i N}$ in \eqref{eq_localupt}), this weighting cancels the asymptotic bias introduced by unbalanced sampling, which drives the trajectory of the server's models towards the correct solution of \eqref{eq_obj}. If we know $\pi_R^i$ for every client in prior, the above-mentioned reweighting method provides us with unbiased solutions. Then, $\lambda_t^i$ serves as a role to iteratively approximate $\pi_R^i$ round by round, which yields Algorithm \ref{alg_FedAvg-MC} \footnote{This is similar to the technique used in \cite{ram2009asynchronous}, where a counter is used to capture asynchronous update frequency in distributed setting. While agents may update with different relative frequency, their updates are independent and identically distributed over time unlike the correlated case here.}. Also note that Algorithm \ref{alg_FedAvg-MC} reduces to FedAvg if fixing $\lambda_t^i = 1/N, \forall i \in [N]$. This shows the advantage of our algorithm: it is computationally cheap in the sense that each client only maintains two additional scalars ($\lambda_t^i$ and $\nu_t^i$) and can be easily embedded with existing algorithms by just multiplying the learning rates by $\nu_t^i$. We note that other federated algorithms suffering from asymptotic bias due to nonuniform sampling could also benefit from our debiasing technique based on simple counting.

However, formally characterizing the convergence of $\nu_t^i$ to $\frac{1}{\pi_R^i N}$ remains challenging due to the samples of clients are not independent across different rounds. In particular, the clients sampled in the current round may affect those in the future, which makes the conventional concentration tools and law of large numbers not applicable. To address this challenge, we carefully analyze the transitions of the Markov chain \eqref{eq_MC-ssd} and its influences on the marginal distribution of clients to be sampled to conclude that $\lambda_t^i$ is an unbiased estimate of $\pi_R^i$. Then, we further leverage the fact that the Markov chain is irreducible as stated in Lemma \ref{lmm_ergodic-aug-MC} to show that $\lambda_t^i$ is almost surely strictly positive even $t$ is infinite, concluding the convergence of $\nu_t^i$ to $\frac{1}{\pi_R^i N}$, as summarized in \Cref{lmm_convergence-q} (see Corollary \ref{coro_convergence-q} in Appendix \ref{apx_convergence-nobias} for the proof).

\begin{lemma}   \label{lmm_convergence-q}
    Given $\lambda_0 = \mathbf{0}_N$, then $\nu_t^i, \forall i \in [N]$ in Algorithm \ref{alg_FedAvg-MC} satisfies
    \begin{equation}
        \mathbb{E}\Vert \tilde{\nu}_t \Vert_{\infty}^2 \le \mathcal{O}\left(\frac{\tau_{mix}}{t}\right)   \nonumber
    \end{equation}
    for any $t > 0$, where $\tilde{\nu}^i_t = \nu^i_t - \frac{1}{\pi_i N}$ and $\tilde{\nu}_t = (\tilde{\nu}_t^1,\dots,\tilde{\nu}_t^N)$.
\end{lemma}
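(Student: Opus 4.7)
The plan is to reinterpret $\lambda_t^i$ as a time-averaged empirical frequency along the augmented Markov chain $\{Y_t\}$ of Proposition~\ref{prop_MC-properties}, derive an $L^2$ bound on $\lambda_t^i-\pi_R^i$ from a mixing-time-based concentration inequality for Markov chains, and then transfer this bound to $\nu_t^i-\tfrac{1}{\pi_R^i N}$ by inverting, with the key difficulty being that the denominator $\lambda_t^i$ can be small.

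First, I would observe that at round $t$ the count $t_i$ equals $\sum_{s=0}^{t}\mathbbm{1}\{i\in X_s\}$, so that
\[
\lambda_t^i \;=\; \frac{1}{(t+1)B}\sum_{s=0}^{t}\mathbbm{1}\{i\in X_s\}.
\]
Lemma~\ref{lmm_ergodic-aug-MC} makes $\{Y_t\}$ irreducible with strictly positive Perron vector $\zeta_R$, and the marginal relation in \eqref{eq_MC-ssd} combined with the factor-of-$B$ scaling discussed in Section~\ref{sec_MC-model} gives $\mathbb{E}_{\zeta_R}[\mathbbm{1}\{i\in X_s\}]=B\,\pi_R^i$. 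Thus $\mathbb{E}_{\zeta_R}[\lambda_t^i]=\pi_R^i$, confirming that $\lambda_t^i$ is an asymptotically unbiased estimator of $\pi_R^i$.

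Next, I would invoke the standard second-moment concentration bound for geometrically ergodic Markov chains: for every bounded $f$ on the state space of $\{Y_t\}$ and any initial distribution,
\[
\mathbb{E}\Bigl|\tfrac{1}{n}\textstyle\sum_{s=0}^{n-1} f(Y_s)-\mathbb{E}_{\zeta_R}[f]\Bigr|^2 \;\le\; \frac{C\,\tau_{mix}\,\|f\|_\infty^2}{n},
\]
plus a transient term that decays geometrically in $n/\tau_{mix}$ and is absorbed into $\mathcal{O}(\tau_{mix}/n)$ once $n\gtrsim\tau_{mix}\log\tau_{mix}$. Applying this with $f_i(Y)=\mathbbm{1}\{i\in X_t\}/B$, whose sup-norm is $1/B$, delivers $\mathbb{E}|\lambda_t^i-\pi_R^i|^2\le\mathcal{O}(\tau_{mix}/t)$ uniformly over $i\in[N]$.

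The technical crux, and the step I expect to be the main obstacle, is then transferring this bound to $\tilde\nu_t^i$. Writing
\[
\tilde\nu_t^i \;=\; \frac{\pi_R^i-\lambda_t^i}{N\,\lambda_t^i\,\pi_R^i},
\]
the denominator is delicate because $\lambda_t^i$ can be as small as $1/((t+1)B)$, or even vanish on the rare event that client $i$ has not yet been sampled, in which case $\nu_t^i$ blows up. I would split the expectation along the event $A_i:=\{\lambda_t^i\ge\pi_R^i/2\}$. On $A_i$, the estimate $|\tilde\nu_t^i|^2\le 4|\lambda_t^i-\pi_R^i|^2/(N^2(\pi_R^i)^4)$ combined with the $L^2$ bound above gives $\mathbb{E}[|\tilde\nu_t^i|^2\mathbbm{1}_{A_i}]=\mathcal{O}(\tau_{mix}/t)$, with constants depending on $\pi_R^{\min}:=\min_i\pi_R^i$, which is positive by strict positivity of $\zeta_R$. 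On $A_i^c$, Chebyshev alone only yields $P(A_i^c)=\mathcal{O}(\tau_{mix}/t)$, which is too weak to offset the worst-case polynomial-in-$t$ blow-up of $|\tilde\nu_t^i|$; the remedy is a Bernstein-type exponential tail inequality for Markov chains, giving $P(A_i^c)\le 2\exp(-\Omega(t(\pi_R^i)^2/\tau_{mix}))$. This super-polynomial decay dominates the crude bound $|\tilde\nu_t^i|\lesssim t$ on the bad event and renders its contribution lower-order, which is also what formalizes the almost-sure strict positivity of $\lambda_t^i$ mentioned in the setup.

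Finally, combining via $\|\tilde\nu_t\|_\infty^2\le\sum_{i=1}^N|\tilde\nu_t^i|^2$ and summing the per-coordinate bound (absorbing the factor $N$ into the hidden constants) yields the claimed $\mathbb{E}\|\tilde\nu_t\|_\infty^2\le\mathcal{O}(\tau_{mix}/t)$.
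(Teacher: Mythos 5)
Your proposal follows the same overall strategy as the paper's proof: identify $\lambda_t^i$ with the running empirical frequency of client $i$ along the augmented chain $\{Y_t\}$, obtain $\mathbb{E}(\lambda_t^i-\pi_R^i)^2 \le \mathcal{O}(N^2\tau_{mix}/t)$ from a mixing-time-based second-moment bound (the paper proves this from scratch as Theorem~\ref{thm_estimation} via a covariance decomposition and Lemma~\ref{lmm_mixing-time-recurr}, where you instead cite it as a standard fact), and then invert by splitting the expectation of $(\tilde\nu_t^i)^2$ according to whether $\lambda_t^i$ is bounded away from zero. The one place you genuinely diverge is the treatment of the bad event. The paper disposes of $\{\lambda_t^i < a\}$ by asserting that irreducibility forces $\lambda_t^i \ge a_0 > 0$ almost surely for a constant $a_0$ independent of $t$; this is the weakest step of their argument (taken verbatim it fails for small $t$, when a client may not yet have been sampled, and the passage from ``visited infinitely often'' to a uniform-in-$t$ lower bound is not spelled out). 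Your substitute --- a Markov-chain Bernstein-type tail bound giving $P(\lambda_t^i < \pi_R^i/2) \le 2\exp(-\Omega(t(\pi_R^i)^2/\tau_{mix}))$, which dominates any polynomial growth of $|\tilde\nu_t^i|$ on the bad event --- is the more standard and more defensible route, at the cost of importing one extra concentration tool. One residual caveat that both you and the paper leave open: on the event $t_i=0$ the quantity $\nu_t^i = 1/(\lambda_t^i N)$ is not merely polynomially large but undefined, so the crude bound $|\tilde\nu_t^i|\lesssim t$ implicitly requires $t_i\ge 1$ (e.g.\ the convention that $\nu_t^i$ is only evaluated upon participation, as in Algorithm~\ref{alg_FedAvg-MC}); this deserves one explicit sentence, but it does not change the conclusion.
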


Based on the above, we can achieve the following convergence result of Algorithm \ref{alg_FedAvg-MC} (see Appendix \ref{apx_convergence-nobias} for the proof).
\begin{theorem} \label{thm_convergence-propAlg}
    Suppose Assumptions \ref{assmp_bnd-hetero} and \ref{assmp_smoothness} hold. For any $0\le R < M-1$ and $T > c^{\dag} \tau_{mix} \log \tau_{mix}$ (with $c^{\dag}$ being some constant), choosing $\alpha = \mathcal{O}(1/ (\tau_{mix} K \sqrt{T}))$, the output of Algorithm \ref{alg_FedAvg-MC} satisfies
    \begin{equation}    \label{eq_convergence-withest}
        \mathbb{E}\Vert \nabla F(\tilde{x}_T) \Vert^2 = \tilde{\mathcal{O}}\left( \frac{\tau_{mix}}{\sqrt{T}} \right) + \mathcal{O}\left( \frac{1}{T} \right)  \nonumber
    \end{equation}
    where $\tilde{x}_T$ is defined as that in Theorem \ref{thm-convergence-bias}.
\end{theorem}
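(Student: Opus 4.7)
The plan is to apply the $L$-smoothness descent inequality
\begin{equation*}
F(x_{t+1}) \le F(x_t) + \langle \nabla F(x_t), x_{t+1} - x_t \rangle + \frac{L}{2}\Vert x_{t+1} - x_t \Vert^2
\end{equation*}
to the server iterate $x_{t+1} = x_t - \alpha g_t$ with effective gradient $g_t := \tfrac{1}{B}\sum_{i \in S_t} \nu_t^i \sum_{k=0}^{K-1} \nabla f_i(x_{t,k}^i)$, and then control $\mathbb{E}\langle \nabla F(x_t), g_t \rangle$ and $\mathbb{E}\Vert g_t \Vert^2$. The key motivation for the debiasing is that, at stationarity, $\Pr(i \in S_t) = B \pi_R^i$ by Proposition \ref{prop_MC-properties} and the comment after it, so $\tfrac{1}{B}\sum_{i \in S_t}\tfrac{1}{\pi_R^i N}\nabla f_i(x_t)$ is an unbiased estimator of $\nabla F(x_t)$; the entire argument is a controlled perturbation of this ideal. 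Concretely, I would decompose $g_t = K\nabla F(x_t) + E_t^{\mathrm{drift}} + E_t^{\mathrm{est}} + E_t^{\mathrm{mix}}$, where $E_t^{\mathrm{drift}}$ comes from replacing $\nabla f_i(x_{t,k}^i)$ by $\nabla f_i(x_t)$, $E_t^{\mathrm{est}}$ from replacing $\nu_t^i$ by $1/(\pi_R^i N)$, and $E_t^{\mathrm{mix}}$ from replacing the law of $S_t$ by the stationary law $\pi_R$.

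The drift term is controlled by the standard local-SGD unrolling of \eqref{eq_localupt}: using that $\nu_t^i$ is uniformly bounded (by strict positivity of $\pi_R$ together with Lemma \ref{lmm_convergence-q}) and Assumptions \ref{assmp_bnd-hetero}, \ref{assmp_smoothness}, one obtains $\Vert x_{t,k}^i - x_t \Vert = \mathcal{O}(\alpha K(G + \Vert \nabla F(x_t) \Vert))$ and hence $\mathbb{E}\Vert E_t^{\mathrm{drift}}\Vert^2 = \mathcal{O}(\alpha^2 K^4 L^2(G^2 + \mathbb{E}\Vert \nabla F(x_t) \Vert^2))$. The estimator-error term follows directly from Lemma \ref{lmm_convergence-q} combined with $\Vert \nabla f_i(x)\Vert \le G + \Vert \nabla F(x)\Vert$, yielding $\mathbb{E}\Vert E_t^{\mathrm{est}}\Vert^2 = \mathcal{O}(K^2 \tau_{mix}(G^2 + \mathbb{E}\Vert \nabla F(x_t) \Vert^2)/t)$, where the $\tau_{mix}/t$ factor is exactly what Lemma \ref{lmm_convergence-q} delivers.

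The delicate piece, which I expect to be the main obstacle, is $E_t^{\mathrm{mix}}$. Because $\{Y_t\}$ is irreducible and aperiodic for $R \le M-2$ (Proposition \ref{prop_MC-properties}(3) and Lemma \ref{lmm_ergodic-aug-MC}), conditioning on $Y_{t-\tau_{mix}}$ brings the law of $Y_t$ to within a constant total variation of the stationary law $\zeta_R$. I would use the standard lookback trick: over a window of length $\tau_{mix}$, replace $\nabla F(x_t)$ and $\nu_t^i$ by their values at $t-\tau_{mix}$, which incurs an $\mathcal{O}(\alpha \tau_{mix})$ perturbation by smoothness and the assumed bound on $\Vert \nabla F\Vert$, and then control the remaining inner product via the mixing-time estimate. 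Handling the joint dependence of Markov sampling, the time-varying estimator $\nu_t^i$, and the local drift is what forces both $\alpha = \Theta(1/(\tau_{mix}K\sqrt{T}))$ and the burn-in $T \gtrsim \tau_{mix}\log\tau_{mix}$. Summing the descent inequality from $t=0$ to $T-1$ and averaging, the optimization term contributes $\mathcal{O}(1/(\alpha KT))$, the drift and mixing together contribute $\mathcal{O}(\alpha K L \tau_{mix})$ per round, and the estimator-error term contributes $\tfrac{\tau_{mix}}{T}\sum_{t \ge 1} 1/t = \tilde{\mathcal{O}}(\tau_{mix}\log T/T)$; balancing these with $\alpha = \Theta(1/(\tau_{mix}K\sqrt{T}))$ and absorbing the $\mathbb{E}\Vert \nabla F(x_t)\Vert^2$ cross-terms into the left-hand side (valid once $T$ is large enough) yields the claimed $\tilde{\mathcal{O}}(\tau_{mix}/\sqrt{T}) + \mathcal{O}(1/T)$ rate.
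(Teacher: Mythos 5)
Your overall architecture is the same as the paper's: a smoothness descent step on the server iterate, a decomposition of the error into a local-drift piece, an estimator-error piece ($\nu_t^i$ vs.\ $1/(\pi_R^i N)$), and a Markov-mixing piece handled by conditioning on the state $\tau$ rounds back, followed by absorbing gradient-norm cross-terms into the left-hand side. This mirrors the paper's Lemma \ref{lmm_descent4genF} (the terms $e_1$--$e_4$) combined with Corollary \ref{coro_convergence-q}. However, there is one genuine gap. In your treatment of $E_t^{\mathrm{mix}}$ you bound the lookback perturbation $\Vert x_t - x_{t-\tau}\Vert$ "by smoothness and the assumed bound on $\Vert\nabla F\Vert$." No bounded-gradient assumption is available here: Theorem \ref{thm_convergence-propAlg} assumes only Assumptions \ref{assmp_bnd-hetero} and \ref{assmp_smoothness}, and the removal of the condition $\Vert\nabla F(x)\Vert\le D$ (which Theorem \ref{thm-convergence-bias} does require) is precisely the advertised improvement of the debiased algorithm. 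Without it, $\Vert x_t-x_{t-\tau}\Vert^2$ can only be bounded by $\mathcal{O}\bigl(\gamma^2 L^{-2}\tau\sum_{l=t-\tau}^{t-1}\Vert\nabla F_w(x_l)\Vert^2+\gamma^2 L^{-2}\tau^2G^2\bigr)$ (Lemma \ref{lmm_x-seq-bnd}), and you then need the self-bounding recursion $\max_{t-\tau\le l\le t}\mathbb{E}\Vert\nabla F_w(x_l)\Vert^2\le 4\mathbb{E}\Vert\nabla F_w(x_{t-\tau})\Vert^2+16\tau^2\gamma^2G^2$ (Lemma \ref{lmm_bnd-grad}) to fold these terms back into the left-hand side under the stepsize condition $\gamma\lesssim 1/\tau$. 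Your plan as written does not contain this mechanism, and replacing it by the $D$-bound would only prove a weaker theorem.

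Two smaller points. First, you state $\mathbb{E}\Vert E_t^{\mathrm{est}}\Vert^2=\mathcal{O}\bigl(K^2\tau_{mix}(G^2+\mathbb{E}\Vert\nabla F(x_t)\Vert^2)/t\bigr)$, which implicitly factors $\mathbb{E}[\Vert\tilde\nu_t\Vert_\infty^2\Vert\nabla F(x_t)\Vert^2]$ into a product of expectations; $\tilde\nu_t$ and $x_t$ are driven by the same sampling history, so this does not hold as stated. The paper's fix is Lemma \ref{lmm_decom2exp}: split on the event $\{\Vert\tilde\nu_t\Vert_\infty>\delta\}$, use Markov's inequality together with the almost-sure lower bound $\lambda_t^i\ge a_0>0$ (which comes from irreducibility, Lemma \ref{lmm_ergodic-aug-MC}), and conclude $\mathbb{E}[\Vert\tilde\nu_t\Vert_\infty^2\Vert\nabla F(x_{t-\tau})\Vert^2]\le\tfrac{1}{16}\mathbb{E}\Vert\nabla F(x_{t-\tau})\Vert^2$ for $t$ past a burn-in; you gesture at the right ingredients but the argument needs this conditioning step to be valid. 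Second, a lookback window of length exactly $\tau_{mix}$ only brings the conditional law within constant total variation of stationarity, which leaves a non-vanishing $\mathcal{O}(G^2)$ residual in the final bound; the paper takes $\tau\ge\tau_{mix}\log(1/\delta)$ with $\delta=1/\sqrt{T}$ so that this residual becomes $\mathcal{O}(c_1^2\delta^2G^2)=\mathcal{O}(G^2/T)$, which is also where the burn-in condition $T>c^{\dag}\tau_{mix}\log\tau_{mix}$ and the logarithmic factors in $\tilde{\mathcal{O}}(\cdot)$ come from.
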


Comparing to Theorem \ref{thm-convergence-bias}, no bounded gradient assumption is needed to reach the convergence of our algorithm. Unlike the result in \cite{cho2023convergence} where clients are forced to participate in the system cyclically, our bound shown in Theorem \ref{thm_convergence-propAlg} does not grow as the number of clients increases. Particularly, for the bounds in \cite{cho2023convergence} to be non-vacuous, the total number of communication round $T$ should be proportional to the number of clients, which could be hard to satisfy in practice especially client number is super large. To prove Theorem \ref{thm_convergence-propAlg} we critically rely on the fact that the Markov chain \eqref{eq_evo-MC} is aperiodic to make analysis go through. That is to say our bound does not suit for $R=M-1$, which is the limitation of our analysis. However, since $R=M-1$ is the cyclic case, where the Markov chain follows much nicer structure (e.g. $\pi_{M-1}$ is uniform), one may be able to get a better bound \cite{cho2023convergence}.

We remark that our convergence result achieves nearly the same order of rate as Markov-sampling SGD literature \cite{beznosikov2024first,even2023stochastic} (where rates of $\mathcal{O}(\sqrt{\tau_{mix}}/\sqrt{T} + \tau_{mix}/T)$ are obtained). However, their analysis only suits for the first-order Markov chain and no debiasing results are presented, while our results generalize to high-order Markov chain and allow local updates, and further guarantee approaching unbiased solutions. It is worth noting that utilizing variance-reduced techiques may accelerate the convergence rate for Markov-sampling SGD \cite{even2023stochastic}. Then whether variance reduction can be used in our problem to design faster algorithms would be an interesting future direction.

It is worth noting that although a uniform minimum separation $R$ for all clients is placed throughout the paper, we allow each client maintains its own specific $R_i, \forall i \in [N]$. In this more general case, we can still utilize the same modeling technique as in Section \ref{sec_MC-model} where the order of the Markov chain is chosen to be an upper bound of all $R_i$'s (e.g. $\max_i R_i$). Then Theorems \ref{thm-convergence-bias} and \ref{thm_convergence-propAlg} can be obtained without any modification as the analysis stays valid for any irreducible and aperiodic Markov chain. However, Theorem \ref{thm_convergence-R} becomes tricky in this case as our proof highly relies on nice properities of the Markov chain summarized by Proposition \ref{prop_MC-properties} which now cease to hold. Therefore, more advanced mathematical tools might be needed in order to obtain similar statements as Theorem \ref{thm_convergence-R} when clients have various $R_i$'s.

\section{Numerical results} \label{sec_exp}
In this section, we provide numerical experiments to illustrate our theoretical results. In particular, we compare vanilla FedAvg with our proposed algorithm (Algorithm \ref{alg_FedAvg-MC}) under non-uniform and correlated client participation described in Section \ref{sec_PF}. For simplicity, we  partition the  $N$ clients into $M$ groups and exactly one group of clients is selected at each round to fully participate in the system. Here we choose $N=100, M=20$. Since all clients in the same group participate in the system together once being sampled, we only need to associate availability probabilities to each group, where $p_i \propto i^{-1.5}, i \in [M]$ is a long-tailed distribution. 

\paragraph{Synthetic dataset.}
We test Vanilla FedAvg and Debiasing FedAvg (Algorithm \ref{alg_FedAvg-MC}) under a synthetic dataset constructed following \cite{sun2022communication}: for each client $i$, $A_i \in \mathbb{R}^{n_i \times d}$ is the feature matrix, where $n_i$ is the number of local samples and $d$ is the feature dimension. Every entry of $A_i$ is generated by a Gaussian distribution $\mathcal{N}(0, (0.5 i)^{-2})$. We then generate $b_i \in \mathbb{R}^{n_i}$, the labels of client $i$, by first generating a reference point $\theta_i \in \mathbb{R}^d$, where $\theta_i \sim \mathcal{N}(\mu_i, I_{d})$. And $\mu_i$ is drawn from $\mathcal{N}(\alpha, 1)$ with $\alpha \sim \mathcal{N}(0,100)$. Then $b_i = A_i \theta_i + \epsilon_i$ with $\epsilon_i \sim \mathcal{N}(0, 0.25 I_{n_i})$. We set $d=20, n_i = 100, \forall i \in [N]$. And we define $f_i(x) = \frac{1}{n_i} \sum_{j=1}^{n_i} \log(\frac{1}{2}( \langle A_i[j, :], x \rangle + b_i[j] )^2 + 1)$ where $A_i[j, :]$ represents the $j$-th row of $A_i$ and $b_i[j]$ is the $j$-th entry of $b_i$. The outcomes are shown in Figures \ref{fig_fedavg_syn},\ref{fig_debias-fedavg_syn}.

\paragraph{MNIST dataset.}
We also test our proposed algorithm under the MNIST dataset. Each client maintains a three-layer fully-connected neural network for training. All learning rates are chosen to be with the order of $\mathcal{O}(10^{-3})$. In Figure \ref{fig_mnist_comparison}, we compare Debiasing FedAvg with Vanilla FedAvg and FedVARP\cite{jhunjhunwala2022fedvarp}, and Debiasing FedAvg can effectively mitigate the bias effect. Another interesting empirical observation is that increasing $R$ can fasten the speed of both Debiasing and Vanilla FedAvg (as shown by Figures \ref{fig_mnist_debias_fedavg},\ref{fig_mnist_fedavg}). This is yet not characterized by our theoretical demonstration. Here we conjecture that larger $R$ corresponds to smaller mixing time $\tau_{mix}$ and hence faster rate. We provide more detailed and intuitive discussions in Appendix \ref{apx_rate2R}.


\begin{figure}[h]
    \centering
    \begin{subfigure}{0.35\textwidth}
        \centering
        \includegraphics[width=\textwidth]{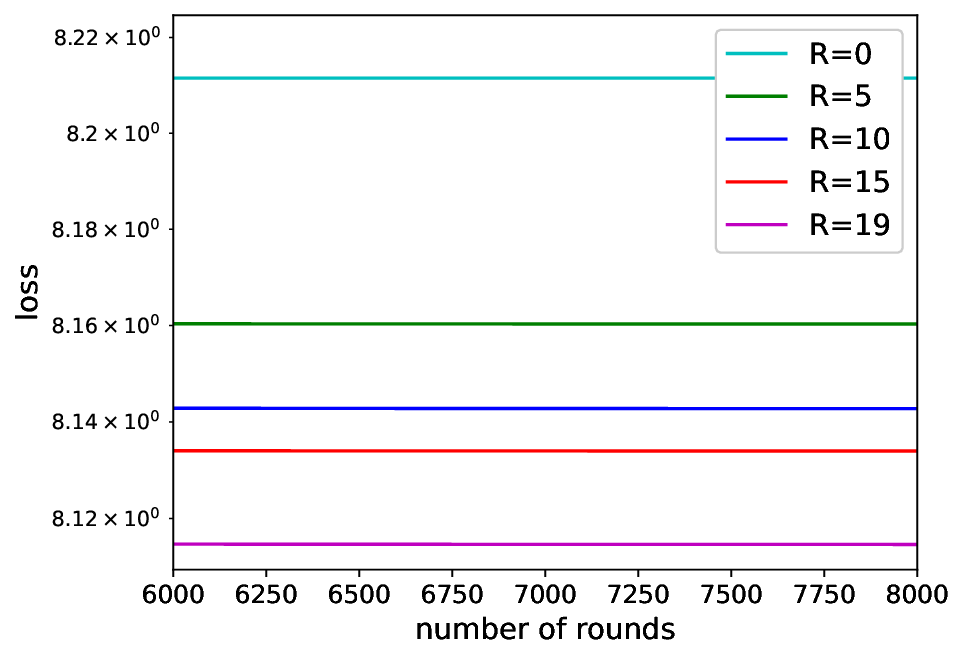}
        \caption{FedAvg under different $R$ after convergence (synthetic data)}
        \label{fig_fedavg_syn}
    \end{subfigure}
    \qquad
    \begin{subfigure}{0.35\textwidth}
        \centering
        \includegraphics[width=\textwidth]{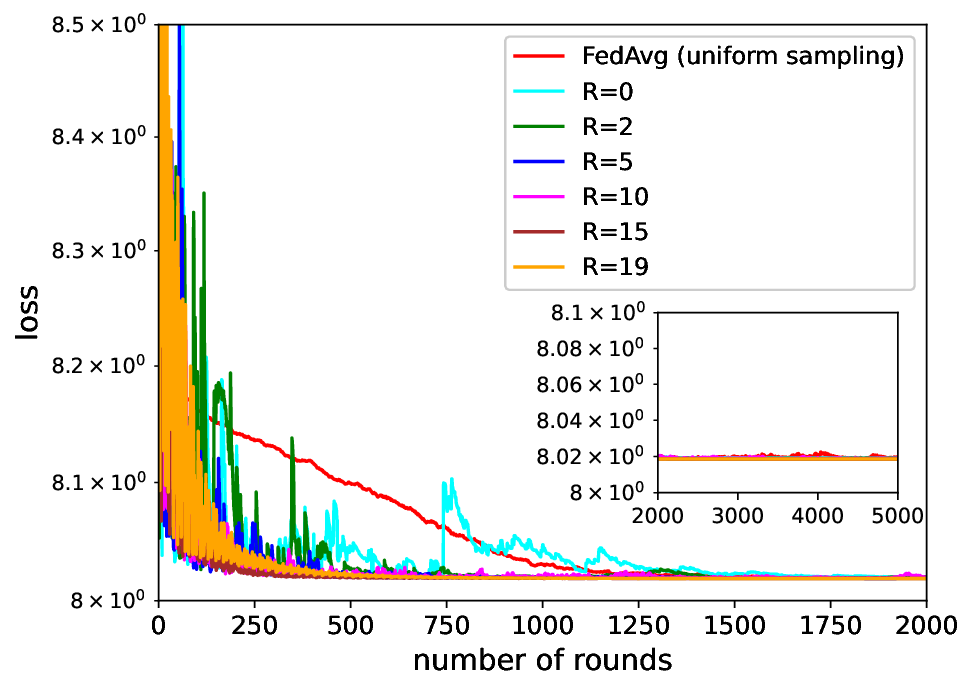}
        \caption{Debiasing FedAvg under different $R$ (synthetic data)}
        \label{fig_debias-fedavg_syn}
    \end{subfigure}
    \caption{Experiments on synthetic dataset. (a) The training loss of Vanilla FedAvg (after convergence) with different $R$ is shown. Larger $R$ leads to smaller bias. (b) Debiasing FedAvg is tested under different values of $R$, where the red line represents Vanilla FedAvg when clients are sampled under an oracle uniform distribution. The subfigure on the right shows that all curves reach unbiased objective after convergence, indicating that the asymptotic bias is effectively canceled.}
\end{figure}

\begin{figure}[h]
    \centering
    \begin{subfigure}{0.3\textwidth}
        \centering
        \includegraphics[width=\textwidth]{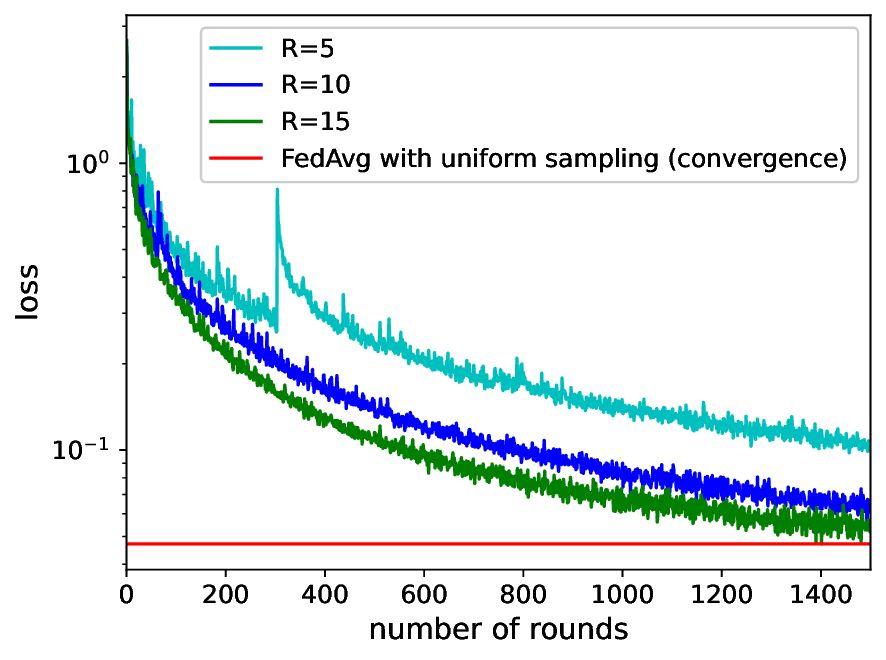}
        \caption{Debiasing FedAvg under different $R$ (MNIST)}
        \label{fig_mnist_debias_fedavg}
    \end{subfigure}
    \quad
    \begin{subfigure}{0.3\textwidth}
        \centering
        \includegraphics[width=\textwidth]{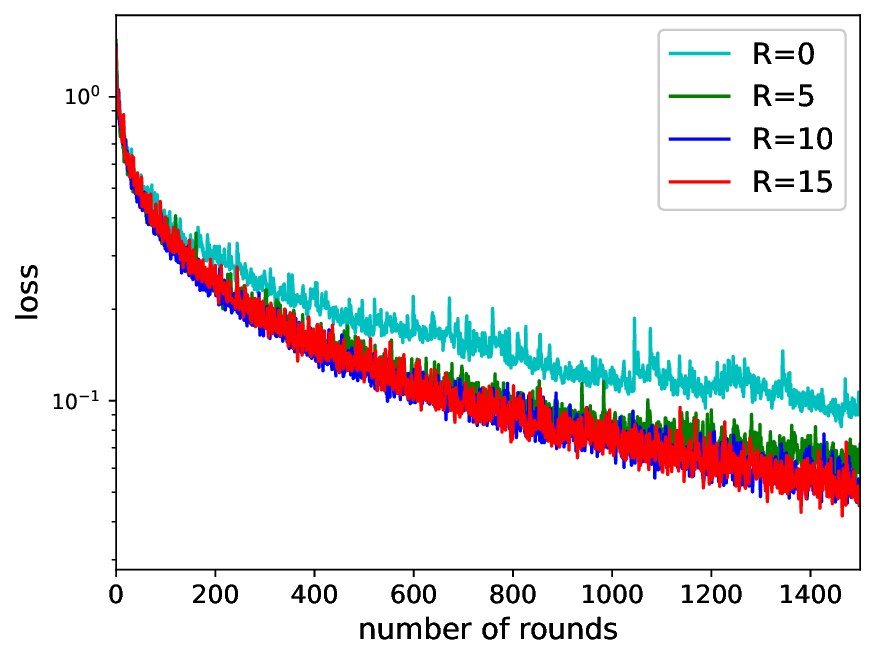}
        \caption{FedAvg under different $R$ (MNIST)}
        \label{fig_mnist_fedavg}
    \end{subfigure}
    \quad
    \begin{subfigure}{0.3\textwidth}
        \centering
        \includegraphics[width=\textwidth]{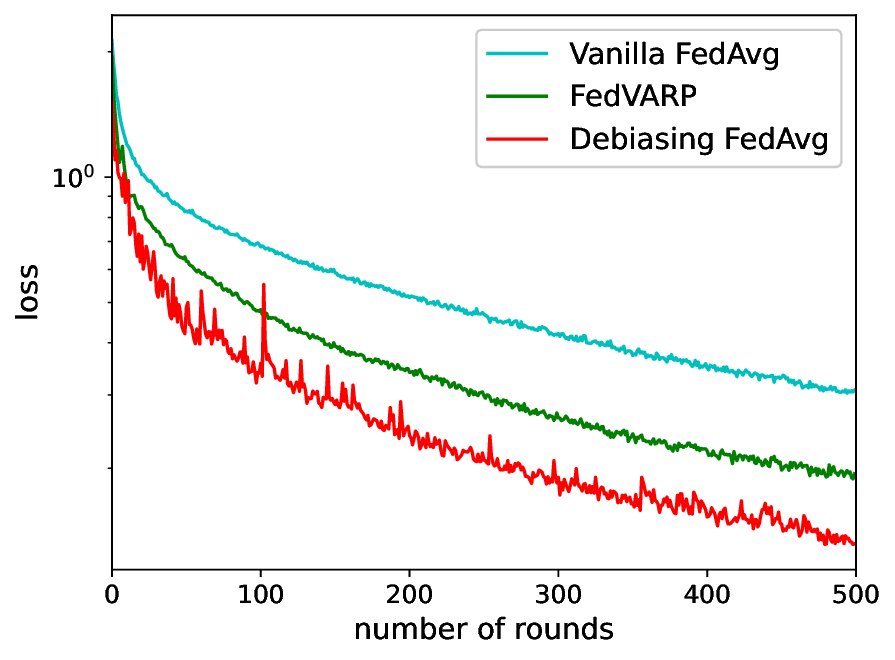}
        \caption{FedAvg, FedVARP, Debiasing FedAvg when $R=8$ (MNIST)}
        \label{fig_mnist_comparison}
    \end{subfigure}
    \caption{Experiments on MNIST. (a) The convergence of our Debiasing FedAvg under different client minimum separation $R$ configurations. The red horizontal line is the convergence value of the objective function by vanilla FedAvg when clients are sampled under an oracle uniform distribution. Our Debiasing FedAvg converges to the unbiased objective with larger $R$ converges faster.
    (b) For Vanilla FedAvg, increasing $R$ causes smaller bias. (c) When $R=8$, Vanilla FedAvg, FedVARP and Debiasing FedAvg are compared. Note that both Vanilla FedAvg and FedVARP are designed only for uniform client sampling and hence are significantly affected by bias from client participation.}
    \label{fig_exp-mnist}
\end{figure}


\section{Conclusion}
In this paper, we consider FL with non-uniform and correlated client participation, where every client must wait as least $R$ rounds (minimum separation) before participating again, and each client has their own availability probability. A high-order Markov chain is introduced to model this practical scenario. Based on this Markov-chain modeling, we are able to study the convergence performances of existing FL algorithms. Due to the effect of non-uniformity and time correlation, FL algorithms can only converge with asymptotic bias, which can be reduced by increasing minimum separation $R$ as shown by our empirical and theoretical results. Finally, we propose a debiasing algorithm for FedAvg that guarantee convergence to unbiased solutions given arbitrary non-uniformity and minimum separation $R$.

\bibliography{dist_sgd}

\clearpage
\appendix

\section{Related work} \label{apx_related-work}

\paragraph{Non-uniform \& correlated client participation.}
There is a recent surge of efforts to investigate FL with non-uniform client participation both from theoretical and empirical perspectives. Earlier work presumes that clients are sampled by the server uniformly, which guarantees the global model held by the server is an unbiased estimate as that in the full participation setting and hence allows extension of convergence results for the full-participation setting to the partial-participation setting \cite{jhunjhunwala2022fedvarp,karimireddy2019scaffold}. The above-mentioned uniform participation is, however, far from the reality as clients may have their intrinsic sampling probabilities $p_i$'s that are non-uniform due to, for example, intermittent availability resulting from practical constraints. Recent works analyzed the convergence behaviors of FL algorithms when such $p_i$'s are known as a prior or controllable \cite{wang2022arbicli,karimireddy2019scaffold,chen2020optimal,fraboni2021clustered}. However, pointed out by \cite{bonawitz2019towards,wang2021field}, client participation pattern can highly depend on the underlying system characteristics, which is thus hard to know or control. As characterized by \cite{wang2022arbicli,xiang2024efficient}, such unknown and non-uniform participation statistics causes a bias in the model updates as more frequently participating clients dominate the average update. In order to mitigate the effect of bias, \cite{patel2022towards,ribero2022federated,wang2023lightweight} introduced reweighting mechanisms combined with dynamically estimating client participation distributions. Most works aiming at analyzing non-uniform participation, however, rely on the unrealistic assumption that every client participates in the system independently, which fails to capture practical scenarios where each client's participation is influenced by others across rounds\cite{kairouz2019advances, eichner2019semi, zhu2021diurnal}. One interesting time-correlated participation pattern is that clients have to wait for at least $R$ (called minimum separation) rounds between consecutive participation \cite{mcmahan2022federated,xu2023gboard}. In particular, imposing a minimum separation constraint has been empirically shown to benefit privacy preservation in FL applications \cite{kairouz2021practical,choquette2024amplified,xu2023gboard,gboard_blogpost}. Instead, such time-correlated participation has not been fully investigated theoretically. The only work that partially captures the above case is \cite{cho2023convergence} where the clients are forced to follow a cyclic participation, which is an extreme case of very large $R$. Therefore, in this paper we study convergence performances of FL algorithms under non-uniform and correlated client participation, which provides theoretical explanations to their empirical counterparts in practice. 

\paragraph{Stochastic optimization with Markov-sampling.}
Another line of related works is stochastic gradient-based optimization under Markov-sampling. Unlike classical stochastic optimization literature where i.i.d. samples are drawn during the training process \cite{allen2016variance,allen-zhu12017katyusha,johnson2013accelerating,defazio2014saga}, many contexts, including TD-learning and reinforcement learning (RL), require to optimize the objective function by utilizing samples generated by a Markov chain \cite{tsitsiklis1996analysis,tsitsiklis1999average,bhatnagar2007incremental,sutton1999policy}. Recently, the work \cite{even2023stochastic} provided convergence guarantees for SGD under Markov-sampling when the objectives are convex, strongly convex and non-convex. Then \cite{beznosikov2024first} further proposed an accelerated method and generalized the analysis to variational inequalities. Both of them limit on the first-order Markov chains. It has been shown by literature that gradient-based methods converge to the optimal solution of the objective induced by the stationary distribution of the underlying Markov chain \cite{even2023stochastic,beznosikov2024first}. This indicates that the final solution is biased if the stationary distribution is non-uniform and existing literature cannot deal with such bias problem. In contrast, in this paper we allow higher-order Markov chains and our proposed algorithm enables the convergence to an unbiased solution without any information and constraint on the Markov chain and stationary distribution.

\section{Preliminaries of Markov chains}   \label{apx_MC}
In this section, we summarize several notions and properties of the conventional Markov chain (i.e., first-order Markov chain). We only focus on finite Markov chains, meaning the state space is finite. Note that for a finite Markov chain, we can use its transition matrix to uniquely represent it.

\begin{definition}
    Given a finite Markov chain with transition matrix $P$, we say it is irreducible if its induced graph is strongly connected, i.e., every state can be reached from every other state.
\end{definition}

Note that $[P^k]_{i,j}$ is the probability transiting from state $i$ to state $j$ with exactly $k$ steps, based on which we introduce the definition of aperiodic and periodic Markov chains.
\begin{definition}
    The period of state $i$ is the greatest common divisor (g.c.d.) of the set $\{ k \in \mathbb{N} \mid [P^k]_{i,i} > 0 \}$. If every state has period $1$ then the Markov chain is aperiodic, otherwise it is periodic.
\end{definition}
In order words, the period of state $i$ can be achieved by calculating the g.c.d. of the number of steps starting from $i$ and returning back. If the Markov chain is also irreducible, we have the following.

\begin{lemma}   \label{lmm_same-prd}
    If the Markov chain is irreducible, every state has the same period.
\end{lemma}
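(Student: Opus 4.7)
The plan is to show that any two states $i, j$ in the irreducible chain have the same period by a symmetric divisibility argument. Let $d_i$ denote the period of state $i$, defined as $d_i = \gcd\{k \geq 1 : [P^k]_{i,i} > 0\}$, and similarly define $d_j$. The strategy is to prove $d_i \mid d_j$ and then invoke symmetry to conclude $d_i = d_j$.

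First, I would exploit irreducibility. Since the chain is irreducible, there exist positive integers $m, n$ such that $[P^m]_{i,j} > 0$ and $[P^n]_{j,i} > 0$. The key algebraic observation is the path-concatenation inequality
\begin{equation}
    [P^{a+b}]_{u,w} \geq [P^a]_{u,v} [P^b]_{v,w} \nonumber
\end{equation}
for any states $u,v,w$ and nonnegative integers $a,b$, which follows directly from expanding $P^{a+b} = P^a P^b$ and dropping all but one nonnegative summand. Applying this to the concatenated walk $i \to j \to i$ yields $[P^{m+n}]_{i,i} \geq [P^m]_{i,j}[P^n]_{j,i} > 0$, so by definition of $d_i$ we obtain $d_i \mid (m+n)$.

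Next, I would transfer information from state $j$ back to state $i$. For any $k \geq 1$ with $[P^k]_{j,j} > 0$, concatenate the three walks $i \to j$ of length $m$, $j \to j$ of length $k$, and $j \to i$ of length $n$. The path-concatenation inequality (applied twice) gives $[P^{m+k+n}]_{i,i} > 0$, hence $d_i \mid (m+k+n)$. Combining with $d_i \mid (m+n)$ yields $d_i \mid k$. Therefore $d_i$ divides every element of the set $\{k \geq 1 : [P^k]_{j,j} > 0\}$, and consequently divides its greatest common divisor, i.e., $d_i \mid d_j$.

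The symmetric argument (swapping the roles of $i$ and $j$) gives $d_j \mid d_i$, so $d_i = d_j$. Since $i$ and $j$ were arbitrary, every state has the same period. There is no real obstacle here beyond careful bookkeeping of divisibility; the only step that requires any care is ensuring that the integers $m, n$ coming from irreducibility are used consistently in both the $d_i \mid (m+n)$ step and the $d_i \mid (m+k+n)$ step, so that the subtraction leaves a clean divisibility $d_i \mid k$.
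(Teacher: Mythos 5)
Your proof is correct. The paper itself states this lemma without proof, treating it as a classical fact from Markov chain theory (cf.\ the standard references such as \cite{levin2017markov}), and your argument is exactly the standard one: the path-concatenation inequality $[P^{a+b}]_{u,w} \ge [P^a]_{u,v}[P^b]_{v,w}$ gives $d_i \mid (m+n)$ and $d_i \mid (m+k+n)$ for every $k$ in the return set of $j$, hence $d_i \mid d_j$, and symmetry finishes. Nothing is missing.
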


Next important result states the convergence of the Markov chain.

\begin{lemma}   \label{lmm_MC-convergence}
    Suppose a finite Markov chain with transition matrix $P$ is irreducible and aperiodic. Then, there exist some $\rho \in (0,1)$ and $C > 0$ such that 
    $$
        \max_{x} \Vert P^k(x, \cdot) - \pi \Vert_{TV} \le C \rho^k
    $$
    where $\pi$ is the unique, strictly positive stationary distribution; $\Vert \cdot \Vert_{TV}$ denotes the total variation.
\end{lemma}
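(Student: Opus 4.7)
The plan is to prove this classical convergence result by first upgrading irreducibility plus aperiodicity into a Doeblin-type uniform minorization condition, and then applying a coupling (or equivalently, a TV contraction) argument to get the geometric rate.

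First I would show that there exists a finite $k_0 \in \mathbb{N}$ and $\epsilon > 0$ such that $[P^{k_0}]_{x,y} \geq \epsilon$ for all states $x,y$. By the previous lemma on common period, every state has period $1$, so for each fixed state $i$ the set $S_i := \{k : [P^k]_{i,i} > 0\}$ is additively closed and has $\gcd(S_i) = 1$. A standard number-theoretic argument (Schur/Sylvester--Frobenius) then gives $n_i$ such that $S_i \supseteq \{n_i, n_i+1, n_i+2, \dots\}$. By irreducibility, for each ordered pair $(x,y)$ there is some $\ell_{x,y}$ with $[P^{\ell_{x,y}}]_{x,y} > 0$. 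Choosing $k_0$ larger than $\max_x n_x + \max_{x,y} \ell_{x,y}$ and using the Chapman--Kolmogorov decomposition $[P^{k_0}]_{x,y} \geq [P^{k_0 - \ell_{x,y}}]_{x,x} [P^{\ell_{x,y}}]_{x,y}$ guarantees $[P^{k_0}]_{x,y} > 0$ for all $x,y$. Finiteness of $\mathcal{X}$ then lets me set $\epsilon := \min_{x,y}[P^{k_0}]_{x,y} > 0$.

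Next I would derive the minorization and the contraction. Writing $\nu$ for the uniform distribution on $\mathcal{X}$, the bound $[P^{k_0}]_{x,y} \geq \epsilon$ gives $P^{k_0}(x,\cdot) \geq (\epsilon |\mathcal{X}|)\,\nu(\cdot)$, i.e.\ a Doeblin condition with mass $\beta := \epsilon|\mathcal{X}|$. From this I construct a coupling of two copies of the chain started at arbitrary initial distributions $\mu_1,\mu_2$: every $k_0$ steps, toss an independent coin that comes up heads with probability $\beta$; on heads, sample both chains jointly from $\nu$ so they coalesce; on tails, sample each from the residual measure. Because coalescence is permanent, the coupling time $T_{\mathrm{couple}}$ satisfies $\mathbb{P}(T_{\mathrm{couple}} > n k_0) \leq (1-\beta)^n$. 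The coupling inequality $\|P^{n k_0}(\mu_1,\cdot) - P^{n k_0}(\mu_2,\cdot)\|_{TV} \leq \mathbb{P}(T_{\mathrm{couple}} > n k_0)$ then yields $\|P^{n k_0}(\mu_1,\cdot) - P^{n k_0}(\mu_2,\cdot)\|_{TV} \leq (1-\beta)^n$ for all $\mu_1,\mu_2$.

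Finally I would bootstrap this into the stated conclusion. The contraction in TV shows that $\{P^{n k_0}(x,\cdot)\}_n$ is Cauchy uniformly in $x$, so it converges to some probability measure $\pi$; passing the identity $\mu P = \mu$'s analogue through the limit shows $\pi P = \pi$, so $\pi$ is a stationary distribution, and it is strictly positive because any limit of $P^{n k_0}(x,\cdot) \geq \beta \nu$ is bounded below by $\beta \nu$. Uniqueness follows from taking $\mu_2 = \pi$ in the contraction. For a general step count $k$, write $k = n k_0 + r$ with $0 \leq r < k_0$, use the non-expansivity of $P$ in TV, and obtain $\|P^k(x,\cdot) - \pi\|_{TV} \leq (1-\beta)^{\lfloor k/k_0 \rfloor}$, which is of the form $C\rho^k$ for $\rho := (1-\beta)^{1/k_0} \in (0,1)$ and $C := (1-\beta)^{-1}$.

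The main obstacle is the first step: cleanly passing from ``aperiodic plus irreducible'' to ``some $P^{k_0}$ is entry-wise positive,'' since it quietly relies on the number-theoretic fact that a numerical semigroup with $\gcd = 1$ is cofinite in $\mathbb{N}$. Once that step is in hand, the Doeblin minorization and coupling argument are standard and straightforward.
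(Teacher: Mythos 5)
Your proof is correct. Note, however, that the paper does not prove this lemma at all: it is stated in Appendix B as a classical preliminary (the standard convergence theorem for finite irreducible aperiodic chains, cf.\ Levin--Peres), so there is no in-paper argument to compare against. What you have written is the standard self-contained route --- the Sylvester--Frobenius/numerical-semigroup step to get a strictly positive power $P^{k_0}$, the Doeblin minorization $P^{k_0}(x,\cdot)\ge \beta\nu$, the coalescing coupling giving $\|P^{nk_0}(\mu_1,\cdot)-P^{nk_0}(\mu_2,\cdot)\|_{TV}\le(1-\beta)^n$, and the bootstrap to existence, uniqueness, strict positivity of $\pi$ and the rate $C\rho^k$ with $\rho=(1-\beta)^{1/k_0}$ --- and each step is sound. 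The only cosmetic remark is the degenerate case $\beta=1$ (all rows of $P^{k_0}$ equal), where your $C=(1-\beta)^{-1}$ is undefined; there the chain is exactly stationary after $k_0$ steps and any $\rho\in(0,1)$ with a suitable $C$ works, so the statement still holds.
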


Lemma \ref{lmm_MC-convergence} implies that starting from any initial distribution, the Markov chain converges to the stationary distribution at linear rate. Without confusion, we denote $d_{TV}(P^k, \mathbf{1}\pi^T) = \max_x \Vert P^k(x,\cdot) - \pi \Vert_{TV}$. Note that $d_{TV}(P^k, \mathbf{1}\pi^T) = \frac{1}{2}\Vert P^k - \mathbf{1}\pi^T \Vert_{\infty}$. Then, we define the mixing time of the chain.

\begin{definition}
    Given any $\epsilon > 0$, the mixing time $t_{mix}(\epsilon)$ is defined as $t_{mix}(\epsilon):= \inf \{ l \ge 1 \mid d_{TV}(P^l, \mathbf{1}\pi^T) \le \epsilon \}$. Conventionally, we denote $\tau_{mix} = t_{mix}(1/4)$.
\end{definition}

\begin{lemma}   \label{lmm_mixing-time-recurr}
    We have the following statements:
    \item[(1).] $d_{TV}( P^{t+1}, \mathbf{1}\pi^T) \le d_{TV}( P^{t}, \mathbf{1}\pi^T)$, $\forall t\ge 0$. 
    \item[(2).] For $k \ge 2$, $t_{mix}(2^{-k}) \le (k-1)\tau_{mix}$. 
    \item[(3).] Moreover,
    $$
        \sum_{k=0}^{T} d_{TV}( P^k, \mathbf{1}\pi^T) \le c_0 \tau_{mix}, ~~ \forall T \ge 0
    $$
    for some $c_0 > 0$.
\end{lemma}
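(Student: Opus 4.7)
The plan is to prove the three parts sequentially, using standard Markov-chain coupling techniques; throughout, I will abbreviate $d(t):=d_{TV}(P^{t},\mathbf{1}\pi^{T})$.

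For part (1), the core fact is that any stochastic matrix is non-expansive in total variation: for any signed measure $\nu$ on the state space with $\nu(\mathcal{X})=0$, writing the Jordan decomposition $\nu=\nu^{+}-\nu^{-}$ and using that $\nu^{\pm}P$ are non-negative measures of the same total mass as $\nu^{\pm}$ gives $\Vert \nu P\Vert_{TV}\le \Vert \nu\Vert_{TV}$. Applying this with $\nu=P^{t}(x,\cdot)-\pi$ (zero-mass since both are probability measures) and using $\pi P=\pi$ yields the pointwise inequality $\Vert P^{t+1}(x,\cdot)-\pi\Vert_{TV}\le \Vert P^{t}(x,\cdot)-\pi\Vert_{TV}$; taking the maximum over $x$ gives (1).

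For part (2), I introduce the coupling distance $\bar{d}(t):=\max_{x,y}\Vert P^{t}(x,\cdot)-P^{t}(y,\cdot)\Vert_{TV}$ and record three standard facts: (a) triangle inequality through $\pi$ yields $d(t)\le \bar{d}(t)\le 2d(t)$; (b) a grand-coupling argument, in which two chains from $x$ and $y$ are run for $s$ steps and then propagated jointly for $t$ more steps, yields the submultiplicativity $\bar{d}(s+t)\le \bar{d}(s)\bar{d}(t)$; and (c) the mixed inequality $d(s+t)\le d(s)\bar{d}(t)$, proved by writing $P^{s+t}(x,\cdot)-\pi=(P^{s}(x,\cdot)-\pi)P^{t}$ and applying the Jordan-decomposition coupling estimate to the signed-measure factor. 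Chaining (c) with iterated (b) and invoking $d(\tau_{mix})\le 1/4$ directly gives
\begin{equation*}
    d(k\tau_{mix}) \;\le\; d(\tau_{mix})\,\bar{d}(\tau_{mix})^{k-1} \;\le\; \tfrac{1}{4}\cdot\bigl(2\cdot\tfrac{1}{4}\bigr)^{k-1} \;=\; 2^{-(k+1)},
\end{equation*}
so $d((k-1)\tau_{mix})\le 2^{-k}$ for $k\ge 3$, which is $t_{mix}(2^{-k})\le (k-1)\tau_{mix}$; the $k=2$ case is the definition of $\tau_{mix}$.

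For part (3), I block-sum, using monotonicity (part 1) inside each block of length $\tau_{mix}$ and the geometric decay just derived across blocks:
\begin{equation*}
    \sum_{k=0}^{T} d(k) \;\le\; \sum_{j=0}^{\infty}\sum_{l=0}^{\tau_{mix}-1} d(j\tau_{mix}+l) \;\le\; \tau_{mix}\sum_{j=0}^{\infty} d(j\tau_{mix}) \;\le\; \tau_{mix}\Bigl(1+\sum_{j=1}^{\infty}2^{-(j+1)}\Bigr) \;=\; \tfrac{3}{2}\tau_{mix},
\end{equation*}
so any $c_{0}\ge 3/2$ suffices. The only real subtlety, beyond piecing together these standard ingredients, is using the mixed inequality $d(s+t)\le d(s)\bar{d}(t)$ rather than the naive $d(s+t)\le \bar{d}(s)\bar{d}(t)$ in (2); the naive route would only give $t_{mix}(2^{-k})\le k\tau_{mix}$, and exploiting the sharper bound $d(\tau_{mix})\le 1/4$ directly (instead of routing through $\bar{d}(\tau_{mix})\le 1/2$) is what saves the extra factor of $\tau_{mix}$ needed for the $(k-1)\tau_{mix}$ form stated in the lemma. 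Once (2) is in hand, (1) and (3) are routine.
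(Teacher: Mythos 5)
Your proof is correct, and for part (3) it follows the same basic strategy as the paper — monotonicity of $d_{TV}(P^t,\mathbf{1}\pi^T)$ within blocks plus geometric decay across blocks — but the details differ in ways worth noting. The paper simply cites Levin--Peres for parts (1) and (2) and only writes out part (3), partitioning the sum into the variable-length blocks $[t_{mix}(2^{-k})+1,\,t_{mix}(2^{-(k+1)})]$ and bounding each block length by $k\tau_{mix}$ via part (2), which yields $c_0=d_{TV}(P,\mathbf{1}\pi^T)+2$; you instead use uniform blocks of length $\tau_{mix}$ together with the explicit decay $d(j\tau_{mix})\le 2^{-(j+1)}$, which gives the cleaner constant $c_0=3/2$. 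Your supplied proofs of (1) and (2) are standard and correct, and your observation about (2) is a genuine contribution: the textbook chain $d(k\tau_{mix})\le\bar{d}(\tau_{mix})^{k}\le 2^{-k}$ only yields $t_{mix}(2^{-k})\le k\tau_{mix}$, whereas the lemma as stated claims $(k-1)\tau_{mix}$, and your mixed inequality $d(s+t)\le d(s)\,\bar{d}(t)$ (applied as $d((k-1)\tau_{mix})\le \tfrac14\cdot 2^{-(k-2)}=2^{-k}$) is exactly what is needed to close that gap — a point the paper glosses over by citation. Since part (3) only needs the weaker $k\tau_{mix}$ bound anyway (the sum $\sum_k k2^{-k}$ still converges), nothing downstream is at risk, but your version is the one that actually substantiates the lemma as written.
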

\begin{proof}
    The first two claims are shown in \cite{levin2017markov}.
    To see the third claim, we note that
    \begin{align*}
        \sum_{k=0}^{T} d_{TV}(P^k, \mathbf{1}\pi^T) &\le  \sum_{k=0}^{\infty} d_{TV}(P^k, \mathbf{1}\pi^T) \\
        &\le \sum_{l=0}^{\tau_{mix}} d_{TV}(P^l, \mathbf{1}\pi^T) + \sum_{k=2}^{\infty} \sum_{l=t_{mix}(2^{-k})+1}^{t_{mix}(2^{-(k+1)})} d_{TV}( P^l, \mathbf{1}\pi^T) \\
        &\le d_{TV}(P, \mathbf{1}\pi^T) \tau_{mix} + \sum_{k=2}^{\infty} (t_{mix}(2^{-(k+1)}) - t_{mix}(2^{-k})) 2^{-k}  \\
        &\le d_{TV}(P, \mathbf{1}\pi^T) \tau_{mix} + \sum_{k=2}^{\infty} k 2^{-k} \tau_{mix}  \\
        &\le d_{TV}(P, \mathbf{1}\pi^T) \tau_{mix} + 2 \tau_{mix}
    \end{align*}
    which completes the proof with $c_0 = d_{TV}(P, \mathbf{1}\pi^T) + 2$.
\end{proof}

    

\section{Proof of Lemma \ref{lmm_ergodic-aug-MC}}   \label{apx_pf-MC}

    It is obvious that the Markov chain is irreducible in the sense that all ordered sequences $(\mathcal{I}_1,\dots,\mathcal{I}_R)$ can be observed due to every client has strictly positive probability to be selected. To see that it is aperiodic for $R \le M-2$, we only need to show that starting from the state $(\mathcal{I}_1, \dots, \mathcal{I}_R)$ where $\mathcal{I}_k = ((k-1)B+1,\dots, kB), k=1,\dots,R$, both $R+1$ steps and $R+2$ steps can be possibly taken such that the first return happens, which implies aperiodicity. This is because if a Markov chain is irreducible, all the states have the same period by Lemma \ref{lmm_same-prd}. Then, consider the following two constructed sequence.
    
    Let $h_1 = (\mathcal{I}_1,\dots,\mathcal{I}_R,\mathcal{I}_{R+1},\mathcal{I}_1,\dots,\mathcal{I}_R)$ for state $\mathcal{I}_{R+1}=(RB + 1, \dots, (R+1)B)$, where the length of $h_1$ is $2R + 1$. Denote $h_1[k]$ as the entry at the $k$-th position. We construct the sequence $\{Y_t, Y_{t+1}, \dots, Y_{t+R} \}$ as $Y_{t+k-1} = (h_1[k ~ \mathrm{mod} ~ (2R+1)], \dots, h_1[(k + R-1) ~ \mathrm{mod} ~ (2R+1)]), k=1,\dots,2R+1$, i.e., starting from $(\mathcal{I}_1,\dots,\mathcal{I}_R)$ exactly $R+1$ steps are taken to firstly return. Similar to the definition of $h_1$, let $h_2 = (\mathcal{I}_1,\dots,\mathcal{I}_R,\mathcal{I}_{R+1},\mathcal{I}_{R+2},\mathcal{I}_1,\dots,\mathcal{I}_R)$ with its length $2R + 2$ and state $\mathcal{I}_{R+2} = ((R+1)B+1, (R+2)B)$. We then construct the sequence $\{Y_t,\dots,Y_{t+R+1} \}$ as $Y_{t+k-1} = (h_2[k ~ \mathrm{mod} ~ (2R+2)],\dots, h_2[(k+R-1) ~ \mathrm{mod} ~ (2R+2)])$, $k=1,\dots,2R+2$, which then suggests exactly $R+2$ steps are required to return back to $(\mathcal{I}_1,\dots, \mathcal{I}_R)$. Combining these two cases leads to the Markov chain is aperiodic for any $R \le M-2$.

\section{Proofs of Proposition \ref{prop_unif-p} and Theorem \ref{thm_convergence-R}}  \label{apx_convergence-R}

\subsection{Proof of Theorem \ref{thm_convergence-R}}
    Let us first consider the case when $B=1$ and given $p_1 > 0$, $p_i = \frac{1-p_1}{N-1}, \forall i=2,\dots,N$. Then, for any $0 < R \le N-1$ and any $(j_0,\dots,j_{R-1})$, pick an arbitrary $j_R \in \{ j_0,\dots,j_{R-1} \}^c$. By denoting $b_R = b(P_R[\cdot, (j_0, \dots, j_{R-1})])$, $b_{R+1} = b(P_{R+1}[\cdot, (j_0, \dots, j_{R})])$ (which are the column sums for each column of $P_R$ and $P_{R+1}$, respectively) and letting $S_R := \{j_0, \dots, j_{R-1} \}$, $S_{R+1} := \{ j_0, \dots, j_R \}$ for notation simplicity. By observing that when $\pi_R$ is exactly the uniform distribution, the sum of $P_R$ for each column is exactly one, we then tend to prove that the column sum of $P_R$ asymptotically approaches one as $R$ increases. We have four cases. 

    Case I: $j_0 = \{ 1 \}$. Then, for any $0 \le R \le N-2$, utilizing last two properties in Proposition \ref{prop_MC-properties},
    \begin{align*}
        b_{R+1} - b_{R} &= p_1 \sum_{ k \in S_{R+1}^c} (p_1 + \sum_{i \in S_{R+1}^c} p_i - p_k)^{-1} - p_1 \sum_{ k \in S_{R}^c} (p_1 + \sum_{i \in S_{R}^c} p_i - p_k)^{-1}  \\
        &= p_1 \sum_{k \in S_{R+1}^c} \big(p_1 + \frac{1-p_1}{N-1}(N-R-2) \big)^{-1} - p_1 \sum_{k \in S_R^c} \big(p_1 + \frac{1-p_1}{N-1}(N-R-1) \big)^{-1}  \\
        &= p_1 (N - R -1)\big(p_1 + \frac{1-p_1}{N-1}(N-R-2) \big)^{-1} - p_1 (N-R)\big(p_1 + \frac{1-p_1}{N-1}(N-R-1) \big)^{-1} 
    \end{align*}
    Let $r = N - R - 1$. We simply $b_R$ as 
    $$
        b_R = \frac{p_1 r}{p_1 + \frac{1-p_1}{N-1} (r-1)} = \frac{p_1(N-1)}{1-p_1} + \frac{p_1 \big( 1 - \frac{p_1(N-1)}{1-p_1} \big)}{p_1 + \frac{1-p_1}{N-1}(r-1)}
    $$
    Then,
    \begin{align*}
        b_{R+1}-b_R &= p_1\big( 1 - \frac{p_1(N-1)}{1-p_1} \big) \left( \frac{1}{p_1 + \frac{1-p_1}{N-1}(r-1)} - \frac{1}{p_1 + \frac{1-p_1}{N-1}r} \right)  \\
        &=\frac{p_1(1-p_1)}{N-1}\big( 1 - \frac{p_1(N-1)}{1-p_1} \big) \big( p_1 + \frac{1-p_1}{N-1}(r-1)\big)^{-1} \big( p_1 + \frac{1-p_1}{N-1}r \big)^{-1}
    \end{align*}
    which is strictly positive for $p_1 < 1/N$ for all $0 \le R \le N-2$.

    Case II: $\{ 1 \} \in S_{R+1}^c$. Then, we obtain $p_{j_0} = \frac{1-p_1}{N-1}$ and hence
    \begin{align*}
        b_R / p_{j_0} &= (p_{j_0} + \frac{1 - p_1}{N-1}(N-R-1))^{-1} + (N-R-1)(p_1 + \frac{1-p_1}{N-1}(N-R-1))^{-1}  \\
        &= \frac{N-1}{(1-p_1)(r+1)} + r (p_1 + \frac{1-p_1}{N-1}r)^{-1}  \\
        &= \frac{N-1}{(1-p_1)(r+1)} + \frac{N-1}{1-p_1} - \frac{p_1(N-1)}{1-p_1}\frac{1}{p_1 + \frac{1-p_1}{N-1}r}
    \end{align*}
    where we let $r = N-R-1$. Then, denoting $\bar{p} = \frac{1 - p_1}{N-1}$ and $\alpha = p_1 / \bar{p}$ yields
    \begin{align*}
        (b_{R+1} - b_R) / p_{j_0} &= \frac{1}{\bar{p}} \frac{1}{r(r+1)} - \frac{p_1}{\bar{p}^2( r + \alpha - 1)(r + \alpha)}  \\
        &= \frac{(r+\alpha)(r+\alpha-1) - \alpha r (r+1)}{\bar{p} r(r+1)(r+\alpha)(r+\alpha-1)}   \\
        &= \frac{(1 - \alpha)r^2 + (\alpha-1)r + \alpha(\alpha - 1)}{\bar{p} r(r+1)(r+\alpha)(r+\alpha-1)}  \\
        &= \frac{(1 - \alpha)(r^2 - r - \alpha)}{\bar{p} r(r+1)(r+\alpha)(r+\alpha-1)} .
    \end{align*}
    Note that when $p_1 < 1 / N$, $\alpha < 1$, which indicates $b_{R+1} - b_R > 0, \forall 0 \le R \le N-3$ by observing $r^2 - r - \alpha \ge 0$. Moreover, note that $b_R > 1, \forall R \le N-2$ in this case by
    $$
        b_R = \frac{1}{r+1} + 1 - \frac{\alpha}{r + \alpha} = \frac{(1-\alpha)r}{(r+1)(r+\alpha)} + 1 > 1
    $$
    for $\alpha < 1$. And a straightforward calculation gives $b_{N-2} < \frac{3}{2}$, which then indicates $|b_R - 1| < \frac{1}{2}, \forall R \le N-1$.

    Case III: $\{ 1 \} \in S_{R}^c$ and $\{ 1 \} \notin S_{R+1}^c$. In this case, $p_{j_0} = \frac{1-p_1}{N-1} = \bar{p}$. Then, a simple calculation gives
    $$
        (b_{R+1} - b_R) / p_{j_0} = \frac{1}{\bar{p}} \frac{(\alpha - 1)r}{(r+1)(r+\alpha)} < 0
    $$
    when $p_1 < 1/N$. 
    
    Case IV: $\{ 1 \} \notin S_R^c$. Then, all the clients are available in both $S_R^c$ and $S_{R+1}^c$ have availability probability $\bar{p}$. Then, it is obvious that $b_R = 1, \forall 0 \le R \le N-1$.

    For Cases I, III and IV, we conclude that when $p_1 < 1/N$ and $p_i = \frac{1-p_1}{N-1}, i=2,\dots,N$, $|b_{R+1} - 1| < |b_{R} - 1|, \forall 0 \le R \le N-2$ by further noting that $b_{N-1} = 1$. By Case II, we then have all $|b_R - 1|$ converges to $[0, 0.5]$ as $R$ increases. Observe $b_{N-1} = 1$ corresponds to the case that $\zeta_{N-1}$ is exactly the uniform distribution and so is $\pi_{N-1}$. This indicates that $\pi_R$ converges to some neighborhood of the uniform distribution $\frac{1}{N} \mathbf{1}_N$. In order to characterize this neighborhood, we turn to carefully analyze Case II, i.e., $|b_R - 1| < 0.5$. Noting that Case II corresponds to at most $1 - R / N$ portion of columns in $P_R$ and so does $\pi_R$, therefore the neighborhood is characterized by $\{ \pi \mid \Vert \pi - \frac{1}{N} \mathbf{1}_N \Vert_1 = \mathcal{O}(1/N) \}$.
    
    Next, in order to prove the statement, we perturb each $p_i = \frac{1-p_1}{N-1}, i=2,\dots,N$ by some scalar $\epsilon_i$ such that $\sum_{i=2}^N \epsilon_i = 0$. Note that $b_{R+1} - b_R$ is continuous in $(\epsilon_2, \dots, \epsilon_N)$ and so is $\pi_R$, which then implies that there exists some positive $\Delta > 0$ such that $b_{R+1} - b_R$ preserves the original properties as before the perturbation is added for all $|\epsilon_i| \le \Delta$. Therefore, we achieve the statement that $\pi_R$ converges to the neighborhood $\{ \pi \mid \Vert \pi - \frac{1}{N} \mathbf{1}_N \Vert_1 = \mathcal{O}(1/N) \}$ when $B=1$. Obtaining the statement for $B > 1$ follows the same technique by noting that we can always calculate the equivalent $\tilde{p}_i$ for each batch with size $B$. Specifically, given a batch of clients, say $\mathcal{B}_i$, then $\tilde{p}_i = \sum_{j \in \mathcal{B}_i} p_j / C$ with suitable normalization constant $C$ and we can then obtain the convergence of $\pi_R$ to a neighborhood of the uniform distribution by similar development.

\subsection{Proof of Proposition \ref{prop_unif-p}}
The proof of Proposition \ref{prop_unif-p} is straightforward by observing that $b_R = 1, \forall R$ when $p_i = 1/ N, \forall i \in [N]$. Then $\mathbf{1}^T P_R = \mathbf{1}^T, \forall R$ which indicates $\pi_R$ is always the uniform distribution.

\section{Intermediate Lemmas}  \label{apx_tech-lmms}
In this section, we present some useful intermediate results under the following generalized setting: we consider a general global objective function defined as $F_w (x) := \sum_{i=1}^N w_i f_i(x)$ where $\sum_{i=1}^N w_i = 1$ and $w_i \ge 0, \forall i \in [N]$. And we consider the following local update 
\begin{equation}    \label{eq_gen-localupd}
    x_{t,k+1}^i = x_{t,k}^i - \alpha q_t^i \nabla f_i(x_{t,k}^i)
\end{equation}
where $q_t^i = \frac{w_i}{y_t^i}$ for some positive sequence $y_t^i$. Note that the above update \eqref{eq_gen-localupd} is a generalized version of Algorithm \ref{alg_FedAvg-MC}.
Then we have the following useful lemmas when forcing the update \eqref{eq_gen-localupd}.

\begin{lemma}   \label{lmm_grad-diff}
    Under Assumption \ref{assmp_bnd-hetero}, we have for any $x$
    \begin{align*}
        \Vert \nabla F_w(x) - \nabla F(x) \Vert &\le G \\
        \Vert \nabla f_i(x) - \nabla F_w(x) \Vert &\le 2G, ~ \forall i \in [N].
    \end{align*}
\end{lemma}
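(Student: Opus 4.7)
The plan is to reduce both inequalities directly to Assumption \ref{assmp_bnd-hetero} via the triangle inequality, exploiting that $F_w$ is a convex combination of the $f_i$'s (since $w_i \ge 0$ and $\sum_i w_i = 1$).

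For the first inequality, I would first rewrite the difference as a convex combination of per-client heterogeneity terms. Specifically, since $\sum_i w_i = 1$, we have
\begin{equation*}
    \nabla F_w(x) - \nabla F(x) = \sum_{i=1}^N w_i \nabla f_i(x) - \nabla F(x) = \sum_{i=1}^N w_i \bigl( \nabla f_i(x) - \nabla F(x) \bigr).
\end{equation*}
Applying the triangle inequality and then Assumption \ref{assmp_bnd-hetero} to each term yields
\begin{equation*}
    \Vert \nabla F_w(x) - \nabla F(x) \Vert \le \sum_{i=1}^N w_i \Vert \nabla f_i(x) - \nabla F(x) \Vert \le \sum_{i=1}^N w_i G = G,
\end{equation*}
which is the desired bound.

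For the second inequality, I would insert $\pm \nabla F(x)$ and apply the triangle inequality:
\begin{equation*}
    \Vert \nabla f_i(x) - \nabla F_w(x) \Vert \le \Vert \nabla f_i(x) - \nabla F(x) \Vert + \Vert \nabla F(x) - \nabla F_w(x) \Vert \le G + G = 2G,
\end{equation*}
where the first term is bounded directly by Assumption \ref{assmp_bnd-hetero} and the second by the inequality just established.

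Both statements are essentially immediate consequences of the triangle inequality combined with the convex-combination structure of $F_w$, so there is no real obstacle in the proof; the only point worth stating explicitly is that nonnegativity of the weights $w_i$ is what permits pulling $\Vert \cdot \Vert$ inside the sum in the first step.
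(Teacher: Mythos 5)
Your proof is correct and follows exactly the same route as the paper's: rewrite $\nabla F_w(x) - \nabla F(x)$ as the convex combination $\sum_i w_i(\nabla f_i(x) - \nabla F(x))$, apply the triangle inequality and Assumption \ref{assmp_bnd-hetero} for the first bound, then insert $\pm\nabla F(x)$ for the second. No differences worth noting.
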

\begin{proof}
    Note that Assumption \ref{assmp_bnd-hetero} implies
    \begin{align*}
        \Vert \nabla F_w(x) - \nabla F(x) \Vert &= \Vert \sum_{i=1}^N w_i (\nabla f_i(x) - \nabla F(x)) \Vert \\
        &\le \sum_{i=1}^N w_i \Vert \nabla f_i(x) - \nabla F(x) \Vert \\
        &\le G \sum_{i=1}^N w_i = G.
    \end{align*}
    Then, for any $i \in [N]$
    $$
        \Vert \nabla f_i(x) - \nabla F_w(x) \Vert \le \Vert \nabla f_i(x) - \nabla F(x) \Vert + \Vert \nabla F_w(x) - \nabla F(x) \Vert \le 2G.
    $$
\end{proof}

\begin{lemma}   \label{lmm_drift-bnd}
    Given any $t$, we have $\Vert x^i_{t,k} - x_t \Vert^2 \le \gamma^2 L^{-2} \Vert \nabla F_w(x_t) \Vert^2 + 4\gamma^2 L^{-2} G^2$, $\forall k=0,\dots, K$, when $\alpha \le \min \left\{\frac{\gamma}{8KL}, \frac{\gamma}{8KL q_i^t} \right\}$ and $\gamma \le 1/3$.
\end{lemma}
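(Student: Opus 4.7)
\textbf{Proof proposal for Lemma~\ref{lmm_drift-bnd}.}

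The plan is to run the standard local-drift argument by induction on $k$, using the recursion $x^i_{t,k+1} - x_t = (x^i_{t,k} - x_t) - \alpha q^i_t \nabla f_i(x^i_{t,k})$ and controlling $\nabla f_i(x^i_{t,k})$ via $L$-smoothness and Lemma~\ref{lmm_grad-diff}. The base case $k=0$ is trivial since $x^i_{t,0}=x_t$.

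For the inductive step, I would start from
\[
\Vert x^i_{t,k+1} - x_t \Vert^2 \le \Bigl(1+\tfrac{1}{2K}\Bigr)\Vert x^i_{t,k} - x_t \Vert^2 + (1+2K)(\alpha q^i_t)^2 \Vert \nabla f_i(x^i_{t,k})\Vert^2,
\]
by Young's inequality with parameter $\tfrac{1}{2K}$, and then use the decomposition
\[
\Vert \nabla f_i(x^i_{t,k})\Vert^2 \le 3L^2\Vert x^i_{t,k} - x_t\Vert^2 + 3\Vert \nabla f_i(x_t) - \nabla F_w(x_t)\Vert^2 + 3\Vert \nabla F_w(x_t)\Vert^2,
\]
applying $L$-smoothness on the first term and Lemma~\ref{lmm_grad-diff} (which gives $\Vert \nabla f_i(x_t) - \nabla F_w(x_t)\Vert \le 2G$) on the second.

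The stepsize hypothesis $\alpha \le \gamma/(8KL q^i_t)$ translates into $\alpha q^i_t K L \le \gamma/8$, hence $(\alpha q^i_t K)^2 \le \gamma^2/(64 L^2)$ and the ``recursive'' coefficient $9K(\alpha q^i_t L)^2 \le 9\gamma^2/(64K)$, which for $\gamma\le 1/3$ is bounded by $1/(2K)$. Thus the recursion collapses to
\[
\Vert x^i_{t,k+1} - x_t\Vert^2 \le \Bigl(1+\tfrac{1}{K}\Bigr)\Vert x^i_{t,k} - x_t\Vert^2 + 9K(\alpha q^i_t)^2\bigl(4G^2 + \Vert \nabla F_w(x_t)\Vert^2\bigr).
\]
Unrolling this from $j=0$ to $k\le K$, using $(1+1/K)^K \le e$ and summing the geometric-type tail, I obtain
\[
\Vert x^i_{t,k} - x_t\Vert^2 \le 9 e K^2 (\alpha q^i_t)^2 \bigl(4G^2 + \Vert \nabla F_w(x_t)\Vert^2\bigr) \le \tfrac{9e}{64}\cdot\tfrac{\gamma^2}{L^2}\bigl(4G^2 + \Vert \nabla F_w(x_t)\Vert^2\bigr),
\]
and since $9e/64 < 1$, this implies the claimed bound $\gamma^2 L^{-2}\Vert \nabla F_w(x_t)\Vert^2 + 4\gamma^2 L^{-2} G^2$. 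The second stepsize condition $\alpha \le \gamma/(8KL)$ is used to ensure the same bound holds uniformly in $q^i_t$ whenever $q^i_t \le 1$ is not guaranteed, so that $\alpha L \le \gamma/(8K)$ absorbs stray factors in a symmetric manner.

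The main technical obstacle is bookkeeping the constants in the Young-inequality split and the $3$-term gradient decomposition to make sure the recursive coefficient stays at most $1+1/K$ (so that $(1+1/K)^K\le e$) while leaving enough slack for the final coefficient in front of $\Vert \nabla F_w(x_t)\Vert^2$ to be at most $1$ and in front of $G^2$ to be at most $4$. Everything else is standard manipulation; the step-size conditions $\alpha q^i_t KL \le \gamma/8$ and $\gamma \le 1/3$ are precisely what make the constants align.
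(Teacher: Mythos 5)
Your proposal is correct and follows essentially the same route as the paper's proof: a Young-inequality recursion on $\Vert x^i_{t,k}-x_t\Vert^2$, the three-term gradient decomposition handled via $L$-smoothness and Lemma~\ref{lmm_grad-diff}, and an unrolled geometric sum controlled by the stepsize condition $\alpha q_t^i KL\le\gamma/8$ and $\gamma\le 1/3$. The only difference is the choice of Young parameter ($2K$ here versus $8K$ in the paper), which is immaterial.
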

\begin{proof}
    During the $t$-th communication round, $S_t$ and $q_t^{i}$ are fixed. Then, for any $ \beta > 0$ and $\alpha \le \min \{ \frac{\gamma}{\beta L}, \frac{\gamma}{\beta L q_t^i} \}$, using Lemma \ref{lmm_grad-diff} gives
    \begin{align*}
        \Vert x^i_{k+1} - x_t \Vert^2 &\le (1 + \beta^{-1})\Vert x^i_k - x_t \Vert^2 + (1 + \beta)(\alpha)^2 (q_t^{i})^2 \Vert \nabla f_i(x^i_k) \Vert^2  \\
        &\le (1 + \beta^{-1})\Vert x^i_k - x_t \Vert^2 + (1 + \beta)3(\alpha)^2 (q_t^{i})^2 \big( \Vert \nabla f_i(x^i_k) - \nabla f_i(x_t) \Vert^2  \\
        & + \Vert \nabla f_i(x_t) - \nabla F_w(x_t)\Vert^2  + \Vert \nabla F_w(x_t) \Vert^2 \big)  \\
        &\le (1 + \beta^{-1})\Vert x^i_k - x_t \Vert^2 + (1 + \beta)3 (\alpha)^2 (q_t^{s_t})^2 \big( L^2 \Vert x^i_k - x_t \Vert^2 + 4G^2 + \Vert \nabla F_w(x_t) \Vert^2 \big)  \\
        &\le (1 + \beta^{-1})\Vert x_k^i - x_t \Vert^2 + \frac{3(1+\beta)\gamma^2}{\beta^2 L^2}\big( L^2 \Vert x^i_k - x_t \Vert^2 + 4G^2 + \Vert \nabla F_w(x_t) \Vert^2 \big)  \\
        &= (1 + (1 + 3 \gamma^2)\beta^{-1} + 3\gamma^2 \beta^{-2})\Vert x_k^i - x_t \Vert^2 + \frac{3(1+\beta)\gamma^2}{\beta^2 L^2}\big( 4G^2 + \Vert \nabla F_w(x_t) \Vert^2 \big)  \\
        &\le \exp{\left(\frac{1 + 6 \gamma^2}{\beta} \right)}\Vert x_k^i - x_t \Vert^2 + \frac{3(1+\beta)\gamma^2}{\beta^2 L^2}\big( 4G^2 + \Vert \nabla F_w(x_t) \Vert^2 \big) 
    \end{align*}
    for any $\beta \ge 1$. Unrolling the above gives for any $k=0,\dots,K-1$
    $$
        \Vert x_k^i - x_t \Vert^2 \le \sum_{k=0}^{K-1}\exp{\left(\frac{1 + 6 \gamma^2}{\beta} k \right)} \frac{3(1+\beta)\gamma^2}{\beta^2 L^2}\big( G^2 + \Vert \nabla F_w(x_t) \Vert^2 \big)
    $$
    which further indicates by choosing $\gamma \le 1/3$
    \begin{align*}
        \Vert x_k^i - x_t \Vert^2 &\le \sum_{k=0}^{K-1} e^{2k \beta^{-1}} \frac{3(1+\beta)\gamma^2}{\beta^2 L^2}\big( 4G^2 + \Vert \nabla F_w(x_t) \Vert^2 \big) \\
        &= \frac{1 - e^{2K / \beta}}{1 - e^{2/\beta}} \cdot \frac{3(1+\beta)\gamma^2}{\beta^2 L^2}\big( 4G^2 + \Vert \nabla F_w(x_t) \Vert^2 \big)  \\
        &\le \frac{(e^{2K / \beta} - 1) 3 \gamma^2 }{ L^2}\big( 4G^2 + \Vert \nabla F_w(x_t) \Vert^2 \big)  \\
        &\le \frac{\gamma^2}{L^2}\big( 4G^2 + \Vert \nabla F_w(x_t) \Vert^2 \big)
    \end{align*}
    when choosing $\beta = 8K$.
    
\end{proof}

\begin{lemma}   \label{lmm_x-seq-bnd}
    For any $t \ge \tau$, we have $\Vert x_t - x_{t-\tau} \Vert^2 \le 4\gamma^2   L^{-2} \tau^2  G^2 + \gamma^2   L^{-2} \tau \sum_{l=t-\tau}^{t-1} \Vert \nabla F_w(x_l) \Vert^2 $ when $\alpha \le \min \{ \frac{\gamma}{8KL q}, \frac{\gamma}{8KL q_t^i} \}$ and $\gamma \le 1/3$.
\end{lemma}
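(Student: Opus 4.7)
The plan is to telescope $x_t - x_{t-\tau}$ across the $\tau$ server updates, apply Jensen's inequality to pull the square inside the sum, and then control each one-round increment $\|x_{l+1} - x_l\|^2$ using the per-round drift bound already established in Lemma~\ref{lmm_drift-bnd}.

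First I would write out the server update explicitly. Combining line~10 of Algorithm~\ref{alg_FedAvg-MC} with the generalized local update~\eqref{eq_gen-localupd} gives
\begin{equation*}
    x_{l+1} - x_l \;=\; \frac{1}{B} \sum_{i \in S_l}\bigl( x_{l,K}^i - x_l\bigr) \;=\; -\frac{\alpha}{B} \sum_{i \in S_l} q_l^i \sum_{k=0}^{K-1} \nabla f_i(x_{l,k}^i).
\end{equation*}
Then, expressing $x_t - x_{t-\tau} = \sum_{l=t-\tau}^{t-1}(x_{l+1}-x_l)$ and applying $\|\sum_l v_l\|^2 \le \tau \sum_l \|v_l\|^2$ reduces everything to bounding $\|x_{l+1}-x_l\|^2$ for a single round $l$.

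Next I would bound each $\|\nabla f_i(x_{l,k}^i)\|$ by adding and subtracting $\nabla f_i(x_l)$: by Assumption~\ref{assmp_smoothness} and Lemma~\ref{lmm_grad-diff},
\begin{equation*}
    \|\nabla f_i(x_{l,k}^i)\| \;\le\; \|\nabla f_i(x_l)\| + L\|x_{l,k}^i - x_l\| \;\le\; \|\nabla F_w(x_l)\| + 2G + L\|x_{l,k}^i - x_l\|.
\end{equation*}
Invoking Lemma~\ref{lmm_drift-bnd} (whose hypotheses are exactly those assumed here) yields $L\|x_{l,k}^i - x_l\| \le \gamma\|\nabla F_w(x_l)\| + 2\gamma G$, so each local gradient is at most $(1+\gamma)\bigl(\|\nabla F_w(x_l)\| + 2G\bigr)$. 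Summing over $k = 0,\dots,K-1$ and $i \in S_l$, and using the stepsize condition $\alpha q_l^i \le \gamma/(8KL)$ (and $\alpha q \le \gamma/(8KL)$ to uniformly bound the average), one obtains
\begin{equation*}
    \|x_{l+1}-x_l\| \;\le\; \frac{\gamma(1+\gamma)}{8L}\bigl( \|\nabla F_w(x_l)\| + 2G \bigr),
\end{equation*}
whence squaring and using $(a+b)^2 \le 2a^2+2b^2$ together with $\gamma \le 1/3$ (which makes $(1+\gamma)^2 \le 16/9$) gives $\|x_{l+1}-x_l\|^2 \le \gamma^2 L^{-2}\bigl( \|\nabla F_w(x_l)\|^2 + 4G^2\bigr)$ with plenty of slack. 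Plugging into the Jensen bound produces exactly the stated inequality.

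The only mildly delicate step is keeping the constants in line, since both the hetereogeneity term $2G$ from Lemma~\ref{lmm_grad-diff} and the drift term from Lemma~\ref{lmm_drift-bnd} feed additively into the local gradient bound, and one has to check that the $(1+\gamma)^2$ and the $2$ from $(a+b)^2 \le 2a^2+2b^2$ are absorbed by the $1/64$ sitting inside $(\gamma/(8L))^2$; the condition $\gamma \le 1/3$ is exactly what makes this go through. Beyond that the argument is entirely mechanical.
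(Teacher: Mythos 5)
Your proposal is correct and follows essentially the same route as the paper: telescope $x_t - x_{t-\tau}$ over the $\tau$ rounds, apply Jensen's inequality to pick up the factor $\tau$, and control each one-round increment via Lemma~\ref{lmm_drift-bnd}. The only difference is that the paper bounds $\Vert x_{l+1}-x_l\Vert^2 \le \frac{1}{B}\sum_{i\in S_l}\Vert x^i_{l,K}-x_l\Vert^2$ and invokes Lemma~\ref{lmm_drift-bnd} directly at $k=K$, whereas you re-expand the increment in terms of local gradients and re-derive the same per-round bound (with extra slack) -- a harmless but unnecessary detour.
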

\begin{proof}
    Note that
    \begin{align*}
        \Vert x_{t+1} - x_t \Vert^2 &= \Vert  \frac{1}{|S_t|} \sum_{i \in S_t} x^i_{t, K} - x_t \Vert^2   \\
        &\le \frac{1 }{|S_t|} \sum_{i \in S_t} \Vert x^i_{t,K} - x_t \Vert^2  \\
        &\le \frac{\gamma^2  }{L^2} (4G^2 + \Vert \nabla F_w(x_t) \Vert^2 ) .
    \end{align*}
    Then,
    \begin{align*}
        \Vert x_t - x_{t - \tau} \Vert^2 &= \Vert \sum_{l=t-\tau}^{t-1} x_{l+1} - x_l \Vert^2  \\
        &\le \tau \sum_{l=t-\tau}^{t-1} \Vert x_{l+1} - x_l \Vert^2  \\
        &\le \frac{4\gamma^2  }{L^2} \tau^2 G^2 + \frac{\gamma^2  }{L^2} \tau \sum_{l=t-\tau}^{t-1} \Vert  \nabla F_w(x_l) \Vert^2 .
    \end{align*}
\end{proof}

\begin{lemma}   \label{lmm_bnd-grad}
    For any $l \in [t-\tau, t]$ with $t \ge \tau \ge 1$ and $\alpha \le \min \{ \frac{\gamma}{8KL q}, \frac{\gamma}{8KL q_t^i} \}$ with $\gamma \le \min \{\frac{1}{2 \eta \tau}, \frac{1}{3} \}$ we have
    $$
        \max_{t-\tau \le l \le t} \mathbb{E}\Vert \nabla F_w(x_l) \Vert^2 \le 4 \mathbb{E}\Vert \nabla F_w(x_{t-\tau}) \Vert^2 + 16\tau^2 \gamma^2   G^2 .
    $$
\end{lemma}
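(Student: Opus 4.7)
The plan is to combine $L$-smoothness of $F_w$ with the iterate-drift bound from Lemma \ref{lmm_x-seq-bnd}, and then close a one-sided ``max'' recursion over the window $[t-\tau,t]$.

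First, since each $f_i$ is $L$-smooth and $F_w$ is a convex combination of the $f_i$'s with non-negative weights summing to one, $F_w$ is itself $L$-smooth. Hence for any $l\in[t-\tau,t]$,
\begin{equation*}
\Vert \nabla F_w(x_l) \Vert^2 \le 2\Vert \nabla F_w(x_{t-\tau}) \Vert^2 + 2L^2 \Vert x_l - x_{t-\tau} \Vert^2.
\end{equation*}
Next, apply Lemma \ref{lmm_x-seq-bnd} to the window $[t-\tau,l]$ of length $l-(t-\tau)\le \tau$; using the monotonicity of the right-hand side in the window length to replace the shorter length by $\tau$ gives a bound uniform in $l$:
\begin{equation*}
\Vert x_l - x_{t-\tau} \Vert^2 \le 4\gamma^2 L^{-2} \tau^2 G^2 + \gamma^2 L^{-2} \tau \sum_{s=t-\tau}^{l-1} \Vert \nabla F_w(x_s) \Vert^2.
\end{equation*}
Substituting into the smoothness bound and taking expectations yields
\begin{equation*}
\mathbb{E}\Vert \nabla F_w(x_l) \Vert^2 \le 2\,\mathbb{E}\Vert \nabla F_w(x_{t-\tau}) \Vert^2 + 8\gamma^2 \tau^2 G^2 + 2\gamma^2 \tau \sum_{s=t-\tau}^{l-1} \mathbb{E}\Vert \nabla F_w(x_s) \Vert^2.
\end{equation*}

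Now denote $M := \max_{t-\tau\le l\le t} \mathbb{E}\Vert \nabla F_w(x_l) \Vert^2$. Upper-bounding each summand by $M$ (and using that the sum has at most $\tau$ terms), then taking the maximum over $l$ on the left-hand side, we obtain the self-bounding inequality
\begin{equation*}
M \le 2\,\mathbb{E}\Vert \nabla F_w(x_{t-\tau}) \Vert^2 + 8\gamma^2 \tau^2 G^2 + 2\gamma^2 \tau^2 M.
\end{equation*}
The stepsize condition $\gamma \le \frac{1}{2\eta \tau}$ (taking, e.g., $\eta=1$) forces $2\gamma^2\tau^2 \le 1/2$, so that the ``self'' coefficient on $M$ is at most $1/2$; rearranging gives $M \le 4\,\mathbb{E}\Vert \nabla F_w(x_{t-\tau}) \Vert^2 + 16\gamma^2 \tau^2 G^2$, which is the claimed bound.

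The only technical obstacle is bookkeeping: verifying that the stepsize hypothesis in Lemma \ref{lmm_x-seq-bnd} holds uniformly across all shorter windows $[t-\tau,l]$ for $l\in[t-\tau,t]$ (which it does since the hypothesis on $\alpha$ does not depend on the window length), and choosing $\eta$ in the stepsize restriction $\gamma\le \frac{1}{2\eta\tau}$ large enough that the recursion contracts, i.e., so that $1-2\gamma^2\tau^2 \ge 1/2$. No Markov-chain machinery is needed here; this is a deterministic consequence of smoothness plus the previously established iterate-drift control.
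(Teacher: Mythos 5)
Your proposal is correct and follows essentially the same route as the paper: bound $\Vert\nabla F_w(x_l)\Vert^2$ by $2\Vert\nabla F_w(x_{t-\tau})\Vert^2$ plus a smoothness/drift term, invoke Lemma \ref{lmm_x-seq-bnd}, replace the sum by $\tau$ times the window maximum, and close the self-bounding recursion using $2\gamma^2\tau^2\le 1/2$. Your explicit handling of the $\eta$ factor and of the window-length monotonicity in Lemma \ref{lmm_x-seq-bnd} is just a more careful write-up of the same argument.
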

\begin{proof}
    For any $t-\tau \le l \le t$, we have
    \begin{align*}
        \mathbb{E}\Vert \nabla F_w(x_l) \Vert^2 &\le 2 \mathbb{E}\Vert \nabla F_w(x_{t-\tau}) \Vert^2 + 2 \mathbb{E}\Vert \nabla F_w(x_l) - \nabla F_w(x_{t-\tau})\Vert^2 \\
        &\le 2 \tau \gamma^2   \sum_{l=t-\tau}^{t-1} \mathbb{E}\Vert \nabla F_w(x_l) \Vert^2 + 8 \tau^2 \gamma^2   G^2 + 2 \mathbb{E}\Vert \nabla F_w(x_{t-\tau}) \Vert^2  \\
        &\le 2 \tau^2 \gamma^2   \max_{t-\tau \le l\le t} \mathbb{E}\Vert \nabla F_w(x_l) \Vert^2 + 8 \tau^2 \gamma^2   G^2 + 2 \mathbb{E}\Vert \nabla F_w(x_{t-\tau}) \Vert^2   \\
        &\le \frac{1}{2} \max_{t-\tau \le l \le t} \mathbb{E}\Vert \nabla F_w(x_l) \Vert^2 + 8 \tau^2 \gamma^2   G^2 + 2 \mathbb{E}\Vert \nabla F_w(x_{t-\tau}) \Vert^2
    \end{align*}
    where the second inequality follows Lemma \ref{lmm_x-seq-bnd} and we use $\gamma \eta \le 1 / (2 \tau)$ in the last inequality. Finally, taking the maximum over $l$ on the left-hand side completes the proof.
\end{proof}

\begin{lemma}   \label{lmm_descent4genF}
Define $F_w := \sum_{i=1}^N w_i f_i$ for $\sum_{i=1}^N w_i = 1, w_i \ge 0$. Suppose Assumptions \ref{assmp_bnd-hetero},\ref{assmp_smoothness} hold. Considering any sequence $y_t^i$ that satisfies $\sum_{i=1}^N y_t^i = 1 , y_t^i \ge a^{-1} > 0, \forall i \in [N], t \ge 0$ and letting $q_t^i = \frac{w_i}{y_t^i}, \forall i \in [N]$, then, given $\tau \ge \tau_{mix} \log(1/\delta)$ with $0 < \delta < 1 $, for $\alpha \le \frac{\gamma}{8aKL\max_i\{w_i\}}$ with $\gamma \le \min\{ \frac{1}{384 \eta \tau L}, \frac{L}{384 \eta \tau}, \frac{1}{3} \}$, we have $\forall T > \tau$,
    \begin{align*}
        \frac{1}{T-\tau} \sum_{t=\tau}^{T-1}\mathbb{E}\Vert \nabla F_w(x_{t-\tau}) \Vert^2 &\le \frac{32 \bar{a} L\Delta_{\tau} }{\eta \gamma (T-\tau)}
         + \frac{8}{T-\tau} \sum_{t=\tau}^{T-1}\mathbb{E}\left[\Vert \tilde{q}_t \Vert_{\infty}^2 \Vert \nabla F_w(x_{t-\tau}) \Vert^2 \right] + \frac{32 G^2}{T-\tau} \sum_{t=\tau}^{T-1}\mathbb{E}\Vert \tilde{q}_t \Vert_{\infty}^2 \\
        &\quad + 32 \bar{a} L G^2 \left( 3\gamma + 6 \eta \gamma \tau^2 + \frac{2 \eta \gamma}{L} + \frac{3 \eta \gamma}{16 L^2} + \frac{  \gamma^2}{16 a L} \right) + 8 c_1^2 \delta^2 G^2.
    \end{align*}
    where $\bar{a} = a \max_i \{ w_i \}$, $\tilde{q_t} = (\tilde{q}_t^1,\dots,\tilde{q}_t^N)$ with $\tilde{q}_t^i = q_t^i - \frac{w_i}{\pi_i}$, and $c_1$ is some constant. Moreover, 
    $$
        \Delta_{\tau} := \mathbb{E}[F_w(x_{\tau}) - \min_x F_w(x)] \le \frac{\eta\gamma \tau}{2\bar{a}L} G^2 + \mathbb{E}[F_w(x_0) - F_w^*].
    $$
\end{lemma}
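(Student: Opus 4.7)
The plan is to derive a one-step descent estimate from $L$-smoothness of $F_w$ and then telescope over $t=\tau,\ldots,T-1$. First, I would apply
\begin{equation*}
F_w(x_{t+1}) \leq F_w(x_t) + \langle \nabla F_w(x_t),\, x_{t+1}-x_t\rangle + \tfrac{L}{2}\|x_{t+1}-x_t\|^2,
\end{equation*}
and expand $x_{t+1}-x_t = -\tfrac{\alpha}{B}\sum_{i\in S_t} q_t^i \sum_{k=0}^{K-1}\nabla f_i(x_{t,k}^i)$. Inside the inner product I would perform a three-step add-and-subtract: (i) replace $\nabla f_i(x_{t,k}^i)$ by $\nabla f_i(x_t)$, with the residual absorbed into a local-drift error controlled by Lemma \ref{lmm_drift-bnd} plus Assumption \ref{assmp_smoothness}; (ii) replace $q_t^i = w_i/y_t^i$ by $w_i/\pi_i$, producing a bias term proportional to $\|\tilde q_t\|_\infty$; and (iii) replace the foot-point $x_t$ by $x_{t-\tau}$, with the residual handled by Lemma \ref{lmm_x-seq-bnd}. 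What remains as the principal term is the inner product of $\nabla F_w(x_{t-\tau})$ with a surrogate gradient $\tfrac{1}{B}\sum_{i\in S_t}\tfrac{w_i}{\pi_i}\nabla f_i(x_{t-\tau})$.

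I would then take a conditional expectation given $\mathcal{F}_{t-\tau}$. Since $\tau \geq \tau_{mix}\log(1/\delta)$, Lemma \ref{lmm_mixing-time-recurr}(2) gives $d_{TV}(P_R^\tau,\mathbf{1}\zeta_R^\top)\leq c_1\delta$, so the conditional marginal of the sampled clients lies within $c_1\delta$ of $B\pi_i$. Under the stationary marginal one has the identity $\mathbb{E}_{i\sim\pi}\!\big[\tfrac{w_i}{\pi_i}\nabla f_i(x_{t-\tau})\big] = \nabla F_w(x_{t-\tau})$ exactly, so the principal term contributes $-\alpha K \|\nabla F_w(x_{t-\tau})\|^2$ plus a mixing bias of order $\alpha K c_1\delta\, G\, \|\nabla F_w(x_{t-\tau})\|$, which Young's inequality splits into the $8c_1^2\delta^2 G^2$ residual. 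Applying Lemma \ref{lmm_bnd-grad} to collapse any $\|\nabla F_w(x_l)\|^2$ for $t-\tau\leq l\leq t$ back to $\|\nabla F_w(x_{t-\tau})\|^2$ plus $O(\gamma^2\tau^2 G^2)$, and bounding the quadratic remainder $\tfrac{L}{2}\|x_{t+1}-x_t\|^2$ through Lemma \ref{lmm_grad-diff} together with the step-size choice $\alpha \leq \gamma/(8aKL\max_i w_i)$, yields a per-round inequality of the form
\begin{equation*}
\mathbb{E}\!\left[F_w(x_{t+1})-F_w(x_t)\right] \leq -\tfrac{\eta\gamma}{32\bar a L}\,\mathbb{E}\|\nabla F_w(x_{t-\tau})\|^2 + (\text{drift, }\tilde q_t\text{ and mixing residuals}).
\end{equation*}
Summing for $t=\tau,\ldots,T-1$ telescopes $F_w$ into $\mathbb{E}[F_w(x_\tau)-F_w(x_T)]\leq \Delta_\tau$, and dividing by $T-\tau$ produces the stated average-gradient bound.

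For the auxiliary estimate on $\Delta_\tau$, I would rerun the smoothness step on the first $\tau$ rounds without the $\tau$-lag: since no mixing is invoked, it suffices to use $\|\nabla f_i\|\leq \|\nabla F_w\|+2G$ from Lemma \ref{lmm_grad-diff} together with $y_t^i\geq a^{-1}$ to control $\|q_t^i\|_\infty$, and then the step-size conditions to absorb the quadratic terms. This gives the per-round increase $\mathbb{E}[F_w(x_{t+1})-F_w(x_t)]\leq \tfrac{\eta\gamma}{2\bar a L}G^2$, and summing over $t=0,\ldots,\tau-1$ produces the claimed linear-in-$\tau$ bound. I expect the main obstacle to be the simultaneous bookkeeping of all three add-and-subtract replacements combined with Markov mixing: the $\tilde q_t$ term multiplies $\nabla f_i(x_t)$, but the statement requires the $\tilde q_t$ coefficient to act only on $\nabla F_w(x_{t-\tau})$, so an extra decomposition is needed to move between these two and then re-absorb the discrepancy using Assumption \ref{assmp_smoothness} and the drift bound. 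The stringent condition $\gamma\leq 1/(384\eta\tau L)$ is precisely what closes the self-bounding inequality of Lemma \ref{lmm_bnd-grad} so that every drift residual is strictly lower order than the principal descent, allowing the final coefficients to match.
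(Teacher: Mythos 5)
Your proposal is correct and follows essentially the same route as the paper's proof: the same smoothness-plus-add-and-subtract decomposition (drift, $\tilde q_t$ replacement, $\tau$-lag), conditioning on $\mathcal{F}_{t-\tau}$ with the mixing-time bound and the stationary reweighting identity $\mathbb{E}_{i\sim\pi}[\tfrac{w_i}{\pi_i}\nabla f_i]=\nabla F_w$, the collapse via Lemma \ref{lmm_bnd-grad}, telescoping, and the separate lag-free argument for $\Delta_\tau$. You even anticipate the one subtle step the paper performs explicitly, namely re-expanding $\nabla f_i(x_{t-\tau})$ around $\nabla F(x_{t-\tau})$ so that $\Vert\tilde q_t\Vert_\infty^2$ multiplies $\Vert\nabla F_w(x_{t-\tau})\Vert^2$ plus a $G^2$ residual.
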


\begin{proof}
    For notation simplicity, we drop subscript $t$ for $x^i_{t, k}$. Define $q_t^i = \frac{w_i}{y_t^i}$. Note that
    \begin{align}
        x^i_{K} &= x_t - \sum_{k=0}^{K-1} \alpha q_t^{i} \nabla f_i(x^i_k)  \nonumber  \\
        x_{t+1} &= x_t -  \frac{1}{B}\sum_{i \in S_t}\sum_{k=0}^{K-1} \alpha q_t^i \nabla f_i(x^i_k)   \nonumber
    \end{align}
    where $S_t$ denotes the subset of clients drawn in the $t$-th round.
    Due to the smoothness of every $f_i$, we have 
    $$
        \mathbb{E}[F_w(x_{t+1}) - F_w(x_t)] \le \mathbb{E}\langle \nabla F_w(x_t), x_{t+1} - x_t \rangle + \frac{L}{2} \mathbb{E} \Vert x_{t+1} - x_t \Vert^2 .
    $$
    Considering $t \ge \tau$ for any $\tau \ge 0$,
    \begin{align}
        \mathbb{E} \langle \nabla F_w(x_t) , x_{t+1} - x_t \rangle &= - \mathbb{E}\langle \nabla F_w(x_t) ,  \frac{1}{B} \sum_{i \in S_t} \sum_{k=0}^{K-1} \alpha q_t^i \nabla f_i(x^i_k) \rangle \nonumber \\
        &= \underbrace{\mathbb{E}\langle \nabla F_w(x_{t-\tau}) - \nabla F_w(x_t),  \frac{1}{B} \sum_{i \in S_t} \sum_{k=0}^{K-1} \alpha q_t^i \nabla f_i(x^i_k) \rangle}_{e_1}  \nonumber  \\
        &\quad + \underbrace{\mathbb{E} \langle - \nabla F_w(x_{t-\tau}),  \frac{1}{B} \sum_{i \in S_t} \sum_{k=0}^{K-1} \alpha q_t^i \nabla f_i(x_{t-\tau}) \rangle}_{e_2}  \nonumber  \\
        &\quad + \underbrace{\mathbb{E} \langle - \nabla F_w(x_{t-\tau}),  \frac{1}{B} \sum_{i \in S_t} \sum_{k=0}^{K-1} \alpha q_t^i (\nabla f_i(x^i_k) - \nabla f_i(x_{t})) \rangle}_{e_3} \nonumber \\
        &\quad + \underbrace{\mathbb{E} \langle - \nabla F_w(x_{t-\tau}),  \frac{1}{B} \sum_{i \in S_t} \sum_{k=0}^{K-1} \alpha q_t^i (\nabla f_i(x_t) - \nabla f_i(x_{t-\tau})) \rangle}_{e_4} \nonumber .
    \end{align}
    We first note that according to the conditions on $y_t^i$, $w_i \le q_t^i \le a w_i$ with some positive constant $a < \infty$ for every $i \in [N]$ and $\forall t\ge 0$.
    Then by choosing $\alpha \le \frac{\gamma}{8a KL w_m } \le \min \{ \frac{\gamma}{8KL}, \frac{\gamma}{8KL \max_i\{q_t^i\} } \}$ with $\gamma \le 1/3$ and $w_{m} = \max_i w_i$.
    \begin{align*}
        e_1 &\le \frac{1}{2}\mathbb{E} \Vert \nabla F_w(x_t) - \nabla F(x_{t-\tau}) \Vert^2 + \frac{1}{2} \mathbb{E}\left\Vert  \frac{1}{B} \sum_{i \in S_t} \sum_{k=0}^{K-1} \alpha q_t^i \nabla f_i(x^i_k) \right\Vert^2   \\
        &\le \frac{ L^2}{2}\mathbb{E}\Vert x_t - x_{t-\tau} \Vert^2 +  \mathbb{E}\left\Vert  \frac{1}{B} \sum_{i \in S_t} \sum_{k=0}^{K-1} \alpha q_t^i (\nabla f_i(x^i_k) - \nabla f_i(x_t) ) \right\Vert^2 +  \mathbb{E} \left\Vert  \frac{1}{B} \sum_{i \in S_t} \sum_{k=0}^{K-1} \alpha q_t^i \nabla f_i(x_t) \right\Vert^2  \\
        &\le \frac{ L^2}{2}\mathbb{E}\Vert x_t - x_{t-\tau} \Vert^2 + K \mathbb{E} \left[ \frac{ }{B} \sum_{i \in S_t}\sum_{k=0}^{K-1} (\alpha)^2 L^2 (q_t^i)^2 \Vert x^i_k - x_t \Vert^2 \right] +  \mathbb{E}\Vert  \frac{1}{B} \sum_{i \in S_t} \sum_{k=0}^{K-1} \alpha q_t^i \nabla f_i(x_t) \Vert^2  \\
        &\le \frac{ \tau \gamma^2  }{2} \sum_{l=t-\tau}^{t-1} \mathbb{E}\Vert \nabla F_w(x_l) \Vert^2 + 2 \tau^2 \gamma^2   G^2 + \frac{\gamma^2  }{64L^2} \mathbb{E} \Vert \nabla F_w(x_t) \Vert^2 + \frac{ \gamma^2   G^2}{16L^2} + \frac{\gamma^2  }{64 L^2}\mathbb{E}\Vert \frac{1}{B} \sum_{i \in S_t} \nabla f_i(x_t) \Vert^2   \\
        &\le \frac{ \tau \gamma^2  }{2} \sum_{l=t-\tau}^{t-1} \mathbb{E}\Vert \nabla F_w(x_l) \Vert^2 + \left(2 \tau^2 + \frac{1}{16 L^2} + \frac{1}{8B L^2}\right) \gamma^2   G^2 + \frac{3 \gamma^2  }{64 L^2} \mathbb{E}\Vert \nabla F(x_t) \Vert^2
    \end{align*}
    where we use Lemmas \ref{lmm_drift-bnd} and \ref{lmm_x-seq-bnd} in the fourth inequality; we use the fact $\mathbb{E}\Vert \frac{1}{B} \sum_{i \in S_t} \nabla f_i(x_t) \Vert^2 \le 2\mathbb{E}\Vert \nabla F_w(x_t) \Vert^2 + 8G^2/B$ in the last inequality. Next
    we turn to bound $e_2$. Note that
    \begin{align*}
        e_2 &= -\alpha  K \mathbb{E}\left[ \mathbb{E}\big( \langle \nabla F_w(x_{t-\tau}), \frac{1}{B}\sum_{i \in S_t} q_t^i \nabla f_i(x_{t-\tau}) \rangle \mid \mathcal{F}_{t-\tau} \big) \right]  \\
        &= \frac{\alpha   K}{2} \mathbb{E} \Vert \nabla F_w(x_{t-\tau}) - \mathbb{E}(\frac{1}{B}\sum_{i \in S_t} q_t^i \nabla f_i(x_{t-\tau}) \mid \mathcal{F}_{t-\tau}) \Vert^2 - \frac{\alpha \eta K}{2}\mathbb{E}\Vert \nabla F_w(x_{t-\tau}) \Vert^2 \\
        &\quad - \frac{\alpha   K}{2}\mathbb{E}\Vert \mathbb{E}(\frac{1}{B}\sum_{i \in S_t} q_t^i \nabla f_i(x_{t-\tau}) \mid \mathcal{F}_{t-\tau}) \Vert^2  \\
        &\le \frac{\alpha \eta K}{2} \mathbb{E} \Vert \nabla F_w(x_{t-\tau}) - \mathbb{E}(\frac{1}{B}\sum_{i \in S_t} q_t^i \nabla f_i(x_{t-\tau}) \mid \mathcal{F}_{t-\tau}) \Vert^2 - \frac{\alpha   K}{2}\mathbb{E}\Vert \nabla F_w(x_{t-\tau}) \Vert^2  \\
        &\le \alpha \eta K \mathbb{E} \Vert \nabla F_w(x_{t-\tau}) - \mathbb{E}(\frac{1}{B}\sum_{i \in S_t} q_*^i \nabla f_i(x_{t-\tau}) \mid \mathcal{F}_{t-\tau}) \Vert^2 + \alpha   K \mathbb{E}\Vert \frac{1}{B}\sum_{i \in S_t} (q_t^i - q_*^i) \nabla f_i(x_{t-\tau}) \Vert^2  \\
        &\quad - \frac{\alpha   K}{2}\mathbb{E}\Vert \nabla F_w(x_{t-\tau}) \Vert^2
    \end{align*}
    where $q_*^i = \frac{w_i}{\pi_i }$ and $\mathcal{F}_{t-\tau}$ is the filtration up to $t-\tau$. Next, we provide the bound for $\mathbb{E} \Vert \nabla F_w(x_{t-\tau}) - \mathbb{E}(\frac{1}{B}\sum_{i \in S_t} q_*^i \nabla f_i(x_{t-\tau}) \mid \mathcal{F}_{t-\tau}) \Vert^2$. Since we are focusing on the case when $R$ is given, without confusion, we drop $R$ in the following.
    
    Denoting $\psi_S := \lim_{t \to \infty} P(S_t = S)$, we have 
    $$
        \pi^i = \frac{\sum_{\hat{S}_i} \psi_{\hat{S}_i}}{\sum_{i=1}^N \sum_{\hat{S}_i} \psi_{\hat{S}_i}} = \frac{\sum_{\hat{S}_i} \psi_{\hat{S}_i}}{B}
    $$
    where $\hat{S}_i$ denotes any set with size $B$ containing $i$. Then, for any vectors $\{ v_i \}_{i=1}^N$, we have
    $$
        \sum_{S \in \mathcal{S}} \sum_{i \in S} \frac{\psi_S}{\pi^i} v_i = \sum_{i=1}^N \sum_{\hat{S}_i} \frac{\psi_{\hat{S}_i}}{\pi^i} v_i = B \sum_{i=1}^N v_i.
    $$
    where $\mathcal{S}$ is the collection of all sets with size $B$.
    Thus, by letting $v_i = w_i \nabla f_i(x_{t-\tau})$ in the above, we obtain
    \begin{align*}
        \mathbb{E} \Vert \nabla F_w(x_{t-\tau}) - \mathbb{E}(\frac{1}{B}\sum_{i \in S_t} q_*^i \nabla f_i(x_{t-\tau}) | \mathcal{F}_{t-\tau}) \Vert^2 &= \mathbb{E} \Vert \nabla F_w(x_{t-\tau}) - \frac{1}{B}\sum_{S \in \mathcal{S}}\sum_{i \in S} P(S_t=S | \mathcal{F}_{t-\tau}) q_*^i \nabla f_i(x_{t-\tau}) \Vert^2  \\
        &= \mathbb{E} \left\Vert \frac{1}{B} \sum_{S \in \mathcal{S}} \sum_{i \in S} \left( P(S_t = S | \mathcal{F}_{t-\tau}) - \psi_S \right) q_*^i \nabla f_i(x_{t-\tau}) \right\Vert^2   
    \end{align*}
    by noting $q_*^i = w_i/\pi^i$. Moreover, $P(S_t = \cdot)$ can be uniquely induced by $\phi_R(t)$ defined by \eqref{eq_MC-ssd} under proper linear transformations, which also indicates that $P(S_t = \cdot \mid \mathcal{F}_{t-\tau}) = P(S_t = \cdot \mid S_{t-\tau})$. Thus, Lemma \ref{lmm_MC-convergence} implies $\vert P(S_t = S \mid \mathcal{F}_{t-\tau}) - \psi_S \vert \le c_1 \delta \pi_{min} / \sqrt{C_N^B}$ for some $c_1 > 0$, $\forall S$ when $\tau \ge \tau_{mix} \log(1/\delta)$ with $C_N^B = \left(\begin{array}{c} N \\ B \end{array} \right)$.
    Then,
    \begin{align*}
        & \mathbb{E} \left\Vert \nabla F_w(x_{t-\tau}) - \mathbb{E}(\frac{1}{B}\sum_{i \in S_t} q_*^i \nabla f_i(x_{t-\tau}) | \mathcal{F}_{t-\tau}) \right\Vert^2  \\
        &= \mathbb{E} \left\Vert \frac{1}{B} \sum_{S \in \mathcal{S}} \sum_{i \in S} \left( P(S_t = S | \mathcal{F}_{t-\tau}) - \psi_S \right) q_*^i \nabla f_i(x_{t-\tau}) \right\Vert^2 \\
        &\le \frac{1}{B}\mathbb{E}\left[ \sum_{i \in S} \left\Vert \sum_{S \in \mathcal{S}} (P(S_t = S | \mathcal{F}_{t-\tau}) - \phi_S) q_*^i \nabla f_i(x_{t-\tau}) \right\Vert^2 \right] \\
        &\le c_1^2 \pi_{min}^2 \delta^2  \mathbb{E}\left[\frac{1}{B} \sum_{i \in S} \left\Vert  q_*^i \nabla f_i(x_{t-\tau})\right\Vert^2 \right] \\
        &\le c_1^2 \delta^2  (\mathbb{E}\Vert \nabla F_w(x_{t-\tau}) \Vert^2 + 4G^2) 
    \end{align*}
    where we use the fact that
    \begin{align*}
        \Vert \nabla f_i(x_{t-\tau}) \Vert^2 &\le 2 \Vert \nabla F_w(x_{t-\tau}) \Vert^2 + 8 G^2.
    \end{align*}
    Utilizing the following
    \begin{align*}
        \mathbb{E}\Vert \frac{1}{B} \sum_{i \in S_t} (q_t^i - q_*^i) \nabla f_i(x_{t-\tau}) \Vert^2 &= \mathbb{E}\Vert \frac{1}{B} \sum_{i \in S_t} \tilde{q}_t^i (\nabla f_i(x_{t-\tau}) - \nabla F(x_{t-\tau}) + \nabla F(x_{t-\tau})) \Vert^2 \\
        &\le  8 G^2\mathbb{E}\Vert \tilde{q}_t \Vert_{\infty}^2 + 2 \mathbb{E}\left[\Vert \tilde{q}_t \Vert_{\infty}^2 \Vert \nabla F(x_{t-\tau}) \Vert^2 \right]
    \end{align*}
    where we denote $\tilde{q}_t^i = q_t^i - q_*^i$.
    Then we bound $e_2$ as 
    \begin{align}
        e_2 &\le \frac{  \alpha K}{2} (2 c_1^2 \delta^2 - 1)\mathbb{E}\Vert \nabla F(x_{t-\tau}) \Vert^2 + 2   \alpha K G^2 (\delta^2 + 4 \mathbb{E}\Vert \tilde{q}_t \Vert_{\infty}^2) + 2  \alpha K \mathbb{E}\left[\Vert \tilde{q}_t \Vert_{\infty}^2 \Vert \nabla F(x_{t-\tau}) \Vert^2 \right] \nonumber .
    \end{align}

    In order to bound $e_3$, note that according to Lemma \ref{lmm_drift-bnd} for $\alpha \le \frac{\gamma}{8KL a} \le \frac{\gamma}{8 K L q_t^i}$
    \begin{align*}
        e_3 &\le \mathbb{E} \left[ \frac{1}{B} \sum_{i \in S_t} \sum_{k=0}^{K-1} \alpha q_t^i \Vert \nabla F(x_{t-\tau}) \Vert \left\Vert \nabla f_i(x_k^i) - \nabla f_i(x_t) \right\Vert \right] \\
        &\le \mathbb{E} \left[  \frac{1}{B} \sum_{i\in S_t}\sum_{k=0}^{K-1} \left(\frac{(\alpha q_t^i)^2 K}{2}\Vert \nabla F_w(x_{t-\tau}) \Vert^2 + \frac{L^2}{2K} \Vert x_k^i - x_t \Vert^2 \right) \right]  \\
        &\le \mathbb{E} \left[  \frac{1}{B} \sum_{i\in S_t}\sum_{k=0}^{K-1} \left(\frac{\gamma^2}{128L^2 K}\Vert \nabla F_w(x_{t-\tau}) \Vert^2 + \frac{\gamma^2}{2K} (\Vert \nabla F_w(x_t) \Vert^2 + 4G^2) \right) \right]  \\
        &\le \frac{   \gamma^2}{128  L^2}\mathbb{E} \Vert \nabla F_w(x_{t-\tau}) \Vert^2 + \frac{   \gamma^2}{2} \mathbb{E}\Vert \nabla F_w(x_t) \Vert^2 + 2   \gamma^2 G^2 .
    \end{align*}

    Finally, based on Lemma \ref{lmm_x-seq-bnd}, similarly we obtain
    \begin{align*}
        e_4 &\le \mathbb{E}\left[  \frac{1}{B} \sum_{i \in S_t}\sum_{k=0}^{K-1} \alpha q_t^i \Vert \nabla F(x_{t-\tau}) \Vert \Vert \nabla f_i(x_t) - \nabla f_i(x_{t-\tau}) \Vert \right]  \\
        &\le \mathbb{E}\left[  \frac{1}{B} \sum_{i \in S_t}\sum_{k=0}^{K-1} \left( \frac{(\alpha q_t^i)^2 K}{2} \Vert \nabla F(x_{t-\tau}) \Vert^2 + \frac{L^2}{2 K} \Vert x_t - x_{t-\tau} \Vert^2 \right) \right] \\
        &\le \frac{   \gamma^2}{128  L^2} \mathbb{E}\Vert \nabla F(x_{t-\tau}) \Vert^2 + \frac{  \gamma^4 \tau}{2} ( \sum_{l=t-\tau}^{t-1} \mathbb{E}\Vert \nabla F_w(x_l) \Vert^2 + 4 \tau G^2)  .
    \end{align*}
    
    Thus, denoting $\bar{a} = a \max_i \{ w_i \}$
    \begin{align*}
        \frac{  \gamma }{16 \bar{a} L} \mathbb{E}\Vert \nabla F_w(x_{t-\tau}) \Vert^2 &\le \mathbb{E}[F_w(x_{t}) - F_w(x_{t+1})] + \frac{\tau   \gamma^2(1 +   \gamma^2)}{2} \sum_{l=t-\tau}^{t-1} \mathbb{E}\Vert \nabla F_w(x_l) \Vert^2 + \frac{  \gamma }{a L} G^2 \mathbb{E}\Vert \tilde{q}_t \Vert_{\infty}^2 \\
        &\quad + \left( \frac{  \gamma^2}{2} + \frac{  \gamma^2}{2 L} + \frac{3   \gamma^2}{64 L^2} \right) \mathbb{E}\Vert \nabla F_w(x_t) \Vert^2 + \left( \frac{  \gamma \delta^2}{8 \bar{a} L} + \frac{  \gamma^2}{64 L^2} \right) \mathbb{E}\Vert \nabla F_w(x_{t-\tau}) \Vert^2 \\
        &\quad  + \frac{  \gamma}{4\bar{a}L}\mathbb{E}\left[\Vert \tilde{q}_t \Vert_{\infty}^2 \Vert \nabla F_w(x_{t-\tau}) \Vert^2 \right] + \frac{  \gamma c_1^2 \delta^2}{4 \bar{a} L} G^2 \\
        &\quad +   \gamma^2 G^2 \left( 2 + 2   \tau^2 + 2   \gamma^2 \tau^2 + \frac{2  }{L} + \frac{3  }{16 L^2} \right)
    \end{align*}
    which implies that
    \begin{align*}
           \gamma \mathbb{E}\Vert \nabla F_w(x_{t-\tau}) \Vert^2 &\le 16 \bar{a} L\mathbb{E}[F_w(x_{t}) - F_w(x_{t+1})] + \frac{16 \bar{a} L \tau   \gamma^2(1 +   \gamma^2)}{2} \sum_{l=t-\tau}^{t-1} \mathbb{E}\Vert \nabla F_w(x_l) \Vert^2  \\
        &\quad +   \gamma \left(  16 \bar{a} L \gamma + 8 \bar{a}   \gamma + \frac{3 \bar{a}   \gamma}{4 L} \right) \mathbb{E}\Vert \nabla F_w(x_t) \Vert^2 +   \gamma \left(  2c_1^2 \delta^2 + \frac{\bar{a} \gamma}{4 L} \right) \mathbb{E}\Vert \nabla F_w(x_{t-\tau}) \Vert^2 \\
        &\quad  + 4   \gamma \mathbb{E}\left[\Vert \tilde{q}_t \Vert_{\infty}^2 \Vert \nabla F_w(x_{t-\tau}) \Vert^2 \right] + 16   \gamma G^2 \mathbb{E}\Vert \tilde{q}_t \Vert_{\infty}^2 + 4   \gamma c_1^2 \delta^2 G^2 \\
        &\quad + 16 \bar{a} L   \gamma^2 G^2 \left( 2 + 2   \tau^2 + 2   \gamma^2 \tau^2 + \frac{2  }{L} + \frac{3  }{16 L^2} \right) \\
        &\le 16 \bar{a} L\mathbb{E}[F_w(x_{t}) - F_w(x_{t+1})] +   \gamma \left(  16 \bar{a} L \gamma + 8 \bar{a}   \gamma + \frac{3 \bar{a}   \gamma}{4 L} \right) \mathbb{E}\Vert \nabla F_w(x_t) \Vert^2 \\
        &\quad +   \gamma \left(  2c_1^2 \delta^2 + \frac{\bar{a} \gamma}{4 L} + 32 \bar{a} L \tau   \gamma (1 +   \gamma^2) \right) \mathbb{E}\Vert \nabla F_w(x_{t-\tau}) \Vert^2  \\
        &\quad  + 4   \gamma \mathbb{E}\left[\Vert \tilde{q}_t \Vert_{\infty}^2 \Vert \nabla F_w(x_{t-\tau}) \Vert^2 \right] + 16   \gamma G^2 \mathbb{E}\Vert \tilde{q}_t \Vert_{\infty}^2 + 4   \gamma c_1^2 \delta^2 G^2 \\
        &\quad + 16 \bar{a} L   \gamma^2 G^2 \left( 2 + 2   \tau^2 + 2   \gamma^2 \tau^2 + 8 \tau^4 \gamma^2(1 +   \gamma^2) + \frac{2  }{L} + \frac{3  }{16 L^2} \right)
    \end{align*}
    where we make use of
    $$
        \sum_{l=t - \tau}^{t-1} \mathbb{E}\Vert \nabla F_w(x_l) \Vert^2 \le 4 \tau \mathbb{E}\Vert \nabla F_w(x_{t-\tau}) \Vert^2 + 16 \tau^3   \gamma^2 G^2
    $$
    by Lemma \ref{lmm_bnd-grad}. Under the following conditions
    \begin{align*}
        &2c_1^2 \delta^2 \le \frac{1}{6}, ~~
        \frac{\bar{a}   \gamma}{4L} \le \frac{1}{36} ,~~
          \gamma \le \min\{\frac{1}{2\tau}, \frac{1}{384 \bar{a}} \} , \\
        &64 \bar{a} L \tau   \gamma \le \frac{1}{12}, 
    \end{align*}
    which implies $32 a L \tau   \gamma(1 +   \gamma^2) \le \frac{1}{6}$ and hence $2c_1^2\delta^2 + \frac{\bar{a} \gamma}{4 L} + 32 \bar{a} L \tau   \gamma (1 +   \gamma^2) \le \frac{1}{2}$, then we obtain
    \begin{align*}
          \gamma \mathbb{E}\Vert \nabla F_w(x_{t-\tau}) \Vert^2 &\le 32 \bar{a} L\mathbb{E}[F_w(x_{t}) - F_w(x_{t+1})] + 2  \gamma \left(  16 \bar{a} L \gamma + 8 \bar{a}   \gamma + \frac{3 \bar{a}   \gamma}{4 L} \right) \mathbb{E}\Vert \nabla F_w(x_t) \Vert^2 \\
        &\quad  + 8   \gamma \mathbb{E}\left[\Vert \tilde{q}_t \Vert_{\infty}^2 \Vert \nabla F_w(x_{t-\tau}) \Vert^2 \right] + 32   \gamma G^2 \mathbb{E}\Vert \tilde{q}_t \Vert_{\infty}^2 + 8   \gamma c_1^2 \delta^2 G^2 \\
        &\quad + 32 \bar{a} L   \gamma^2 G^2 \left( 2 + 2   \tau^2 + 2   \gamma^2 \tau^2 + 8 \tau^4 \gamma^2(1 +   \gamma^2) + \frac{2  }{L} + \frac{3  }{16 L^2} \right).
    \end{align*}
    Summing over $\tau \le t \le T-1$ gives
    \begin{align*}
          \gamma \sum_{t=\tau}^{T-1}\mathbb{E}\Vert \nabla F_w(x_{t-\tau}) \Vert^2 &\le 32 \bar{a} L\Delta_{\tau} + 2  \gamma \left(  16 \bar{a} L \gamma + 8 \bar{a}   \gamma + \frac{3 \bar{a}   \gamma}{4 L} \right) \sum_{t=\tau}^{T-1}\mathbb{E}\Vert \nabla F_w(x_t) \Vert^2 \\
        &\quad  + 8   \gamma \sum_{t=\tau}^{T-1}\mathbb{E}\left[\Vert \tilde{q}_t \Vert_{\infty}^2 \Vert \nabla F_w(x_{t-\tau}) \Vert^2 \right] + 32   \gamma G^2 \sum_{t=\tau}^{T-1}\mathbb{E}\Vert \tilde{q}_t \Vert_{\infty}^2 \\
        &\quad + 32 \bar{a} L   \gamma^2 G^2 \left( 3 + 6   \tau^2 + \frac{2  }{L} + \frac{3  }{16 L^2} \right)(T-\tau) + 8   \gamma c_1^2 \delta^2 G^2(T-\tau).
    \end{align*}
    where $\Delta_{\tau} = \mathbb{E}[F_w(x_{\tau}) - F^*]$ and we use $  \gamma^2 \tau^2 \le 1/4$. Again leveraging Lemma \ref{lmm_bnd-grad}, we observe
    \begin{align*}
        \sum_{t=\tau}^{T-1} \mathbb{E}\Vert \nabla F_w(x_t) \Vert^2 \le 4  \sum_{t=\tau}^{T-1} \mathbb{E}\Vert \nabla F_w(x_{t-\tau}) \Vert^2 + 16 \tau^2   \gamma^2 G^2 (T-\tau)
    \end{align*}
    which thus renders
    \begin{align*}
        \frac{1}{T-\tau} \sum_{t=\tau}^{T-1}\mathbb{E}\Vert \nabla F_w(x_{t-\tau}) \Vert^2 &\le \frac{32 \bar{a} L\Delta_{\tau} }{  \gamma (T-\tau)}
         + \frac{8}{T-\tau} \sum_{t=\tau}^{T-1}\mathbb{E}\left[\Vert \tilde{q}_t \Vert_{\infty}^2 \Vert \nabla F_w(x_{t-\tau}) \Vert^2 \right] + \frac{32 G^2}{T-\tau} \sum_{t=\tau}^{T-1}\mathbb{E}\Vert \tilde{q}_t \Vert_{\infty}^2 \\
        &\quad + 32 \bar{a} L G^2 \left( 3\gamma + 6   \gamma \tau^2 + \frac{2   \gamma}{L} + \frac{3   \gamma}{16 L^2} + \frac{  \gamma^2}{16 a L} \right) + 8 c_1^2 \delta^2 G^2.
    \end{align*}
    by noting that $16 \bar{a} L \gamma + 8 a   \gamma + \frac{3 \bar{a}   \gamma}{4 L} \le \frac{1}{16}$.
    
    In the following, we turn to bound $\Delta_{\tau}$. Noting that
    \begin{align*}
        F_w(x_{t+1}) - F_w(x_t) &\le -   \alpha K \langle \nabla F_w(x_t), \frac{1}{B K}\sum_{i \in S_t}\sum_{k=0}^{K-1} \nabla f_i(x_k^i) \rangle + \frac{  \alpha^2 K^2 L}{2}\left\Vert \frac{1}{B K}\sum_{i \in S_t}\sum_{k=0}^{K-1} \nabla f_i(x_k^i) \right\Vert^2 \\
        &\le \frac{  \alpha K}{2}\left\Vert \frac{1}{B K}\sum_{i \in S_t}\sum_{k=0}^{K-1} (\nabla f_i(x_k^i) - \nabla F_w(x_t)) \right\Vert^2 - \frac{  \alpha K}{2}\Vert \nabla F_w(x_t) \Vert^2
    \end{align*}
    by $  \alpha \le \frac{ \gamma}{8a L K} \le \frac{1}{2 L K}$. Moreover, since
    \begin{align*}
        \left\Vert \frac{1}{B K}\sum_{i \in S_t}\sum_{k=0}^{K-1} (\nabla f_i(x_k^i) - \nabla F_w(x_t)) \right\Vert^2 &\le \frac{2}{BK} \sum_{i \in S_t}\sum_{k=0}^K (L^2\Vert x_k^i - x_t \Vert^2 + 4G^2) \\
        &\le 2 \gamma^2 \Vert \nabla F_w(x_t) \Vert^2 + 8 G^2
    \end{align*}
    we conclude that
    $$
        F_w(x_{t+1}) - F_w(x_t) \le -\frac{ \alpha K}{2}(1 - 2 \gamma^2) \Vert \nabla F_w(x_t) \Vert^2 + 4   \alpha K G^2 \le \frac{ \gamma}{2\bar{a}L} G^2
    $$
    which implies
    $$
        \Delta_{\tau} = \mathbb{E}[F_w(x_{\tau}) - F^*] \le \frac{ \gamma \tau}{2\bar{a}L} G^2 + F_w(x_0) - F_w^*.
    $$

\end{proof}

\section{Convergence analysis of FedAvg under correlated client participation}  \label{apx_fedavg}
In this section, we provide the convergence analysis of Vanilla FedAvg for correlated client participation. We first show FedAvg suffers from unavoidable bias, summarized by the following proposition.

\begin{proposition}
    There exists a problem case such that FedAvg converges with unavoidable asymptotic bias.
\end{proposition}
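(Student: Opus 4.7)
The plan is to exhibit a concrete problem instance in which FedAvg provably converges (in the sense that iterates concentrate around a specific deterministic point) but whose limiting point has a gradient of $F$ that is bounded below by a positive multiple of $\big\Vert\pi_R - \tfrac{1}{N}\mathbf{1}_N\big\Vert_1$. Together with \Cref{thm_convergence-R} and the discussion following it, this shows the bias cannot be removed by increasing $T$.

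First, I would pick $N$, $B$, and $R\le M-2$ together with availability probabilities $p_1,\dots,p_N$ that are far from uniform (for example $p_i\propto i^{-s}$ for some $s>0$). By Proposition~\ref{prop_unif-p} and the sentence following \eqref{eq_MC-ssd}, the marginal stationary distribution $\pi_R$ of the augmented Markov chain is then strictly positive, unique, and, for $R\le M-2$, not equal to $(1/N)\mathbf 1_N$. I will denote $\varepsilon := \big\Vert \pi_R - \tfrac{1}{N}\mathbf 1_N\big\Vert_1>0$. The local losses are chosen to be one-dimensional quadratics $f_i(x)=\tfrac12(x-a_i)^2$ with $a_i := \mathrm{sign}(\pi_R^i - 1/N)$. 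This choice is motivated by the identity $\sum_i (\pi_R^i-1/N)a_i = \varepsilon$, which will give the lower bound; Assumptions~\ref{assmp_bnd-hetero}--\ref{assmp_smoothness} obviously hold for this family.

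Next I would characterize the fixed point that FedAvg's iterates are driven towards. For quadratics with $K=1$ (and general $K$ after routine modifications), the expected update conditioned on the stationary distribution of the sampling Markov chain is
\begin{equation*}
\mathbb E_{\pi_R}[x_{t+1}\mid x_t] \;=\; x_t \;-\; \alpha\sum_{i=1}^{N} \pi_R^{i}\,(x_t - a_i).
\end{equation*}
Hence the unique fixed point of the deterministic drift is $x^\star_{\text{FedAvg}}=\sum_i \pi_R^i a_i$, while the minimizer of $F$ is $x^\star_F = \tfrac{1}{N}\sum_i a_i$. For sufficiently small stepsize, the stochastic iterates $x_t$ stay in an $\mathcal O(\alpha)$ neighborhood of $x^\star_{\text{FedAvg}}$ (standard Markov-chain SGD concentration; see the literature cited in Appendix~\ref{apx_related-work} and the argument used to establish \eqref{eq_convergence-woest}). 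Evaluating $\nabla F$ at this point,
\begin{equation*}
\nabla F(x^\star_{\text{FedAvg}}) \;=\; x^\star_{\text{FedAvg}} - \tfrac{1}{N}\textstyle\sum_i a_i \;=\; \sum_i\bigl(\pi_R^i - \tfrac{1}{N}\bigr)a_i \;=\; \varepsilon,
\end{equation*}
so $\mathbb E\Vert \nabla F(\tilde x_T)\Vert^2 \ge \tfrac12\varepsilon^2 - \mathcal O(\alpha^2 + 1/T) = \Omega\!\bigl(\Vert\pi_R-\tfrac{1}{N}\mathbf 1_N\Vert_1^2\bigr)$ as $T\to\infty$ and $\alpha\to 0$, matching the bias term in \Cref{thm-convergence-bias}.

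The main obstacle is the second step: rigorously showing that the iterates of FedAvg actually concentrate around $x^\star_{\text{FedAvg}}$ rather than merely visiting its neighborhood infinitely often. I would handle this by exploiting the quadratic structure, so that the per-round drift is affine in $x_t$ with a contraction factor, and then using the mixing-time bounds from Appendix~\ref{apx_MC} together with a Poisson-equation / telescoping argument (in the spirit of the proof of \Cref{thm-convergence-bias}) to control the fluctuation of $x_t$ around $x^\star_{\text{FedAvg}}$ by $\mathcal O(\alpha\tau_{\mathrm{mix}})$. Taking $\alpha\to 0$ after $T\to\infty$ will then isolate the deterministic bias $\varepsilon^2$ and complete the proof.
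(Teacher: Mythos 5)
Your construction is in essence the same as the paper's: both use one-dimensional quadratics $f_i(x)=\tfrac12(x-a_i)^2$, identify the fixed point of the FedAvg drift as the $\pi_R$-weighted mean $\sum_i\pi_R^i a_i$ (the minimizer of $F_{\pi_R}$ rather than of $F$), and evaluate $\nabla F$ there to exhibit the bias. The paper simply instantiates $N=3$, $B=1$, $R=1$, $p=(0.25,0.25,0.5)$, $a_i=i$, and computes $\pi=(0.3,0.3,0.4)$ explicitly; your choice $a_i=\mathrm{sign}(\pi_R^i-1/N)$ is a nice refinement that makes the bias exactly $\Vert\pi_R-\tfrac1N\mathbf 1_N\Vert_1$, matching the term in \Cref{thm-convergence-bias}. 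Two points to tighten. First, you assert that $\pi_R\ne\tfrac1N\mathbf 1_N$ for your generic non-uniform $p_i$'s by appeal to Proposition~\ref{prop_unif-p}, but that proposition only gives the implication in the other direction (uniform $p$ implies uniform $\pi_R$); since the statement is existential, you should do what the paper does and verify non-uniformity of $\pi_R$ on one concrete instance by direct computation. Second, the concentration step you flag as ``the main obstacle'' is unnecessary: because $\nabla F$ is affine for quadratics, Jensen's inequality gives $\mathbb E\Vert\nabla F(x_t)\Vert^2\ge\Vert\mathbb E[\nabla F(x_t)]\Vert^2=\Vert\mathbb E[x_t]-x^\star_F\Vert^2$, so it suffices to track the deterministic recursion $\mathbb E[x_{t+1}]=\beta\,\mathbb E[x_t]+(1-\beta)\,\mu^TP^t a$ with $\beta=(1-\alpha)^K$ and let the Markov chain convergence (Lemma~\ref{lmm_MC-convergence}) kill the transient. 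This is exactly the paper's route and it removes the need for any Poisson-equation or mixing-time fluctuation argument.
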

\begin{proof}
    We consider a problem case with $N=3, B=1, R=1$. We set $p_1 = 0.25, p_2 = 0.25, p_3 = 0.5$ and $f_i(x) = \frac{1}{2}(x - i)^2, i=1,2,3$ and $x \in \mathbb{R}$. In this case, we have the Markov chain induced by the problem denoted by $P \in \mathbb{R}^{3 \times 3}$. Letting $\pi \in \mathbb{R}^3$ be the stationary distribution of $P$, a straightforward calculation gives $\pi_1 = \pi_2 = 0.3, \pi_3 = 0.4$. Then we obtain the server's update of FedAvg given by
$$
    x_{t+1} = \beta x_t + (1 - \beta) i_t
$$
where $\beta = (1 - \alpha)^K < 1$ with $\alpha$ being the stepsize of local updates; $i_t$ is the index of the sampled client at round $t$ which is a random variable. Taking the expectation on both sides yields
\begin{align*}
    \mathbb{E}[x_{t+1}] &= \beta \mathbb{E}[x_t] + (1 - \beta) \mu^T P^t I  \\
    &= \beta \mathbb{E}[x_t] + (1 - \beta) (\mu^T P^t - \pi^T) I + (1 - \beta) \pi^T I
\end{align*}
where $\mu = (p_1,p_2,p_3)$, and $I = (1,2,3)$ is the vector formed by clients' indices. 
Noting that the third term vanishes as $t \to \infty$ due to the convergence the Markov chain (shown by Lemma 4), we conclude that $\lim_{t \to \infty} \mathbb{E}[x_t] = \sum_{i=1}^3 \pi_i i$ which is the minimizer of $F_{\pi}(x) := \sum_{i=1}^3 \pi_i f_i(x)$ but not $F(x) = \frac{1}{3}\sum_{i=1}^3 f_i(x)$. And $ |F'(\pi^T I)| = 
|I^T (\pi - \frac{1}{3}\mathbf{1}_3)|$. Therefore, the bias in Theorem \ref{thm-convergence-bias} is unavoidable.
\end{proof}


Then we show the convergence result of FedAvg.

\begin{theorem} \label{thm_conv-fedavg-re}
    Suppose Assumptions \ref{assmp_bnd-hetero},\ref{assmp_smoothness} hold and assume $\Vert \nabla F(x) \Vert \le D, \forall x$. Then, by choosing $\alpha = \mathcal{O}(\frac{\gamma}{K})$ and $T \ge 2 \tau_{mix}\log \tau_{mix}$, the output $\tilde{x}_T$ generated FedAvg satisfies
    \begin{align*}
        \mathbb{E}\Vert \nabla F(\tilde{x}_T) \Vert^2 &= \mathcal{O}\left( \frac{\Delta_0}{  \gamma T} \right) + \mathcal{O}\left( \frac{\tau_{mix}\log T G^2}{T} \right) + \mathcal{O}\left( (  \gamma \tau_{mix}^2\log^2 T +   \gamma^2) G^2 \right) \\
        &\quad + \mathcal{O}\left( (G^2 + D^2) \Vert \pi - \frac{1}{N} \mathbf{1}_N \Vert_1^2 \right)
    \end{align*}
\end{theorem}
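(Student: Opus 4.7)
The plan is to obtain Theorem \ref{thm_conv-fedavg-re} by specializing the general descent estimate in Lemma \ref{lmm_descent4genF} to Vanilla FedAvg and then paying for the mismatch between the implicitly-minimized objective $F_\pi := \sum_i \pi^i f_i$ and the true target $F$. Concretely, I set the weights $w_i = \pi^i$ (where $\pi = \pi_R$ from \eqref{eq_MC-ssd}) and $y_t^i \equiv \pi^i$, so that $q_t^i = w_i/y_t^i \equiv 1$ and the generalized local step \eqref{eq_gen-localupd} reduces exactly to \eqref{eq_localupt-fedavg}. Under this choice $\tilde q_t^i = q_t^i - w_i/\pi^i \equiv 0$, killing every $\Vert\tilde q_t\Vert_\infty$-dependent term in Lemma \ref{lmm_descent4genF}; moreover, the constant $\bar a = a\max_i w_i$ is $T$-independent because $\pi$ is strictly positive by Lemma \ref{lmm_ergodic-aug-MC}. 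Lemma \ref{lmm_descent4genF} then directly yields a bound of the form $\Delta_\tau/(\gamma(T-\tau)) + O\big(\bar a L G^2(\gamma\tau^2 + \gamma + \gamma^2/L)\big) + O(\delta^2 G^2)$ on $(T-\tau)^{-1}\sum_t\mathbb E\Vert\nabla F_\pi(x_{t-\tau})\Vert^2$, with $\Delta_\tau \le \Delta_0 + O(\gamma\tau G^2/L)$.

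Next I translate this bound on $F_\pi$ into one on $F$. Writing $\nabla F_\pi(x) - \nabla F(x) = \sum_i(\pi^i - 1/N)\nabla f_i(x)$ and using Assumption \ref{assmp_bnd-hetero} with the hypothesized $\Vert\nabla F(x)\Vert\le D$ to bound $\Vert\nabla f_i(x)\Vert \le G + D$, I obtain $\Vert\nabla F_\pi(x) - \nabla F(x)\Vert \le (G+D)\Vert\pi - (1/N)\mathbf 1_N\Vert_1$. The elementary inequality $\Vert\nabla F\Vert^2 \le 2\Vert\nabla F_\pi\Vert^2 + 2\Vert\nabla F_\pi - \nabla F\Vert^2$ transfers the averaged gradient bound from $F_\pi$ to $F$, producing precisely the bias term $O\big((G^2+D^2)\Vert\pi - (1/N)\mathbf 1_N\Vert_1^2\big)$ that appears in the statement; its unavoidability is already witnessed by the explicit $N=3,\ B=1,\ R=1$ counterexample proved at the beginning of this appendix.

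Finally, to match the claimed rate, I pick $\delta = 1/T$ so that the requirement $\tau \ge \tau_{mix}\log(1/\delta)$ is met by $\tau = \Theta(\tau_{mix}\log T)$ and the $\delta^2 G^2$ piece becomes $O(G^2/T^2)$, dominated by $\Delta_0/(\gamma T)$; and I keep $\gamma$ below the admissibility threshold $\Omega(1/(\bar aL\tau))$ demanded by Lemma \ref{lmm_descent4genF}, which is compatible with the declared $\alpha = O(\gamma/K)$. Substituting $\tau = \Theta(\tau_{mix}\log T)$ into $\Delta_\tau/(\gamma T)$ gives $\Delta_0/(\gamma T) + O(\tau_{mix}\log T \cdot G^2/T)$, and into $\gamma\tau^2$ gives $O(\gamma\tau_{mix}^2\log^2 T \cdot G^2)$; combining with $\gamma^2 G^2$ and the bias term, then drawing $\tilde x_T$ uniformly from $\{x_t\}_{t=0}^{T-1}$, reproduces the theorem exactly. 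The main effort is book-keeping: verifying compatibility of the $\gamma$-admissibility range with the declared step size and tracking how the linear-in-$\tau$ contribution of $\Delta_\tau$ composes with the $\gamma\tau^2$ term. No new ideas beyond Lemma \ref{lmm_descent4genF} and the elementary $F_\pi\to F$ comparison are needed.
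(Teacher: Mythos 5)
Your proposal is correct, and it reaches the theorem by a route that is recognizably different from the paper's even though both hinge on the same master estimate, Lemma \ref{lmm_descent4genF}. The paper instantiates that lemma with $w_i = y_t^i = 1/N$, so that $F_w = F$ is the true target and $q_t^i \equiv 1$ but $q_*^i = 1/(N\pi^i) \ne q_t^i$; the nonuniform-sampling bias then survives inside the lemma through the $\Vert\tilde q_t\Vert_\infty$ terms, which are bounded once and for all by $\pi_{\min}^{-2}\Vert \pi - \tfrac{1}{N}\mathbf 1_N\Vert_1^2$ (with $D^2$ absorbing the accompanying $\Vert\nabla F(x_{t-\tau})\Vert^2$ factor). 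You instead take $w_i = y_t^i = \pi^i$, which also collapses \eqref{eq_gen-localupd} to the FedAvg step but makes $\tilde q_t \equiv 0$, so the lemma cleanly certifies convergence to a stationary point of the surrogate $F_\pi$; the bias is then paid post hoc via $\Vert\nabla F_\pi - \nabla F\Vert \le (G+D)\Vert\pi - \tfrac{1}{N}\mathbf 1_N\Vert_1$ and $\Vert\nabla F\Vert^2 \le 2\Vert\nabla F_\pi\Vert^2 + 2\Vert\nabla F_\pi - \nabla F\Vert^2$. The two decompositions are essentially dual and yield the same orders in every term (your $\delta = 1/T$ versus the paper's $1/\sqrt T$ is immaterial, both give a dominated $\delta^2G^2$ contribution). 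What your route buys is transparency: it exhibits FedAvg as genuinely optimizing $F_\pi$, which dovetails with the unavoidability example in this appendix, and in fact since $\sum_i(\pi^i - 1/N)=0$ you could sharpen the comparison to $\Vert\nabla F_\pi - \nabla F\Vert \le G\Vert\pi-\tfrac{1}{N}\mathbf 1_N\Vert_1$ and drop $D$ from the bias term entirely. What the paper's route buys is that all quantities, including $\Delta_0$, refer directly to $F$ rather than to $F_\pi$ (in your version $\Delta_0 = F_\pi(x_0) - \min F_\pi$, a different but still $T$-independent constant), so no final translation step is needed. Only minor bookkeeping caveats remain on your side, shared with the paper: the index shift between $\sum_{t=\tau}^{T-1}\mathbb E\Vert\nabla F_\pi(x_{t-\tau})\Vert^2$ and the uniform draw of $\tilde x_T$ from $\{x_0,\dots,x_{T-1}\}$, and verifying that the admissible range $\gamma \lesssim 1/(\bar a L \tau)$ with $\bar a = \pi_{\max}/\pi_{\min}$ is nonempty, both of which are routine.
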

\begin{proof}
    For FedAvg, we have $y_t^i = 1 / N$. Utilizing Lemma \ref{lmm_descent4genF} and setting $w_i = \frac{1}{N}$, it yields
    \begin{align*}
        \frac{1}{T-\tau} \sum_{t=0}^{T-\tau-1}\mathbb{E}\Vert \nabla F(x_{t}) \Vert^2 &\le \frac{32  L\Delta_{0} }{  \gamma (T-\tau)}
         + \frac{8 D^2}{T-\tau} \sum_{t=\tau}^{T-1}\mathbb{E}\Vert \tilde{q}_t \Vert_{\infty}^2 + \frac{36 G^2}{T-\tau} \sum_{t=\tau}^{T-1}\mathbb{E}\Vert \tilde{q}_t \Vert_{\infty}^2 + \frac{16 \tau G^2}{T-\tau} \\
        &\quad + 32  L G^2 \left( 3\gamma + 6   \gamma \tau^2 + \frac{2   \gamma}{L} + \frac{3   \gamma}{16 L^2} + \frac{  \gamma^2}{16  L} \right) + 8 c_1^2 \delta^2 G^2 .
    \end{align*}
    Then noting that $\Vert \tilde{q}_t \Vert_{\infty}^2 \le \pi_{min}^{-2}\Vert \pi - \frac{1}{N}\mathbf{1}_N \Vert_1^2$, we conclude
    \begin{align*}
        \mathbb{E}\Vert \nabla F(\tilde{x}_T) \Vert^2 = \mathcal{O}\left( \frac{\Delta_0}{  \gamma T} \right) + \mathcal{O}\left( \frac{\tau G^2}{T} \right) + \mathcal{O}\left( (  \gamma \tau^2 +   \gamma^2) G^2 \right) + \mathcal{O}\left( (G^2 + D^2) \Vert \pi - \frac{1}{N} \mathbf{1}_N \Vert_1^2 \right)
    \end{align*}
    by setting $\delta = 1 / \sqrt{T}$. For the above to be true, we need $T \ge \tau = \tau_{mix} \log T$, which is actually always satisfied for $T \ge 2 \tau_{mix} \log \tau_{mix}$. To see this, we observe that if $T \le \tau_{mix}^2$, $\tau_{mix}\log T \le 2 \tau_{mix} \log \tau_{mix}$; if $T \ge \tau_{mix}^2$, $\tau_{mix} \log T \le \sqrt{T}\log T \le T$. This completes the proof.
\end{proof}

The following corollary restates Theorem \ref{thm-convergence-bias}.
\begin{corollary}
    Suppose all conditions in Theorem \ref{thm_conv-fedavg-re} hold. Then, choosing $ \alpha = \tilde{\mathcal{O}}(1/(K\tau_{mix}\sqrt{T}))$, the output $\tilde{x}_T$ of FedAvg satisfies
    $$
        \mathbb{E}\Vert \nabla F(\tilde{x}_T) \Vert^2 \le \tilde{\mathcal{O}}\left(\frac{ \tau_{mix}}{\sqrt{T}} \right) + \mathcal{O}\left((D^2 + G^2)\big\Vert \pi_R - \frac{1}{N}\mathbf{1}_N \big\Vert_1^2 \right).
    $$
\end{corollary}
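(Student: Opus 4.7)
The plan is to optimize the stepsize in the bound of Theorem \ref{thm_conv-fedavg-re} to balance the decreasing and increasing terms in $\gamma$. The first step is to recognize that in the bound
\begin{align*}
\mathbb{E}\Vert \nabla F(\tilde{x}_T) \Vert^2 &= \mathcal{O}\!\left( \frac{\Delta_0}{\gamma T} \right) + \mathcal{O}\!\left( \frac{\tau_{mix}\log T\, G^2}{T} \right) + \mathcal{O}\!\left( (\gamma \tau_{mix}^2\log^2 T + \gamma^2)\, G^2 \right) \\
&\quad + \mathcal{O}\!\left( (G^2 + D^2) \Vert \pi_R - \tfrac{1}{N} \mathbf{1}_N \Vert_1^2 \right),
\end{align*}
only the first and third terms carry a nontrivial $\gamma$-dependence. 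The first decays like $1/(\gamma T)$, the third grows like $\gamma\, \tau_{mix}^2 \log^2 T$, and setting their scalings equal gives $\gamma = \tilde{\Theta}(1/(\tau_{mix}\sqrt{T}))$. Using the relation $\alpha = \mathcal{O}(\gamma/K)$ from the hypotheses of Lemma \ref{lmm_descent4genF}, this translates to the stepsize choice $\alpha = \tilde{\mathcal{O}}(1/(K\tau_{mix}\sqrt{T}))$ announced in the corollary.

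With this choice substituted, the first and third terms both collapse to $\tilde{\mathcal{O}}(\tau_{mix}/\sqrt{T})$. The second term $\tau_{mix}\log T\,G^2/T$ is of strictly smaller order since $\log T/T = o(1/\sqrt{T})$, and the $\gamma^2 G^2$ subterm becomes $\mathcal{O}(G^2/(\tau_{mix}^2 T))$, also absorbed. Thus the three stepsize-dependent terms merge into a single $\tilde{\mathcal{O}}(\tau_{mix}/\sqrt{T})$ contribution, and the bias term $\mathcal{O}((G^2 + D^2)\Vert \pi_R - \frac{1}{N}\mathbf{1}_N \Vert_1^2)$ is untouched, giving the stated bound.

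Before concluding I would verify that this $\gamma$ satisfies the admissibility conditions of Lemma \ref{lmm_descent4genF}: $\gamma \le 1/(2\tau)$, $\gamma \le 1/(384\bar{a})$, $64\bar{a}L\tau\gamma \le 1/12$, and $\delta$ small enough that $2c_1^2\delta^2 \le 1/6$. Taking $\tau = \tau_{mix}\log T$ and $\delta = 1/\sqrt{T}$ (as already set in the proof of Theorem \ref{thm_conv-fedavg-re}), the product $\gamma\tau$ is of order $\log T/\sqrt{T}$, which falls below any fixed constant once $T$ exceeds a polylog threshold in $\tau_{mix}$. The hypothesis $T \ge 2\tau_{mix}\log\tau_{mix}$ of Theorem \ref{thm_conv-fedavg-re} simultaneously guarantees $\tau \le T$ and absorbs these thresholds into the $\tilde{\mathcal{O}}$ notation.

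The main obstacle is purely bookkeeping: tracking logarithmic factors so that no secondary term --- especially the second term $\tau_{mix}\log T /T$ or the cross contribution $\gamma\tau_{mix}^2\log^2 T$ --- silently becomes dominant after the substitution. Once this is verified, no new inequalities are required and the corollary is a direct consequence of Theorem \ref{thm_conv-fedavg-re}.
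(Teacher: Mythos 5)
Your proposal is correct and follows exactly the paper's route: the paper's own proof is a one-line substitution of $\gamma = \mathcal{O}(1/(\tau\sqrt{T}))$ with $\tau = \tau_{mix}\log T$ into Theorem \ref{thm_conv-fedavg-re}, which is precisely the balancing you perform. Your additional check of the admissibility conditions on $\gamma$ and $\delta$ is sound bookkeeping that the paper leaves implicit.
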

\begin{proof}
    The proof is straightforward by simply plugging in $  \gamma = \mathcal{O}(1/(\tau \sqrt{T}))$ and $\tau = \tau_{mix}\log T$ to Theorem \ref{thm_conv-fedavg-re}.
\end{proof}

\section{Convergence analysis of Algorithm \ref{alg_FedAvg-MC}} \label{apx_convergence-nobias}

We first provide the following theorem showing that $y_t^i$ serves as a reasonable estimation of $\pi^i$.

\begin{theorem} \label{thm_estimation}
    For any real-valued function $f \in \mathbb{R}^N$ and any initial distribution $\mu \in \mathbb{R}^N$, we have the following:
    \begin{align*}
        \mathbb{E}_{\mu}\left( \frac{1}{T}\sum_{t=0}^{T-1}f(X_t) - \pi_R^T f \right) &= \frac{1}{T}\sum_{t=0}^{T-1} \mu^T Q_{\mu}^{\dag} (P_R^t - \mathbf{1}\zeta_R^T) Q_R f  \\
        T \mathbb{E}_{\pi_R}\left( \frac{1}{T}\sum_{t=0}^{T-1} f(X_t) - \pi_R^T f \right)^2 &\le f^T \Pi_R (I - \mathbf{1}_N \pi_R^T) f + c_0 \pi_{\max} \Vert f \Vert_{\infty}^2 N \tau_{mix} \\
        T \mathbb{E}_{\mu}\left( \frac{1}{T}\sum_{t=0}^{T-1} f(X_t) - \pi_R^T f \right)^2 &\le T \mathbb{E}_{\pi_R}\left( \frac{1}{T}\sum_{t=0}^{T-1} f(X_t) - \pi_R^T f \right)^2 + 3 c_0 N^2 \Vert g \Vert_{\infty}^2 \tau_{mix}
    \end{align*}
    where $\mathbb{E}_{\mu}(\cdot)$ means the initial state $X_0$ follows $\mu$; $\Pi_R = \mathrm{diag}(\pi_R[i])$ and $Q_{\mu}^{\dag}$ is defined such that $\mu^T Q_{\mu}^{\dag} = \zeta_{\mu}$ and $\zeta_{\mu}^T Q_R = \mu$; $g = f - \pi_R^T f \mathbf{1}_N$; $\tau_{mix}$ is the mixing time of $P_R$.
\end{theorem}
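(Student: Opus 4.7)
Throughout I would work with the augmented chain $\{Y_t\}$ of Proposition \ref{prop_MC-properties}(3), since $\{X_t\}$ alone is not Markov. For any $g:[N]\to\mathbb{R}$, introduce its lift $\tilde g(y):=g(X(y))$, where $X(y)$ is the first coordinate of $y$, so that $g(X_t)=\tilde g(Y_t)$. The two algebraic facts I will lean on are $\pi_R^T=\zeta_R^T Q_R$ (from \eqref{eq_MC-ssd}) and the defining relation $\zeta_\mu^T:=\mu^T Q_\mu^\dag$ with $\zeta_\mu^T Q_R=\mu^T$ and $\zeta_\mu^T\mathbf{1}=1$.

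\textbf{Bias identity.} Initializing $\phi_R(0)=\zeta_\mu$ and iterating $\phi_R(t+1)=P_R^T\phi_R(t)$ from \eqref{eq_evo-MC} gives $\mathbb{E}_\mu f(X_t)=\eta_R(t)^T f=\phi_R(t)^T Q_R f=\mu^T Q_\mu^\dag P_R^t Q_R f$. Since $\zeta_\mu^T\mathbf{1}=1$, the stationary expectation can be rewritten $\pi_R^T f=\zeta_R^T Q_R f=\mu^T Q_\mu^\dag\mathbf{1}\zeta_R^T Q_R f$. Subtracting, summing over $t=0,\dots,T-1$, and dividing by $T$ yields the first claim.

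\textbf{Variance bounds.} Set $g:=f-(\pi_R^T f)\mathbf{1}_N$ so that $\pi_R^T g=\zeta_R^T\tilde g=0$, and expand
\begin{equation*}
T\,\mathbb{E}_{\pi_R}\Bigl(\tfrac{1}{T}\sum_{t=0}^{T-1}g(X_t)\Bigr)^2 \;=\; \frac{1}{T}\sum_{s,t=0}^{T-1}\mathbb{E}_{\pi_R}[\tilde g(Y_s)\tilde g(Y_t)].
\end{equation*}
Under the stationary start the diagonal sums to $T\cdot\pi_R^T g^2=f^T\Pi_R(I-\mathbf{1}_N\pi_R^T)f$, giving the leading term exactly. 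For $s<t$, I would write $\mathbb{E}_{\pi_R}[\tilde g(Y_s)\tilde g(Y_t)]=\zeta_R^T\tilde G(P_R^{t-s}-\mathbf{1}\zeta_R^T)\tilde g$ with $\tilde G=\mathrm{diag}(\tilde g)$; subtracting $\mathbf{1}\zeta_R^T$ is legitimate because $\zeta_R^T\tilde g=0$. Each such term is bounded by $2d_{TV}(P_R^{t-s},\mathbf{1}\zeta_R^T)\|g\|_\infty^2$, and Lemma \ref{lmm_mixing-time-recurr}(3) controls the double sum by an $O(\tau_{mix})$ quantity, which is absorbed into $c_0\pi_{\max}\|f\|_\infty^2 N\tau_{mix}$ after $\|g\|_\infty\le 2\|f\|_\infty$ and a crude $\pi_{\max}N$ inflation. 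For the general initial law, I would compare term by term:
\begin{equation*}
\mathbb{E}_\mu[\tilde g(Y_s)\tilde g(Y_t)]-\mathbb{E}_{\pi_R}[\tilde g(Y_s)\tilde g(Y_t)]\;=\;\sum_y\bigl[(\zeta_\mu^T P_R^s)(y)-\zeta_R(y)\bigr]\tilde g(y)\,\mathbb{E}[\tilde g(Y_t)\mid Y_s=y],
\end{equation*}
which is at most $2d_{TV}(\zeta_\mu P_R^s,\zeta_R)\|g\|_\infty^2\le 2d_{TV}(P_R^s,\mathbf{1}\zeta_R^T)\|g\|_\infty^2$. Double-summing over $(s,t)\in[T]^2$, collapsing the $s$-sum by Lemma \ref{lmm_mixing-time-recurr}(3), and dividing by $T$ produces the stated $O(\tau_{mix})$ correction.

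\textbf{Main obstacle.} The delicate step is the bookkeeping between the client space (carrying $\pi_R$, dimension $N$) and the much larger augmented state space (carrying $\zeta_R$, dimension $d(M,R)$), and tracking how the lift $Q_\mu^\dag$ propagates constants; I expect the factors $\pi_{\max}N$ and $N^2$ in the stated bounds to emerge from worst-case $\ell_1\to\ell_\infty$ conversions at this stage rather than from the mixing argument itself. A secondary nuisance is that $\{Y_t\}$ is only defined for $t\ge R-1$, so the initial $R-1$ rounds must be handled separately; those contribute only an additive $O(R\|g\|_\infty^2)$ that is absorbed into the $\tau_{mix}$ term.
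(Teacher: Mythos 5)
Your proposal is correct and follows essentially the same route as the paper's proof: the bias identity from the distribution evolution $\phi_R(t+1)=P_R^T\phi_R(t)$, a variance/covariance decomposition whose diagonal gives $f^T\Pi_R(I-\mathbf{1}_N\pi_R^T)f$ exactly and whose off-diagonal terms are summed via Lemma \ref{lmm_mixing-time-recurr}(3), and a total-variation comparison between the $\mu$-start and stationary-start chains for the third bound. The only notable difference is that you lift $g$ to the augmented state space and work there directly, whereas the paper routes through the inverse-mapping matrices $\hat{Q}_k$ and the bound $\Vert Q_R\Vert_{\infty}\le N$; your version is cleaner and correctly identifies the $N$ and $N^2$ factors as non-intrinsic inflations.
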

\begin{proof}
    We firstly show the first equality. Note that
    \begin{align*}
        \mathbb{E}_{\mu}\left( \frac{1}{T}\sum_{t=0}^{T-1}f(X_t) - \pi_R^T f \right) &= \frac{1}{T} \sum_{k=0}^{T-1} (\mu^T P_R^k Q_R f - \zeta_R^T Q_R f)  \\
        &= \frac{1}{T} \sum_{k=0}^{T-1} \mu^T (P_R^k - \mathbf{1}\zeta_R^T)Q_R f 
    \end{align*}
    where we observe that $\mu^T \mathbf{1} = 1$.

    Then we turn to show the second inequality. By the definition, we have 
    \begin{align}   \label{eq_var-est-pi}
        T \mathbb{E}_{\pi_R}\left( \frac{1}{T}\sum_{t=0}^{T-1} f(X_t) - \pi_R^T f \right)^2 &= \mathrm{Var}_{\pi_R}(f(X_0)) + \frac{2}{T} \sum_{k=1}^{T-1} (T-k) \mathrm{Cov}_{\pi_R}(f(X_0), f(X_k)) .
    \end{align}
    For any $k$, let $\zeta_k$ and $\pi_k$ be the distributions after the Markov chain evolves $k$ steps. Then, we have $\zeta_{k+1}^T = \zeta_k^T P_R$ and $\pi_k^T = \zeta_k^T Q_R$. Defining $\hat{Q}_k \in \mathbb{R}^{N \times d(M,R)}$ as an inverse mapping from $\pi_k$ to $\zeta_k$, i.e., $\zeta_k^T = \pi_k^T \hat{Q}_k$, it is straightforward to verify that we can always pick a nonnegative $\hat{Q}_k$ such that $\hat{Q}_k \mathbf{1} = \mathbf{1}_N$ in the sense that the freedom of $\hat{Q}_k$ is $(N-1) \times d(M,R) - N$ when forcing both $\zeta_k^T = \pi_k^T \hat{Q}_k$ and $\hat{Q}_k \mathbf{1} = \mathbf{1}_N$ to hold.
    Moreover,
    \begin{align*}
        \mathrm{Cov}_{\pi_R}(f(X_0),f(X_k)) &= \sum_{i} \pi_R[i] f(i) \sum_{j} [\hat{Q}_k P_R^k Q_R]_{i,j} f(j) - \sum_{i,j}\pi_R[i]\pi_R[j]f(i)f(j)  \\
        &= f^T \Pi_R \hat{Q}_k P_R^k Q_R f - f^T \Pi_R \mathbf{1}_N \pi_R^T f  \\
        &= f^T \Pi_R \hat{Q}_k( P_R^k - \mathbf{1}\zeta_R^T) Q_R f
    \end{align*}
    where we utilize $\hat{Q}_k \mathbf{1} = \mathbf{1}_N$. Further, $\Vert \hat{Q}_k \Vert_{\infty} = 1, \forall k \ge 0$ since $\hat{Q}_k$ is nonnegative. Then,
    $$
        \mathrm{Cov}_{\pi_R}(f(X_0), f(X_k)) \le \pi_{\max} \Vert f \Vert_{\infty}^2 \Vert Q_R \Vert_{\infty} \Vert P_R^k - \mathbf{1}\zeta_R^T \Vert_{\infty} .
    $$
    Substituting it into \eqref{eq_var-est-pi} yields
    \begin{align*}
        T \mathbb{E}_{\pi_R}\left( \frac{1}{T}\sum_{t=0}^{T-1} f(X_t) - \pi_R^T f \right)^2 &\le \mathrm{Var}_{\pi_R}(f(X_0)) + 2\sum_{k=1}^{\infty} \mathrm{Cov}_{\pi_R}(f(X_0),f(X_k))  \\
        &\le \mathrm{Var}_{\pi_R}(f(X_0)) + 2 \pi_{\max} \Vert f \Vert_{\infty}^2 \Vert Q_R \Vert_{\infty} \sum_{k=0}^{T}\Vert P_R^k - \mathbf{1}\zeta_R^T \Vert_{\infty}   \\
        &\le f^T \Pi_R (I - \mathbf{1}_N \pi_R^T) f + c_0 \pi_{\max}  \Vert f \Vert_{\infty}^2 \Vert Q_R \Vert_{\infty}  \tau_{mix} 
    \end{align*}
    where we make use of Lemma \ref{lmm_mixing-time-recurr}.
    Finally noting that $\Vert Q_R \Vert_{\infty} \le \Vert Q_{R,1} \Vert_{\infty} \Vert Q_{R,2} \Vert_{\infty} \le N$ completes the proof of the second inequality.
    
    To obtain the third inequality, defining $g(i) = f(i) - \pi_R^T f$ we aim to bound
    \begin{align*}
        T & \left\vert \mathbb{E}_{\mu}\left( \frac{1}{T} \sum_{k=0}^{T-1} g(X_k) \right)^2 -  \mathbb{E}_{\pi_R}\left( \frac{1}{T} \sum_{k=0}^{T-1} g(X_k) \right)^2 \right\vert \\
        &\le \left\vert \frac{1}{T} \sum_{k=0}^{T-1} \mathbb{E}_{\mu} g^2(X_k) - \mathbb{E}_{\pi_R} g^2(X_k) \right\vert 
         + \frac{2}{T} \sum_{k=0}^{T-1}\sum_{l=k+1}^{T-1} \big\vert \mathbb{E}_{\mu} (g(X_k)g(X_l))   
          - \mathbb{E}_{\pi_R}(g(X_k)g(X_l)) \big\vert .
    \end{align*}
    For notation simplicity, we drop the subscript $R$ without confusion to get
    \begin{align*}
        \big\vert \mathbb{E}_{\mu} (g(X_k)g(X_l))   
          - \mathbb{E}_{\pi_R}(g(X_k)g(X_l)) \big\vert &= \left\vert \sum_{i,j} \mu_i g(j) ((\hat{Q}_k P^k Q)_{i,j} - \pi_j) \sum_{r} ((\hat{Q}_l P^{l-k}Q)_{j,r} - \pi_r)g(r) \right\vert  \\
          &= \left\vert \sum_{i,j} \mu_i g(j) (\hat{Q}_k( P^k  - \mathbf{1}\zeta^T)Q)_{i,j} \sum_{r} (\hat{Q}_l (P^{l-k} - \mathbf{1}\zeta^T)Q)_{j,r}g(r) \right\vert \\
          &\le \Vert g \Vert_{\infty}^2 N^2 \Vert P^l - \mathbf{1}\zeta^T \Vert_{\infty} .
    \end{align*}
    Thus, by Lemma \ref{lmm_mixing-time-recurr},
    \begin{align*}
        T & \left\vert \mathbb{E}_{\mu}\left( \frac{1}{T} \sum_{k=0}^{T-1} g(X_k) \right)^2 -  \mathbb{E}_{\pi_R}\left( \frac{1}{T} \sum_{k=0}^{T-1} g(X_k) \right)^2 \right\vert \\
        &\le \frac{1}{T}\sum_{k=0}^{T-1} \mu^T Q_{\mu}^{\dag} (P^k - \mathbf{1}\zeta^T)Q g^2 + \frac{2}{T} c_0 N^2 \Vert g \Vert_{\infty}^2 \sum_{k=0}^{T-1} \tau_{mix} \\
        &\le \frac{1}{T}\sum_{k=0}^{T-1} \mu^T Q_{\mu}^{\dag} (P^k - \mathbf{1}\zeta^T)Q g^2 + 2 c_0 N^2 \Vert g \Vert_{\infty}^2 \tau_{mix} \\
        &\le \frac{1}{T}c_0 \Vert g \Vert_{\infty}^2 N \tau_{mix} + 2 c_0 N^2 \Vert g \Vert_{\infty}^2 \tau_{mix} \\
        &\le 3 c_0 N^2 \Vert g \Vert_{\infty}^2 \tau_{mix} .
    \end{align*}
    Combining all the above completes the proof.
\end{proof}

Then, the following corollary induced by Theorem \ref{thm_estimation} is exactly Lemma \ref{lmm_convergence-q}.

\begin{corollary}   \label{coro_convergence-q}
    Given initial $\lambda_0 = \mathbf{0}_N$ and let $\nu_t^i = \frac{1}{\lambda_t^i N}$ as in Algorithm \ref{alg_FedAvg-MC}, we have
    \begin{equation}
        \mathbb{E}\Vert \tilde{\nu}_t \Vert_{\infty}^2 \le \mathcal{O}\left(\frac{\tau_{mix}}{t}\right) \nonumber
    \end{equation}
    where $\tilde{\nu}^i_t = \nu^i_t - \frac{1}{\pi_i N}$ and $\tilde{\nu}_t = (\tilde{\nu}_t^1,\dots,\tilde{\nu}_t^N)$.
\end{corollary}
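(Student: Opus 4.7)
The plan is to recognize $\lambda_t^i$ as a time-average along the augmented Markov chain $\{Y_t\}$, invoke Theorem \ref{thm_estimation} to obtain a mean-square rate for $\lambda_t^i - \pi_R^i$, and then linearize the reciprocal $1/(N\lambda_t^i)$ around $1/(N\pi_R^i)$ to pass from $\lambda_t^i$ to $\nu_t^i$.

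The first step is a bookkeeping observation. By Line 5 of Algorithm \ref{alg_FedAvg-MC}, $t_i$ counts the total number of rounds $s \le t$ with $i \in S_s$, so
\begin{equation*}
\lambda_t^i \;=\; \frac{t_i}{(t+1)B} \;=\; \frac{1}{(t+1)B}\sum_{s=0}^{t}\mathbf{1}[i \in S_s].
\end{equation*}
Define $f_i(\mathcal{I}) = \mathbf{1}[i \in \mathcal{I}]$ on $\mathcal{X}$, and lift it to a function on the augmented state space of $\{Y_s\}$ by reading off the first coordinate; by Property (2) of Proposition \ref{prop_MC-properties} and the symmetry of orderings within a batch, $\mathbb{E}_{\zeta_R} f_i = B\pi_R^i$. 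Thus $\lambda_t^i$ is precisely the time-average of $f_i/B$, whose stationary mean is $\pi_R^i$.

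The second step is to invoke Theorem \ref{thm_estimation}. Because $\{Y_t\}$ is irreducible and aperiodic for $R \le M-2$ by Lemma \ref{lmm_ergodic-aug-MC}, $\zeta_R$ is the unique stationary distribution and the mixing time $\tau_{mix}$ is finite. Applying the second and third inequalities of Theorem \ref{thm_estimation} to $f_i/B$ (with $\|f_i/B\|_\infty \le 1/B$) and absorbing the $N$-dependent factors into a constant, I obtain a $C > 0$ such that
\begin{equation*}
\mathbb{E}\bigl|\lambda_t^i - \pi_R^i\bigr|^2 \;\le\; \frac{C\,\tau_{mix}}{t+1}, \qquad \forall\, i \in [N].
\end{equation*}

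The third step converts this into a bound on $\tilde{\nu}_t^i$ via the identity
\begin{equation*}
\tilde{\nu}_t^i \;=\; \frac{1}{N\lambda_t^i} - \frac{1}{N\pi_R^i} \;=\; \frac{\pi_R^i - \lambda_t^i}{N\,\pi_R^i\,\lambda_t^i}.
\end{equation*}
Set $\pi_{\min} = \min_i \pi_R^i > 0$, which is positive by the strict positivity of the Perron vector. On the event $\mathcal{A}_i = \{\lambda_t^i \ge \pi_R^i/2\}$, one has $(\tilde{\nu}_t^i)^2 \le 4(\lambda_t^i - \pi_R^i)^2 / (N\pi_R^i)^4$ and Step 2 controls the expectation directly. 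On the complement $\mathcal{A}_i^c$, exponential Markov-chain concentration (Bernstein/Hoeffding type, as in standard ergodic-theorem refinements for irreducible aperiodic chains) yields $\mathbb{P}(\mathcal{A}_i^c) \le c\exp(-c't/\tau_{mix})$, and since after the first visit to $i$ one has the crude lower bound $\lambda_t^i \ge 1/((t+1)B)$, the exponential tail absorbs the polynomial blow-up. Finally, $\|\tilde{\nu}_t\|_\infty^2 \le \sum_{i=1}^N (\tilde{\nu}_t^i)^2$ adds only a factor of $N$, yielding $\mathbb{E}\|\tilde{\nu}_t\|_\infty^2 = \mathcal{O}(\tau_{mix}/t)$.

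The main obstacle is Step 3: an $L^2$ bound on $\lambda_t^i - \pi_R^i$ does not directly give an $L^2$ bound on $\nu_t^i - 1/(N\pi_R^i)$, because $1/\lambda_t^i$ is singular at $0$ and $\lambda_t^i = 0$ has positive probability at any finite $t$ (namely, the event that client $i$ has never been sampled). Handling this requires either an exponential tail estimate to dominate the crude bound $\nu_t^i \le (t+1)B/N$ on the bad event, or conditioning on a warm-up period that uses irreducibility (Lemma \ref{lmm_ergodic-aug-MC}) to certify $\lambda_t^i > 0$ almost surely. Everything else is mechanical.
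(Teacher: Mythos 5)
Your proposal follows the same skeleton as the paper's proof of Corollary \ref{coro_convergence-q}: apply Theorem \ref{thm_estimation} with an indicator-type test function to get $\mathbb{E}(\lambda_t^i-\pi_R^i)^2 = \mathcal{O}(N^2\tau_{mix}/t)$, then split the expectation of $(\tilde{\nu}_t^i)^2$ according to whether $\lambda_t^i$ is bounded away from zero. Where you genuinely diverge is in the treatment of the bad event $\{\lambda_t^i \text{ small}\}$, and here the two arguments buy different things. The paper disposes of it by asserting that, since the chain is irreducible (Lemma \ref{lmm_ergodic-aug-MC}), there is a constant $a_0>0$ \emph{independent of $t$} with $\lambda_t^i \ge a_0$ almost surely; this is the weakest point of the paper's proof, since at any finite $t$ the event that client $i$ has not yet been sampled has positive probability and forces $\lambda_t^i=0$, so the claimed uniform a.s. lower bound does not literally hold and irreducibility alone only gives $\lambda_t^i>0$ eventually. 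Your route instead bounds $\mathbb{P}(\lambda_t^i<\pi_R^i/2)$ by an exponential Markov-chain concentration inequality and absorbs the polynomial blow-up $\nu_t^i \le (t+1)B/N$ on the bad event into the exponential tail. That is a more defensible argument, but it imports a Bernstein/Hoeffding-type bound for irreducible aperiodic chains that the paper never states or proves; to make your version self-contained you would need to add such a lemma (it is standard, e.g. Lezaud/Paulin-style bounds with variance proxy scaling in $\tau_{mix}$), and you should also note, as the paper implicitly does, that $\nu_t^i$ is only ever evaluated for clients in $S_t$, for which $t_i\ge 1$ and hence $\lambda_t^i\ge 1/((t+1)B)>0$, so the quantity is well defined. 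Net assessment: your decomposition matches the paper's, your Step 2 is a correct (indeed cleaner) instantiation of Theorem \ref{thm_estimation}, and your Step 3 repairs a gap in the paper's own argument at the cost of one additional, unproven-but-standard concentration lemma.
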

\begin{proof}
    By Theorem \ref{thm_estimation}, setting $f = \mathbf{e}_i$ for any $i$, we have
    \begin{equation}    \label{eq_bnd-yi}
        \mathbb{E}(\lambda_t^i - \pi_i)^2 = \mathcal{O}\left(\frac{N^2\tau_{mix}}{t}\right) 
    \end{equation}
    Note that
    \begin{align}    \label{eq_bnd-qi}
        \mathbb{E}(\tilde{\nu}_t^i)^2 = \frac{1}{N^2} \mathbb{E}\left( \frac{\lambda_t^i - \pi_i}{\lambda_t^i \pi_i} \right)^2 &=  \frac{1}{N^2} \mathbb{E}\left[\left( \frac{\lambda_t^i - \pi_i}{\lambda_t^i \pi_i} \right)^2 \big\vert \lambda_t^i \ge a   \right]P(\lambda_t^i \ge a) \nonumber \\
        & + \frac{1}{N^2} \mathbb{E}\left[\left( \frac{\lambda_t^i - \pi_i}{\lambda_t^i \pi_i} \right)^2 \big\vert \lambda_t^i < a \right] P(\lambda_t^i < a)
    \end{align}
    for any positive $a$. Moreover, due to the Markov chain in Section \ref{sec_MC-model} is irreducible by Lemma \ref{lmm_ergodic-aug-MC}, every client will be visited infinitely as $t$ goes to infinite, which then implies there always exists some strictly positive constant $a_0$ independent of $t$ such that $\lambda_t^i \ge a_0 > 0$ almost surely for any $i \in [N]$. Combining \eqref{eq_bnd-yi},\eqref{eq_bnd-qi} we conclude
    $$
        \mathbb{E}\Vert \tilde{\nu}_t^i\Vert_{\infty}^2 = \mathcal{O}\left(\frac{\tau_{mix}}{t}\right) .
    $$
\end{proof}


\subsection{Convergence proof of Algorithm \ref{alg_FedAvg-MC}}
The following lemma is useful to derive the convergence proof of Algorithm \ref{alg_FedAvg-MC}.

\begin{lemma}   \label{lmm_decom2exp}
    Supposing that the stochastic scalar sequence $\mathbb{E}[U_1(t)^2] \le u(t)$ with $u$ being a monotonically decreasing positive function w.r.t. $t$ and assuming that $U_1(t) \le \bar{u} < \infty$ almost surely, then given any $\delta, \epsilon > 0$, for all $t\ge \inf\{ t_0 \mid u(t_0)/\delta^2 \le \epsilon / \bar{u}^2 \}$ and stochastic scalar sequence $U_2(t)$,
    \begin{align*}
        \mathbb{E}\left[ U_1(t)^2 U_2(t) \right] \le (\epsilon + \delta^2) \mathbb{E}[U_2(t)].
    \end{align*}
\end{lemma}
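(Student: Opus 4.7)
The approach is a truncation argument based on whether $U_1(t)^2$ stays below the level $\delta^2$. Let $A_t = \{U_1(t)^2 \le \delta^2\}$ and split
\begin{equation*}
    \mathbb{E}\left[U_1(t)^2 U_2(t)\right] = \mathbb{E}\left[U_1(t)^2 \mathbf{1}_{A_t} U_2(t)\right] + \mathbb{E}\left[U_1(t)^2 \mathbf{1}_{A_t^c} U_2(t)\right].
\end{equation*}
On $A_t$ bound $U_1(t)^2$ pointwise by $\delta^2$, which peels off the $\delta^2\,\mathbb{E}[U_2(t)]$ contribution. On $A_t^c$ use the a.s.\ bound $U_1(t)^2 \le \bar{u}^2$ to pull that factor out, then apply Markov's inequality: $P(A_t^c) \le \mathbb{E}[U_1(t)^2]/\delta^2 \le u(t)/\delta^2$. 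The monotonicity of $u$ and the defining condition on the threshold $t_0$ in the lemma give $\bar{u}^2 u(t)/\delta^2 \le \epsilon$ for every $t \ge t_0$. Summing the two pieces yields the claimed $(\delta^2 + \epsilon)\,\mathbb{E}[U_2(t)]$.

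The main obstacle is the step that converts $\mathbb{E}[\mathbf{1}_{A_t^c}\, U_2(t)]$ into $P(A_t^c)\,\mathbb{E}[U_2(t)]$: this equality is exact only when $U_2(t)$ is independent of $A_t^c$, and in general some additional structure is needed (e.g.\ $U_2(t)\ge 0$ together with a deterministic bound, or a Cauchy--Schwarz step that trades the statement for a $\sqrt{u(t)}$-type rate on a second moment of $U_2$). The lemma is clearly meant for its use inside \Cref{lmm_descent4genF}, where $U_1(t)=\|\tilde{q}_t\|_\infty$ and $U_2(t)=\|\nabla F_w(x_{t-\tau})\|^2\ge 0$, and where the $\tau$-step gap combined with mixing of the Markov chain \eqref{eq_evo-MC} furnishes the required approximate decoupling up to a $d_{TV}$-error absorbed into the additional $\delta^2$ slack. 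I would therefore make the positivity of $U_2$ and the (approximate) independence condition explicit in the statement, and verify at each call site that they hold; once the probabilistic setup is fixed, the two-step truncation above gives the bound directly.
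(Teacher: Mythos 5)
Your truncation at the level $\delta$ is exactly the paper's own argument: the paper conditions on $\{U_1(t)>\delta\}$ versus its complement, bounds $U_1(t)^2$ by $\bar{u}^2$ on the former and by $\delta^2$ on the latter, and controls $P(U_1(t)>\delta)\le u(t)/\delta^2\le \epsilon/\bar u^2$ by Markov's inequality. So in terms of approach the two proofs coincide. More importantly, the obstacle you flag is genuine and is present in the paper's proof as well: the paper's line bounding $P(U_1(t)>\delta)\,\mathbb{E}[U_1(t)^2U_2(t)\mid U_1(t)>\delta]$ by $P(U_1(t)>\delta)\,\bar u^2\,\mathbb{E}[U_2(t)]$ silently replaces the conditional expectation $\mathbb{E}[U_2(t)\mid U_1(t)>\delta]$ by the unconditional one, which is exactly the decoupling step you point out cannot be done for free. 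Nonnegativity of $U_2$ rescues only the $\delta^2$ piece (via $\mathbb{E}[U_2\mathbf{1}_{A_t}]\le\mathbb{E}[U_2]$); it does not help on the tail event. In fact, as stated the lemma is false: take $U_1(t)=\bar u$ on an event $E$ with $P(E)=q$ and $0$ otherwise, and $U_2(t)=\mathbf{1}_{E}$. The hypothesis only forces $q\le \epsilon\delta^2/\bar u^4$, which can be positive, yet the left side is $\bar u^2 q$ while the right side is $(\epsilon+\delta^2)q$, so the inequality fails whenever $\bar u^2>\epsilon+\delta^2$.

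At the call site in Theorem \ref{thm_convergence-propAlg}, $U_1(t)=\Vert\tilde\nu_t\Vert_\infty$ and $U_2(t)=\Vert\nabla F(x_{t-\tau})\Vert^2$ are both measurable with respect to the same sampling history, so exact independence does not hold, and the paper does not supply the approximate-decoupling or Cauchy--Schwarz substitute that you correctly identify as the missing ingredient. Your instinct to make nonnegativity of $U_2$ and an (approximate) independence hypothesis explicit in the statement, and to verify them at each use, is the right repair; the paper's proof does not carry this out, so you have located a real gap in the paper rather than one in your own argument.
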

\begin{proof}
    For any $\delta > 0$, we have for all $t\ge \inf\{ t_0 \mid u(t_0)/\delta^2 \le \epsilon / \bar{u}^2 \}$
    \begin{align*}
        \mathbb{E}[U_1(t)^2 U_2(t)] &= P(U_1(t) > \delta) \mathbb{E}[U_1(t)^2 U_2(t) \mid U_1(t) > \delta] + P(U_1(t) \le \delta) \mathbb{E}[U_1(t)^2 U_2(t) \mid U_1(t) \le \delta] \\
        &\le P(U_1(t) > \delta) \bar{u}^2 \mathbb{E}[U_2(t)] + \delta^2 \mathbb{E}[U_2(t)] \\
        &\le (\epsilon + \delta^2) \mathbb{E}[U_2(t)]
    \end{align*}
    where we use the Markov inequality in the last step, i.e.,
    \begin{align*}
        P(U_1(t) > \delta) \le P(U_1(t)^2 > \delta^2) \le \frac{u(t)}{\delta^2}. 
    \end{align*}
    
\end{proof}
Then we are ready to provide the proof for Theorem \ref{thm_convergence-propAlg}.

\qquad

\textit{Proof of Theorem \ref{thm_convergence-propAlg}:}
As discussed in the proof of Corollary \ref{coro_convergence-q}, we know that there exists a positive $a^{-1}$ which lower bounds each $\lambda_t^i$ for all $t$ almost surely, implying that $\tilde{\nu}_t^i \le \frac{1}{N}(a + \pi_{min}^{-1})$. Then for any $t > \tau > c'\tau_{mix}$ (with $c'$ being some constant), we have
\begin{align*}
    \mathbb{E}\left[ \Vert \tilde{\nu}_t \Vert_{\infty}^2 \Vert \nabla F(x_{t-\tau}) \Vert^2 \right] \le \frac{1}{16} \mathbb{E}\Vert \nabla F(x_{t-\tau}) \Vert^2
\end{align*}
by Lemmas \ref{lmm_convergence-q} and \ref{lmm_decom2exp}. Further Utilizing Lemma \ref{lmm_descent4genF} with setting $w_i = \frac{1}{N}$, we obtain 
    \begin{align*}
        \frac{1}{T-\tau} \sum_{t=\tau}^{T-1}\mathbb{E}\Vert \nabla F(x_{t-\tau}) \Vert^2 &\le \frac{64 \bar{a} L\Delta_{0} }{  \gamma (T-\tau)} + \frac{64 G^2}{T-\tau} \sum_{t=\tau}^{T-1}\mathbb{E}\Vert \tilde{\nu}_t \Vert_{\infty}^2 + \frac{32\tau G^2}{T- \tau} + 16 c_1^2 \delta^2 G^2 \\
        &\quad + 64 \bar{a} L G^2 \left( 3\gamma + 6   \gamma \tau^2 + \frac{2   \gamma}{L} + \frac{3   \gamma}{16 L^2} + \frac{  \gamma^2}{16 a L} \right) 
    \end{align*}
for $\tau \ge \tau_{mix} \max \{ c', \log(1/\delta) \}$. Similar to the proofs of Theorem \ref{thm_conv-fedavg-re}, setting $\delta = 1/ \sqrt{T}$, with $T \ge c^{\dag} \tau_{mix}\log \tau_{mix}$ for some constant $c^{\dag}$, we finally conclude that
\begin{align*}
    \mathbb{E}\Vert \nabla F(\tilde{x}_T) \Vert^2 = \tilde{\mathcal{O}}\left( \frac{\tau_{mix}}{\sqrt{T}} \right) + \mathcal{O}\left( \frac{1}{T} \right)
\end{align*}
by choosing $\gamma = \mathcal{O}(1/(\tau \sqrt{T}))$ with $\tau = \Omega(\tau_{mix} \log T)$ and by leveraging the fact that $\sum_{t=\tau}^{T-1} \mathbb{E}\Vert \tilde{\nu}_t \Vert_{\infty}^2 = \mathcal{O}(\tau_{mix}\log T)$ implied by Lemma \ref{lmm_convergence-q}.

\section{The influence of $R$ on convergence rates} \label{apx_rate2R}

\begin{figure}[h]
    \centering
    \includegraphics[width=8cm]{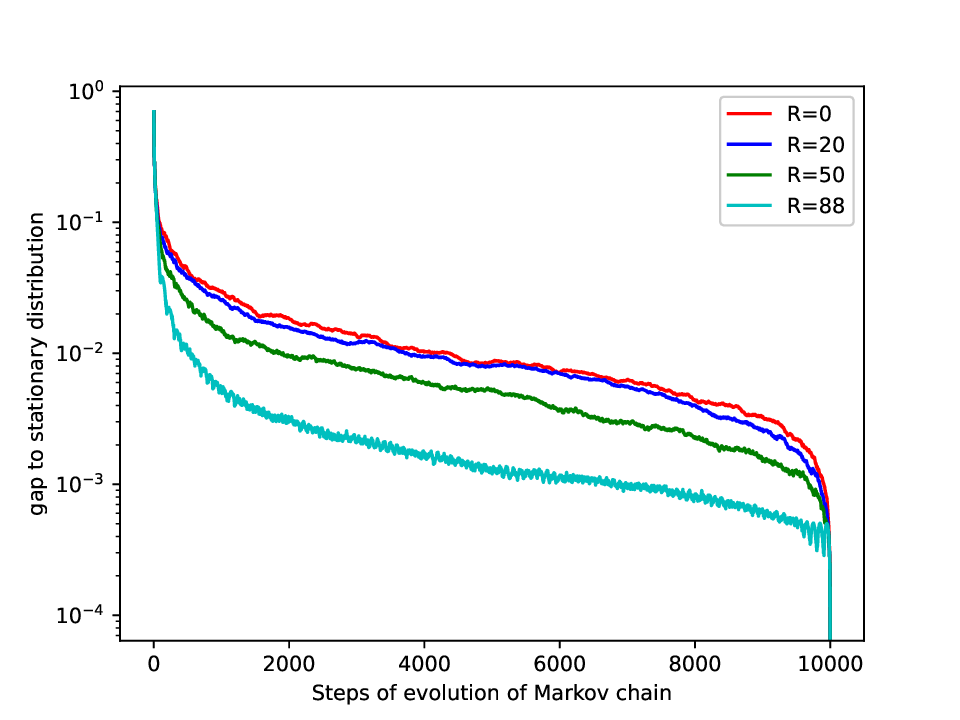}
    \caption{Convergence of client sampling distribution to $\pi_R$ for different $R$ ($N=100, B=1$). }
    \label{fig_conv-MC}
\end{figure}

In this section, we discuss the effect of different values of $R$ on the convergence rates of Debiasing FedAvg and Vanilla FedAvg as observed empirically in Figure \ref{fig_exp-mnist}. We simulate the "effective" client sampling distribution (i.e., $\eta_R(t)$) as time evolves for different minimum separation $R$, where we set $N=100, B=1$. Figure \ref{fig_conv-MC} shows the evolution of client sampling distributions to their corresponding stationary $\pi_R$'s. Clearly increasing $R$, the convergence rate of "effective" client sampling distribution to the stationary distribution also increasing, implying the decrease of mixing time $\tau_{mix}$ (see Appendix \ref{apx_MC} for details). Combining this observation together with Theorems \ref{thm-convergence-bias} and \ref{thm_convergence-propAlg} leads to that larger $R$ implies faster convergence rate, which then consistently explains the observation in Figure \ref{fig_exp-mnist}. However, the above explanation is only from an empirical perspective. More rigorous explanations need theoretical advance in the convergence results to reveal explicitly the relation between the rates and values of $R$.

\end{document}